\newtheorem{lemma}{Lemma}
\newcommand{\soo}{\mathfrak{so}(3)}
\newcommand{\SE}{\mathbf{SE}(3)}
\newcommand{\se}{\mathfrak{se}(3)}
\newcommand{\SO}{\mathbf{SO}(3)}
\newcommand{\inv}{\textbf{inv}}
\newcommand{\Ad}{\textbf{Ad}}
\newcommand{\ad}{\textbf{ad}}
\newcommand{\spn}{\textbf{span}}
\newcommand{\R}{\mathbb{R}}
\newcommand{\g}{\mathbf{g}}
\newcommand{\A}{\mathcal{A}}
\newcommand{\0}{\mathbf{0}}
\theoremstyle{thmstyleone}%
\newtheorem{theorem}{Theorem}
\newtheorem{proposition}[theorem]{Proposition}%
\theoremstyle{thmstyletwo}%
\newtheorem{remark}{Remark}%
\theoremstyle{thmstylethree}%
\begin{document}

\title{Geometrically Exact Hard Magneto-Elastic Cosserat Shells: Static Formulation for Shape Morphing}


\author[1]{\fnm{Mohammadjavad} \sur{Javadi }}\email{javadjavadisigaroudi@cmail.carleton.ca}

\author*[2]{\fnm{Robin} \sur{Chhabra}}\email{robin.chhabra@torontomu.ca.}


\affil[1]{\orgdiv{Department of Mechanical and Aerospace Engineering}, \orgname{Carleton University}, \orgaddress{\street{Ottawa, K1S 5B6, Ontario, Canada}}} 

\affil*[2]{\orgdiv{Department of Mechanical, Industrial, and Mechatronics Engineering}, \orgname{Toronto Metropolitan University}, \orgaddress{\street{Toronto, M5B 2K3, Ontario, Canada}}}



\abstract{Cosserat rod theory is the popular approach to modeling ferromagnetic soft robots as 1-Dimensional (1D) slender structures in most applications, such as biomedical. However, recent soft robots designed for locomotion and manipulation often exhibit a large width-to-length ratio that categorizes them as 2D shells. For analysis and shape-morphing control purposes, we develop an efficient coordinate-free static model of hard-magnetic shells found in soft magnetic grippers and walking soft robots. The approach is based on a novel formulation of Cosserat shell theory on the Special Euclidean group ($\SE$). 
The shell is assumed to be a 2D manifold of material points with six degrees of freedom (position \& rotation) suitable for capturing the behavior of a uniformly distributed array of spheroidal hard magnetic particles embedded in the rheological elastomer. The shell's configuration manifold is the space of all smooth embeddings $\mathbb{R}^2\rightarrow\SE$. According to a novel definition of local deformation gradient based on the Lie group structure of $\SE$, we derive the strong and weak forms of equilibrium equations, following the principle of virtual work. We extract the linearized version of the weak form for numerical implementations.  The resulting finite element approach can avoid well-known challenges such as singularity and locking phenomenon in modeling shell structures.
The proposed model is analytically and experimentally validated through a series of test cases that demonstrate its superior efficacy, particularly when the shell undergoes severe rotations and displacements.}

\keywords{Cosserat shell, Lie groups, Soft continuum robots, Magnetoelasticity, Large deformation}



\maketitle

\section{Introduction}\label{sec1}
Soft robots are designed to benefit from one of the most salient features of living organisms, i.e., their deformable body. Through this feature, they can continuously change their shape or properties to adapt to the environment and navigate or manipulate safely in confined spaces \cite{rus2015,marchese2016}.  
Soft continuum robots are extensively used in a multitude of applications such as surgery \cite{burgner2015, runciman2019}, wearable rehabilitation devices \cite{park2014d}, drug delivery, gripper design \cite{cianchetti2018}, and space exploration \cite{zhang2023progress}. For effective real-time control of their behavior in partially known environments, soft robots' planning and perception modules rely on accurate and fast numerical models. 

A leading technique for modeling soft robots is based on the assumption of Constant Curvature (CC) kinematics. This method breaks down the continuously deformable body into a finite number of CC arc segments \cite{webster2010}. While the CC approach is beneficial for representing large deformations, it has drawbacks when dealing with certain geometries and designs \cite{renda2022geome}, and it can encounter local singularities and reduced accuracy under severe body and external loads \cite{sadati2017control}.
Another popular approach in the field of soft robotics originates from the works of the Cosserat brothers \cite{cosserat1909} and has been developed further by Simo under the title of the Geometrically-Exact (GE) model\cite{simo1988dynamics, renda2022geometrically}. This approach presents the geometrical description of 1D soft robots within the framework of continuum mechanics, based on the assumptions of Cosserat theory \cite{simo1988dynamics, simo1990}.
For instance, Sadati et al. \cite{sadati2017control} proposed a Cosserat rod model for multisegment continuum manipulators. Renda et al. \cite{renda2018unified} utilized the Cosserat rod model and a similar 1D model of an axisymmetric shell for modeling of underwater robots.


Besides the geometrical model of soft robots, the actuation mechanisms play important roles in their design, fabrication, and modeling. Soft robots are generally made of smart materials to facilitate actuation, such as shape memory alloy, ferromagnetic, or dielectric soft materials \cite{liu2021robotic,guseinov2020,kotikian2019,zhang2021liqu,alapan2020re,shah2021soft}. Among various categories of actuators, ferromagnetic mechanisms offer safe, simple, and efficient manipulation methods. Ferromagnetic soft robots can be developed by embedding small-scale magnetic particles inside hyperelastic polymeric shells. These types of materials have many applications in different areas. For example, Dad Ansari et al. developed a programmable magnetization method for printing soft small-scale robots with a non-uniform magnetization profile \cite{ansari20233d}. Alapan et al. proposed a different structure of magnetic soft materials with a heating approach for reorientation of the magnetic field inside an elastomer \cite{alapan2020re}. Kim et al. presented a submillimeter-scale soft magnetic robot with embedded magnetic particles inside a 1D rod for active steering and navigating capabilities \cite{kim12019}.



The application of the Lie group $\SE$ in the modeling of rigid robots, due to its singularity-free geometric representation of motion  \cite{murray1994}, has attracted roboticists to extend this approach to soft robots with infinite degrees of freedom. Additionally, describing the dynamics on $\SE$ facilitates the study of robotic systems that can include both soft and rigid bodies \cite{samei2024surfr}. 
In this context, several studies have extended the modeling of 1D rod structures using the structure of $\SE$ to capture the behaviour of soft grippers and robots with some degrees of locomotion\cite{hussain2018modeling, grazioso2019, boyer2020dynamics, xun2024cosse}. One significant feature of this geometric approach is its ability to investigate translational and rotational motion as coupled quantities, a capability not captured by classical continuum mechanics. This natural coupling in the work of Sonneville et al. \cite{sonneville2014} results in a compact and shear-locking-free finite element implementation. In recent publications, Renda and Talegon \cite{renda2024dynamics} introduced a novel strain parameterization of 1D Cosserat rod structures on $\SE$ to develop a fast model of robots suitable for their control. It is important to note that while strain parameterization in 1D problems reduces the degrees of freedom, it complicates 2D shell models due to the need for an appropriate interpolation function to satisfy the Cosserat shell compatibility equation.  
\begin{figure}[h]
\includegraphics[scale=0.5]{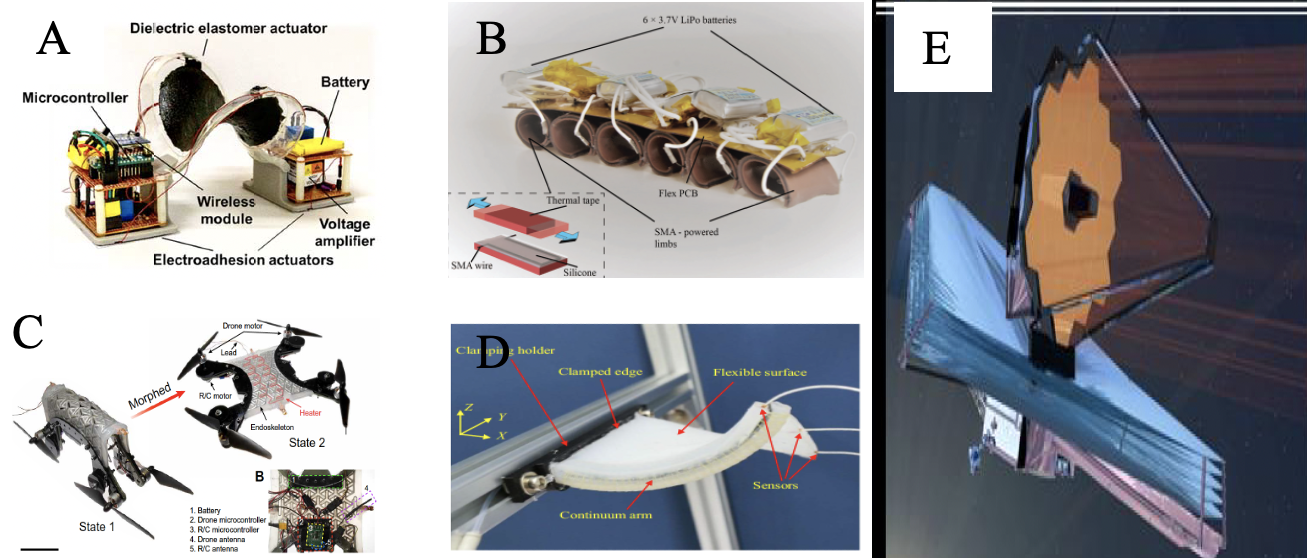}
\centering
\caption{\textbf{Shell structures for shape morphing soft robots:}(A) Spherical dielectric elastomer shell in the untethered soft robot \cite{cao2018untet}. (B) Cylindrical shell structure in the modeling of a caterpillar-inspired robot \cite{goldberg2019planar}. (C) Deformable shell structure in a multifunctional morphing drone \cite{hwang2022shape}. (D) Soft flexible surface for robotic applications \cite{habibi2020lumped}. (E) The JWST sunshield in its deployed position. \cite{arenberg2008design}.}
\label{fig2s}
\end{figure}
Thin shell structures find extensive applications in various industries. For instance, they are used in aerospace for aircraft and missile fuselages \cite{chapkin2020design}, in nanotechnology for modeling nanotubes \cite{peng2008measurements}, in biology for representing soft growing surface tissues \cite{goriely2005differential}, and in soft robotics to capture a model of 2D structures \cite{shah2021soft}.
Fabrication of flat or 2D active shells is often easier than other forms of smart structures and they can transform into complex shapes by changing the surface curvature at arbitrary points, an interesting challenge pursued by scientists \cite{baek2018,klein2007,lang1996,kim2012d,klein2007}. For example, to fit the James Webb Space Telescope (JWST) into a rocket, a foldable sunshield membrane was used \cite{arenberg2008design} (See figure \ref{fig2s}(E)).  
The design of available soft robots for locomotion and manipulation often features a large width-to-length ratio, categorizing them as 2D shells (See figure \ref{fig2s}). Representing the body of soft robots as 2D shell structures also enables the capture of a wide range of potential deformations and motions that are not achievable otherwise. For instance, Nathaniel et al. \cite{goldberg2019planar} studied the behavior of a caterpillar-inspired soft robot using Bergou et al.'s rod model \cite{bergou2008discrete}, which can be considered a shell model for capturing asymmetric deformation due to uneven load distribution along the robot's width (See \ref{fig2s}(B)). The geometric design and loading conditions presented in \cite{xu2019millimeter} indicate that for modeling millimeter-scale flexible robots, a 2D model is required instead of a 1D Cosserat rod model. Similar issues are also observable in other research works \cite{mao2022ultrafast, ansari20233d}.
The lack of an appropriate shell model that can be utilized for various soft robot designs and loading conditions, and implemented in a fast and accurate numerical framework was the driving force behind this research. In the next paragraph, we will review different shell theories within the framework of continuum mechanics.


In the framework of continuum mechanics, a shell is a special three-dimensional body whose dimension of the body along the normals (thickness) is small. One of the complete theories for studying the bending behavior of thin elastic plates was derived by Kirchhoff \cite{kirchhoff1850}, in which straight lines perpendicular to the mid-surface remain straight after deformation. Thirty-eight years later, Love\cite{love1892} presented a complete linear bending theory for thin elastic shells, excluding some assumptions from Kirchhoff's theory. There are many papers on the modeling of linear shell theory, but there is an exact nonlinear theory of shells under the Kirchhoff hypothesis by Budiansky which can consider geometric nonlinearities in the shell formulation \cite{budiansky1968}.
The manifestation of the non-classical continuum mechanics by the Cosserat brothers' idea opens a new way in the modeling of shell structures \cite{cosserat1909}. In this theory, the basic physical idea of classical continuum mechanics is endowed with extra kinematics degrees of freedom afforded by a triad of rigid vectors or directors attached at each material point. Another version of the Cosserat continuum theory was formulated by Eringen who proposed the idea of the material microstructure in solid mechanics \cite{eringen1976, eringen1999}. In Eringen's terminology, the Cosserat continuum mechanics is called \emph{micropolar} continuum mechanics. In both micropolar and Cosserat continuum theory, each material point has six degrees of freedom which are similar to a rigid body motion.  
Generally, there are two approaches to extracting shell formulations \cite{naghdi1973}. In the first approach, the two-dimensional balance equations of a shell are derived from the three-dimensional field equations by taking the integral over the shell thickness. Based on the shell's kinematics model, the resultant stress tensors are extracted as two-dimensional fields.  In the second approach, the balance equations are derived directly. The first approach is applicable to the classical Kirchhoff-Love shell model, while the second approach is suitable for Cosserat shells with multiple directors. Following the literature, all 6 Degree-of-Freedom micropolar shells are developed by a direct approach, and most of the Cosserat shell models only have a free director, not a complete triad (rotation). Furthermore, there is only one paper that tries to develop a Cosserat shell with free rotation, but it ignores the Lie group structure of SE(3) and only considers position and rotation separately.
The parameterization of finite rotations of the directors is an important challenge in Cosserat shell modeling. The way of describing the rotation matrix has a close relation to the appeared singularities in shell mechanics. A singularity-free parametrization of the rotation field was developed for a shell model with one inextensible director by Simo and his co-authors \cite{Simo1989, simo1990}, which ignores one independent degree of freedom, known as the \emph{drilling rotation}. Ibrahimbegovic and Frey \cite{ibrahimbegovic1994} presented a singularity-free nonlinear shell theory with drilling rotation based on the Rodrigues formula. The direct Cosserat shell theory with triad orthonormal directors,  provides a geometrical basis to capture all possible rotation actions, especially drilling motion at each material point \cite{sansour1995co, altenbach2010, altenbach2011mi}. The significant challenge for this shell model is the singularity in severe large deformation due to inappropriate parametrization of the rotation tensor.

All existing geometrical shell models have analyzed translation and rotation variables separately, using distinct interpolation functions for numerical integration. Additionally, in cases of severe large rotation deformations, these models have utilized quaternion parameters for rotation parametrization.
In this paper, we derive and numerically solve the equations of motion for an elastic Cosserat shell structure under mechanical and magnetic loading. The presented modeling can capture large deformation and rotation of the shell structure. This paper’s contributions are fourfold.

\begin{itemize}
 \item Taking advantage of the structure of the Lie group $\SE$ we present, for the first time, the mechanical formulation of a six-degree-of-freedom Cosserat shell capable of capturing drilling motion, unlike \cite{Simo1989,ibrahimbegovic1994,sansour1995co,arciniega2007tensor}.


 \item We develop the mechanical balance equations of the Cosserat shells in the presence of a magnetic body couple from the principle of virtual work. This approach uses a Lagrangian density function of the kinematic components in $\SE$, unlike in \cite{pezzulla2022,dadgar2023micropolar}.   
 \item We extract a singularity-free Lagrangian finite element technique to capture large deformations and rotations based on the weak form of the equilibrium equation. The terms are linearized within the framework of $\SE$ to extract a compact form of the material and geometric stiffness matrices, unlike in \cite{Simo1989,simo1990}. 

 \item We extend the idea of CC to the shell numerical modeling to alleviate shear-locking phenomenon without introducing additional numerical complexity, as done in \cite{simo1990, sansour1995co, ibrahimbegovic1994,dadgar2023micropolar,nebel2023geometrically,sauer2024simple}.

\end{itemize}

The structure of the article is as follows. In the next section, we present a brief review of the Lie group structure of $\SE$. Section \ref{sec:shell} develops the exact geometrical description of the proposed six-degree-of-freedom shell, the governing balance equations in both weak and strong forms, and the magnetic shell constitutive equation. Section \ref{sec:lin} introduces the admissible variations and the linearized form of the balance equations for numerical implementation. In Section \ref{sec:fem}, a novel finite element formulation is introduced. Section \ref{sec:exp} investigates various numerical examples and experimental results using the proposed model to demonstrate its accuracy. We include some concluding remarks in Section \ref{sec:conc}.


\section{Preliminaries}\label{sec:LieGroup}
\subsection{Notation}
Let $G$ be a matrix Lie group and $\mathfrak{g}$ be its Lie algebra with the Lie bracket defined via the matrix commutator operator $[\cdot,\cdot ]:\mathfrak{g}\times \mathfrak{g}\rightarrow \mathfrak{g}$. The relation between elements of $\mathfrak{g}$ and $G$ is through the group exponential map $\exp: \mathfrak{g} \rightarrow G$. The tangent and cotangent bundles of the Lie group $G$ are denoted by $TG$ and $T^{*}G$, respectively. The notation $(\cdot)^{*}$ is also used to identify the dual of a vector space or a vector (field), in this paper. A function $\mathbf{g}:\mathbb{R}^2 \rightarrow G$ belongs to the class $\mathcal{C}^{\infty}(\mathbb{R}^2 \rightarrow G)$, i.e., it is smooth, if it has an infinite number of derivatives. Here, the natural pairing between the elements of $T^{*}G$ and $TG$ are presented by $\Big<\cdot| \cdot \Big>$ and for elements of $\mathfrak{g}$ and $\mathfrak{g}^{*}$ we use $\Big<\cdot , \cdot \Big>$. The notations $(\cdot)^{-1}$ and $\text{det}(\cdot)$ represent the inverse and determinant of a square matrix, respectively. We denote the inverse of functions by $\inv(\cdot)$. Let $\mathcal{B}$ be a manifold, then a tensor of type $(p,q)$ at $X\in \mathcal{B}$ is a mapping $\mathbf{H}:\underbrace{T^{*}_{X}\mathcal{B}\times ... \times T^{*}_{X}\mathcal{B}}^{p\, \text{copies}}\times \underbrace{T_{X}\mathcal{B}\times ... \times T_{X}\mathcal{B}}^{q\, \text{copies}}\rightarrow \mathbb{R}$. The tensor product of two tensors $\mathbf{H}$ of type $(p,q)$ and $\mathbf{L}$ of type $(r,s)$ is denoted by $\mathbf{H}\otimes \mathbf{L}$ and it is of type $(p+r,q+s)$ \cite{marsden}.
\subsection{The Lie Group Structure of $\SE$}
The space of all rigid transformations between two coordinate frames is the Special Euclidean matrix Lie group
\begin{align}\label{E4}
\SE\coloneqq\Big\{\mathbf{g}=
\begin{pmatrix} 
\mathbf{R} & \mathbf{P}  \\
\mathbf{0}_{1\times 3} & 1  
\end{pmatrix}|\mathbf{R}\in \SO, \mathbf{P}\in \mathbb{R}^{3}\Big\},
\end{align}
where $\mathbf{P}$ is the translation component and the Lie group
\begin{align}\label{E1}
\SO\coloneqq\{\mathbf{R}:\mathbb{R}^{3}\rightarrow \mathbb{R}^{3}|\mathbf{R}^{T}=\mathbf{R}^{-1} \text{and}\quad \text{det}\mathbf{R}=1\}
\end{align}
contains all 3D rotations in the form of $3\times 3$ matrices $\mathbf{R}$. The tangent space at the identity element $\mathbf{I}_{4\times 4}\in \SE$ is isomorphic to the Lie algebra $\se$ as a vector space
\begin{align}\label{E5-11}
T_{\mathbf{I}_{4\times 4}}\SE\cong\se\coloneqq\Big\{\hat{\boldsymbol{\xi}}=
\begin{pmatrix} 
\hat{\boldsymbol{\omega}} & \mathbf{v}  \\
\mathbf{0}_{1\times 3} & 0  
\end{pmatrix}|\hat{\boldsymbol{\omega}}\in \soo, \mathbf{v}\in \mathbb{R}^{3}\Big\}.
\end{align}
We refer to elements of $\se$ as \emph{twists} or infinitesimal generators of $\SE$. 
With some abuse of notation, here we identify the elements of $\se$, using the vector space isomorphisms $\mathbb{R}^{3}\rightarrow \soo$ and $\mathbb{R}^{6}\rightarrow \se$, both denoted by the hat operator.
For all $v \in \mathbb{R}^{3}$, $\hat{v}$ is the unique $3\times 3$ skew-symmetric matrix such that $\hat{v}y=v\times y$ for every $y\in \mathbb{R}^{3}$.
The inverse of the hat operator is indicated by $(\cdot)^{\vee}$ which transforms a matrix to a vector such that    
\begin{align}\label{E6}
\boldsymbol{\xi}=(\hat{\boldsymbol{\xi}})^{\vee}=
\begin{pmatrix} 
\hat{\boldsymbol{\omega}} & \mathbf{v}  \\
\mathbf{0}_{1\times 3} & 0  
\end{pmatrix}^{\vee}=
\begin{pmatrix} 
\mathbf{v} \\ 
\boldsymbol{\omega} 
\end{pmatrix}.
\end{align}
In Eq. (\ref{E5-11}), the Lie algebra $\soo$ contains elements of the tangent space at the identity element $\mathbf{I}_{3\times 3}\in \SO$: 
\begin{align}\label{E2}
T_{\mathbf{I}_{3\times 3}}\SO\cong \soo \coloneqq\{\hat{\boldsymbol{\omega}}: \mathbb{R}^{3}\rightarrow \mathbb{R}^{3}|\boldsymbol{\omega}\in\mathbb{R}^3\}.
\end{align}
An element of $\se$ can be mapped to an element of $\SE$ using the exponential map such that as
\begin{align}\label{E6-1}
\exp(\hat{\boldsymbol{\xi}})=
\begin{pmatrix}
    \exp(\hat{\boldsymbol{\omega}}) & \mathcal{T}({\boldsymbol{\omega}})\mathbf{v} \\
    \mathbf{0}_{1\times 3} & 1
\end{pmatrix},
\end{align}
where we have $\mathcal{T}({\boldsymbol{\omega}})=\mathbf{I}_{3\times 3}+\dfrac{1-\cos(|\boldsymbol{\omega}|)}{|\boldsymbol{\omega}|^{2}}\hat{\boldsymbol{\omega}}+\dfrac{|\boldsymbol{\omega}|-\sin(|\boldsymbol{\omega}|)}{|\boldsymbol{\omega}|^{3}}\hat{\boldsymbol{\omega}}^{2}$ and $\exp(\hat{\boldsymbol{\omega}})=\mathbf{I}_{3\times 3}+\dfrac{\sin(|\boldsymbol{\omega}|)}{|\boldsymbol{\omega}|}\hat{\boldsymbol{\omega}}+\dfrac{1-\cos(|\boldsymbol{\omega}|)}{|\boldsymbol{\omega}|^{2}}\hat{\boldsymbol{\omega}}^{2}$. Note that when $\boldsymbol{\omega}=\boldsymbol{0}_{3\times 1}$ we have $\exp(\hat{\boldsymbol{\xi}})=\begin{pmatrix}
    \mathbf{I}_{3\times 3} & \mathbf{v} \\
    \boldsymbol{0}_{1\times 3} & 1
\end{pmatrix}$.
\begin{remark}
    A singularity in the rotation parametrization arises in the formula for $\exp(\hat{\boldsymbol{\xi}})$ when $|\boldsymbol{\omega}|$ approaches $(2n+1)\pi,\; n = 0,1,2,\ldots$. To avoid this singularity, many researchers have used quaternion representations with four parameters. In our work, however, the incremental update procedure for extracting the incremental twist ensures that the rotation vector does not reach these critical values of $|\boldsymbol{\omega}|$.
\end{remark}
The Adjoint operator $\Ad_{\mathbf{g}}: \mathbb{R}^6\rightarrow \mathbb{R}^6$ relates the twists in two coordinate frames with relative transformation $\mathbf{g}\in \SE$:
\begin{align}\label{E6-2}
\Ad_{\mathbf{g}}=
\begin{pmatrix}
    \mathbf{R} & \hat{\mathbf{P}}\mathbf{R} \\
    \boldsymbol{0}_{3\times 3} & \mathbf{R}
\end{pmatrix}.
\end{align}
We have the following properties of the Adjoint operator: (i) $\Ad_{\mathbf{g}}^{-1}=\Ad_{\mathbf{g}^{-1}}$ and  (ii) given $\mathbf{g}_{1}, \mathbf{g}_{2}\in \SE$, it follows that  $\Ad_{\mathbf{g}_{1}\mathbf{g}_{2}}=\Ad_{\mathbf{g}_{1}}\Ad_{\mathbf{g}_{2}}$. 
Furthermore $\forall \boldsymbol{\xi} \in \mathbb{R}^{6}$, the adjoint representation of $\se$, $\ad_{\boldsymbol{\xi}}:\mathbb{R}^{6} \rightarrow \mathbb{R}^{6}$, is defined based on the Lie bracket operator, i.e., $\forall \boldsymbol{\eta} \in \mathbb{R}^{6}$ we have $\ad_{\boldsymbol{\xi}}\boldsymbol{\eta}=[\hat{\boldsymbol{\xi}},\hat{\boldsymbol{\eta}}]^{\vee}=(\hat{\boldsymbol{\xi}}\hat{\boldsymbol{\eta}}-\hat{\boldsymbol{\eta}}\hat{\boldsymbol{\xi}})^{\vee}$, where in matrix form
\begin{align}\label{E6-2s}
\ad_{\boldsymbol{\xi}}=
\begin{pmatrix}
    \hat{\boldsymbol{\omega}} & \hat{\mathbf{v}} \\
    \mathbf{0}_{3\times 3} &  \hat{\boldsymbol{\omega}}
\end{pmatrix}.
\end{align}
Dual of the adjoint operator $\ad_{\boldsymbol{\xi}}^{*}:\mathbb{R}^{6*}\rightarrow \mathbb{R}^{6*}$ is the transpose of this matrix representation. We define the map $\tilde{\ad}_{\boldsymbol{\gamma}}:\mathbb{R}^{6}\rightarrow \mathbb{R}^{6*}$ for every $\boldsymbol{\gamma}=(\mathbf{n}^{T},\mathbf{m}^{T})^T\in \mathbb{R}^{6*}$ such that $\tilde{\ad}_{\boldsymbol{\gamma}}\boldsymbol{\xi}=\ad^{*}_{\boldsymbol{\xi}}\boldsymbol{\gamma}$, which takes the following matrix form:  
\begin{align}\label{E6-2ss}
\tilde{\ad}_{\boldsymbol{\gamma}}=
\begin{pmatrix}
\mathbf{0}_{3\times 3} &  \hat{\mathbf{n}}\\
    \hat{\mathbf{n}} & \hat{\mathbf{m}}
\end{pmatrix}.
\end{align}

\section{Cosserat Shell Media}\label{sec:shell}
\subsection{Kinematics}
A 2D embedded submanifold of the 3D (physical) affine Euclidean space $\mathcal{S}$ defines a simple Cosserat shell body, where to each point of this submanifold we attach a set of right-handed orthonormal triad rigid directors. Therefore, the state of every material point on the Cosserat shell can be represented by a member of  $\SE$ that includes the position of the point and the rotation of the rigid triad attached to it, with respect to an inertially fixed frame $\mathbf{E}$ in $\mathcal{S}$. The right-handed frame $\mathbf{E}:=\{\mathbf{O},\mathbf{E}_{1},\mathbf{E}_{2},\mathbf{E}_{3}\}$ is defined by a point $\mathbf{O}\in\mathcal{S}$ and three orthonormal vectors $\mathbf{E}_I$ ($I=1,2,3$) attached at $\mathbf{O}$.
A \textit{configuration} of a Cosserat shell body is then identified by a smooth 2D embedding 
\begin{align}\label{E7-1}
\mathbf{g}:\mathcal{A}\hookrightarrow \SE,
\end{align}
where we assume there is a globally smooth parametrization of the shell by the connected open subset $\mathcal{A}\subset\mathbb{R}^{2}$. The set of all configurations of the shell body is denoted by
\begin{align}\label{E7}
\boldsymbol{\mathcal{Q}}:=\{\mathbf{g}\in\mathcal{C}^{\infty}(\mathcal{A}  \hookrightarrow \SE)|\,\mathcal{A}\subset\mathbb{R}^{2}\},
\end{align}
which is an infinite dimensional configuration space.
A series of smooth configuration changes of the Cosserat shell is then a $t$-parameterized family of embeddings in $\boldsymbol{\mathcal{Q}}$, which is a smooth mapping $t \mapsto \mathbf{g}_{t}\in \boldsymbol{\mathcal{Q}}$ for $t\in\mathbb{R}^{\geq 0}$. We specifically name the map $\mathbf{g}_{0}:\mathcal{A}\hookrightarrow\SE$ \emph{reference configuration}, which is assumed to be the stress-free shell configuration and name the map $\mathbf{g}_{t}:\mathcal{A}\hookrightarrow\SE$ \emph{current configuration}. Based on Eq.(\ref{E4}), we can write
\begin{align}\label{E10}
\mathbf{g}_{0}(\xi^{1},\xi^{2})=
\begin{pmatrix} 
\mathbf{R}_{0}(\xi^{1},\xi^{2}) & \boldsymbol{\varphi}_{0}(\xi^{1},\xi^{2})  \\
\mathbf{0}_{1\times 3} & 1  
\end{pmatrix}\in\SE,
\end{align}
and
\begin{align}\label{E11}
\mathbf{g}_{t}(\xi^{1},\xi^{2})=
\begin{pmatrix} 
\mathbf{R}_{t}(\xi^{1},\xi^{2}) & \boldsymbol{\varphi}_{t}(\xi^{1},\xi^{2})  \\
\mathbf{0}_{1\times 3} & 1  
\end{pmatrix}\in\SE,
\end{align}
where $(\xi^1,\xi^2)\in\A$ and we consider the rotation maps $\mathbf{R}_{0}:\mathcal{A}\rightarrow \SO$, $\mathbf{R}_{t}:\mathcal{A}\rightarrow \SO$ and the translation maps $\boldsymbol{\varphi}_{0}:\mathcal{A}\rightarrow \mathbb{R}^{3}$,  $\boldsymbol{\varphi}_{t}:\mathcal{A}\rightarrow \mathbb{R}^{3}$ to define the embeddings.
\begin{figure}[h]
\includegraphics[scale=0.4]{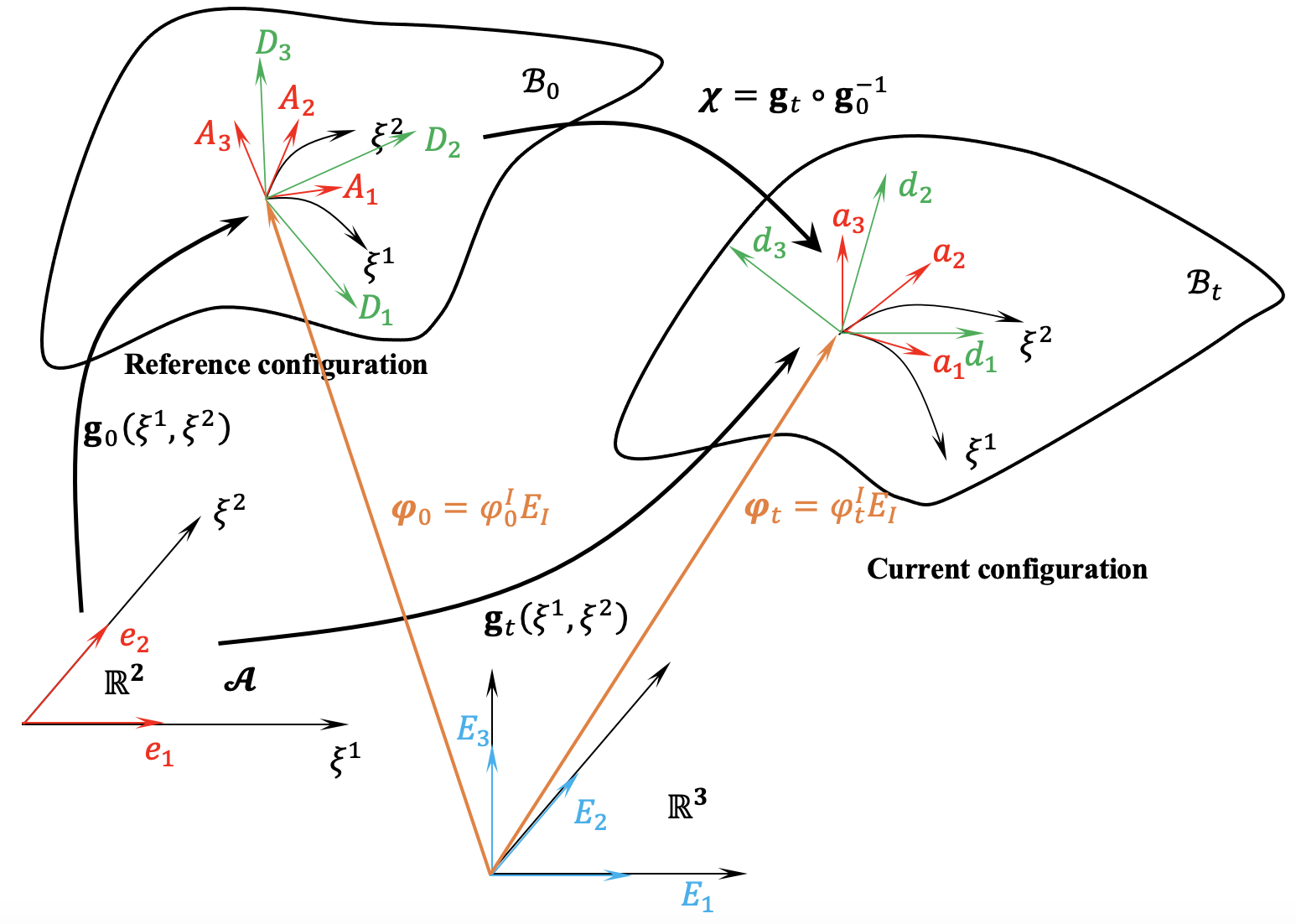}
\centering
\caption{Cosserat shell configuration spaces (Note: The term $\mathbf{g_0^{-1}}$ in the definition of $\boldsymbol{\chi}$ is the inverse of function $\mathbf{g_0}\colon\mathcal{A}\rightarrow\SE$ and not the group inverse)}
\label{fig1}
\end{figure}

 The geometry of a Cosserat shell in $\mathcal{S}$, i.e., its \textit{physical state}, in the reference and current configurations is respectively captured by the $2$D embedded submanifolds $\mathcal{B}_0:=\mathbf{g}_{0}(\mathcal{A})\subset\SE$ and  $\mathcal{B}_t:=\mathbf{g}_{t}(\mathcal{A})\subset\SE$, the image of the maps $\mathbf{g}_{0}$ and $\mathbf{g}_{t}$, depicted in Figure \ref{fig1}.  In this figure, the orthonormal triad directors in reference and current physical states are presented by $D_{I}$ and $d_{I}~(I=1,2,3)$, respectively.
 Additionally, the positions of material particles in the reference and current physical states are  $\boldsymbol{\varphi}_{0}=\sum_{I=1}^{3}\varphi^{I}_{0}(\xi^{1},\xi^{2})\boldsymbol{E}_{I}$ and $\boldsymbol{\varphi}_{t}=\sum_{I=1}^{3}\varphi^{I}_{t}(\xi^{1},\xi^{2})\boldsymbol{E}_{I}$, with corresponding derivatives denoted by $\boldsymbol{A}_{\alpha}\coloneqq\dfrac{\partial\boldsymbol{\varphi}_{0}}{\partial \xi^{\alpha}}\in\mathbb{R}^3$ and $\boldsymbol{a}_{\alpha}\coloneqq\dfrac{\partial\boldsymbol{\varphi}_{t}}{\partial \xi^{\alpha}}\in\mathbb{R}^3$ $(\alpha =1,2)$, indicating the convected basis of the shell surface.

 For a reference shell configuration $\mathbf{g}_{0}\in\mathcal{Q}$ and a current configuration $\mathbf{g}_{t}\in\mathcal{Q}$, the deformation is the diffeomorphism 
 \begin{align}\label{E12}
\boldsymbol{\chi}\colon &\mathcal{B}_{0}\xrightarrow[]{\;\;\cong\;\;} \mathcal{B}_{t}\\ \nonumber
&\mathbf{g}_0(\xi^1,\xi^2)\mapsto\mathbf{g}_t(\xi^1,\xi^2)
\end{align}
 that is defined by the commutative diagram in Figure \ref{fig2} for some $(\xi^1,\xi^2)\in\mathcal{A}$. Note that if $t=0$, then $\boldsymbol{\chi}$ is the identity map on $\mathcal{B}_0$.
The Jacobian of $\boldsymbol{\chi}$ as a function on $\mathcal{A}\subset\mathbb{R}^2$ is called the \emph{deformation gradient} and defined by 
 \begin{align}\label{E12-1}
\mathbf{F}(\xi^{1},\xi^{2}):=T_{\mathbf{g}_0(\xi^1,\xi^2)}\boldsymbol{\chi}: T_{\mathbf{g}_{0}(\xi^{1},\xi^{2})}\mathcal{B}_{0}\rightarrow T_{\mathbf{g}_{t}(\xi^{1},\xi^{2})}\mathcal{B}_{t}.
\end{align}
Since the shell is assumed to be globally parameterized by $\mathcal{A}$  for any arbitrary $t\in\mathbb{R}^{\geq 0}$, the set of vector fields
$X_t=\left\{\left.\frac{\partial \mathbf{g}_t }{\partial \xi^\alpha}\subset T\mathcal{B}_t\right| \alpha=1,2\right\}$ can be considered as a basis that pointwise spans the tangent spaces $T_{\mathbf{g}_t(\xi^1,\xi^2)}\mathcal{B}_t$ for all $(\xi^1,\xi^2)\in\mathcal{A}$. The dual basis $X_t^*=\left\{\left.\big(\frac{\partial \mathbf{g}_t }{\partial \xi^\alpha}\big)^*\subset T^*\mathcal{B}_t\right| \alpha=1,2\right\}$ is then abstractly defined by $\left<\big(\frac{\partial \mathbf{g}_t }{\partial \xi^\alpha}\big)^*|\frac{\partial \mathbf{g}_t }{\partial \xi^\beta}\right>=\delta_{\alpha}^{\beta}$, where $\delta_{\alpha}^{\beta}$ is the kronecker delta.
In this basis, the deformation gradient can be expressed as a two-point $(1,1)$ tensor:
\begin{align}\label{E12-4}
\mathbf{F}=\sum_{\alpha=1}^{2}\dfrac{\partial \mathbf{g}_{t}}{\partial \xi^{\alpha}}\otimes \Big(\dfrac{\partial \mathbf{g}_{0}}{\partial \xi^{\alpha}}\Big)^{*}.
\end{align}
This form of the deformation gradient is inappropriate for numerical implementation due to the absence of a matrix representation. Hence, in the following, we define a new form of the deformation gradient using the Lie-algebraic representation of infinitesimal deformations.

Given a shell's physical state for an arbitrary parameter $t\in\mathbb{R}^{\geq 0}$ as the 2D embedded submanifold $\mathcal{B}_t\subset\SE$, the set $X_t$ spanning the tangent bundle $T\mathcal{B}_t$ can be identified by $\se$-valued sections over $\mathcal{B}_t$:
\begin{align}\label{eq:zeta}
    \mathcal{X}_t:=\left\{\left.\boldsymbol{\zeta}_{t\alpha}(\xi^1,\xi^2)=\big(\mathbf{g}_t^{-1}\frac{\partial \mathbf{g}_t }{\partial \xi^\alpha}\big)^\vee\in \mathbb{R}^6 \cong \se\right| (\xi^1,\xi^2)\in\A, \alpha=1,2\right\}.
\end{align}
This set of sections describe the material deformation field in the local coordinate frame associated with the material point following deformation
 \footnote{This set can be considered as collection of sections of the trivial bundle $\mathcal{B}_t \times \se$ resulted from left trivialization of $T\SE$, which are parameterized by the shell parameters $(\xi^1,\xi^2)$.}. 
We now define the novel notion of \textit{local deformation gradient} through the following relation:
\begin{align}\label{E12-3}
\mathbf{F}_{e}(\xi^1,\xi^2)({\boldsymbol{\zeta}}):=\big(\mathbf{g}_{t}^{-1}\mathbf{F}(\mathbf{g}_{0}\hat{\boldsymbol{\zeta}})\big)^\vee\in\mathbb{R}^6\cong \se,
\end{align}
for any arbitrary \textit{local infinitesimal deformation} ${\boldsymbol{\zeta}}\in\spn \mathcal{X}_t(\xi^1,\xi^2)$. Equivalently, the local deformation gradient can be expressed as a two-point $(1,1)$ tensor on the vector space $\mathbb{R}^6\cong\se$: 
\begin{align}\label{loc-def-grad}
\mathbf{F}_e=\sum_{\alpha=1}^{2}\boldsymbol{\zeta}_{t\alpha }\otimes \boldsymbol{\zeta}_{0\alpha }^{*},
\end{align}
such that the dual vectors $\mathcal{X}^*_0=\left\{\left.\boldsymbol{\zeta}_{0\alpha}^*\subset \mathbb{R}^{6*} \cong \mathfrak{se}^*(3)\right| \alpha=1,2\right\}$ are determined via $\left<\boldsymbol{\zeta}_{0\alpha}^*,\boldsymbol{\zeta}_{0\beta}\right>=\delta_{\alpha}^{\beta}$. To compute the dual vectors we consider the natural basis of $\mathbb{R}^6$ (induces a basis for $\se$ via the hat operator) denoted by $\left\{\mathbf{e}_i\in\mathbb{R}^6|i=1,\cdots,6\right\}$ with the dual basis being $\left\{\mathbf{e}_i^*\in\mathbb{R}^{6*}|i=1,\cdots,6\right\}$. In these coordinates, the dual and transpose of a vector coincide and the dual basis $\mathcal{X}_0^*$ can be calculated by defining the matrix 
$\mathbb{X}_0\coloneqq\begin{pmatrix}
        \boldsymbol{\zeta}_{01} & \boldsymbol{\zeta}_{02}
    \end{pmatrix}\in\mathbb{R}^{6\times 2}$:\footnote{Note that with some abuse of notation, we use the same symbols to denote vectors and their duals in the natural basis and its corresponding dual basis of $\mathbb{R}^6$.}
    \begin{align*}
        \mathbb{X}_0^*\coloneqq\begin{pmatrix}
        \boldsymbol{\zeta}_{01}^* \\ \boldsymbol{\zeta}_{02}^*
    \end{pmatrix}=\big(\mathbb{X}_0^T\mathbb{X}_0\big)^{-1}\mathbb{X}_0^T\in\mathbb{R}^{2\times 6}.
    \end{align*}
    That is the rows of the Moore-Penrose pseudo-inverse of $\mathbb{X}_0$ represent the basis elements in $\mathcal{X}_0^*$. Therefore, in matrix form, the local deformation gradient can be represnted by 
    \begin{align}\label{E1213}
        \mathbf{F}_{e}=\mathbb{X}_t\mathbb{X}^*_0\in\mathbb{R}^{6\times 6}.
    \end{align}
It is important to note that the map $\mathbf{F}_{e}$ takes material deformation in the local coordinate frame of the reference configuration to the corresponding deformation in the current configuration.
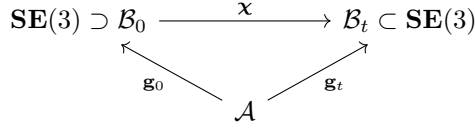
\begin{figure}
\centering
\begin{tikzcd}
	\SE\supset\mathcal{B}_0 & {} & \mathcal{B}_t\subset\SE \\
	& {\mathcal{A}}
	\arrow["{\boldsymbol{\chi}}", from=1-1, to=1-3]
	\arrow["{\mathbf{g}_{t}}"', from=2-2, to=1-3]
	\arrow["{\mathbf{g}_{0}}", from=2-2, to=1-1]
\end{tikzcd}
\caption{Commutative diagram}
\label{fig2}
\end{figure}
In the absence of deformation, Eq. (\ref{E1213}) simplifies to
\begin{align}\label{E1214}
\mathbf{I}_{e}=\mathbb{X}_0\mathbb{X}^*_0\in\mathbb{R}^{6\times 6},
\end{align}
and accordingly, the Cosserat shell strain tensor can be defined as
\begin{align}\label{E16}
\boldsymbol{\epsilon}=\mathbf{F}_{e}-\mathbf{I}_{e}=\boldsymbol{\mathcal{E}}\mathbb{X}^*_0,
\end{align}
where we have $\boldsymbol{\mathcal{E}}\coloneqq\mathbb{X}_t-\mathbb{X}_0$.

\subsection{Cosserat Shell Balance Equations}
In this section, we examine a Lagrangian function $L: \boldsymbol{\mathcal{Q}}\rightarrow \mathbb{R}$ to derive the quasi-static balance equations of a Cosserat shell structure. This function includes the internal energy of the shell and for a given reference configuration  $\g_0\in \boldsymbol{\mathcal{Q}}$  takes the following form:
\begin{align}\label{E14}
L(\g_t)=\int_{\mathcal{A}_{0}}L_{0}\Big(\mathbf{g}_{t}, \dfrac{\partial \mathbf{g}_{t}}{\partial \xi^{1}}, \dfrac{\partial \mathbf{g}_{t}}{\partial \xi^{2}}\Big)\bar{j}_{0}d\mathcal{A}.
\end{align}
Here, the Lagrangian per unit area of a Cosserat shell in the reference configuration is defined as the function $L_{0}$ dependent on the embedding $\g_t$ and its derivatives defining vector fields over $\mathcal{B}_t$. Additionally,
$\bar{j}_{0}d\mathcal{A}$ is the surface element at an arbitrary point of the shell in the reference configuration, while we have $d\mathcal{A}=d\xi^{1}d\xi^{2}$ and  $\bar{j}_{0}=\|\boldsymbol{A}_{1}\times \boldsymbol{A}_{2}\|$, following shell mid-surface base vectors. The principle of material frame indifference indicates that the shell's Lagrangian per unit area must satisfy \cite{boyer2017poincare}
\begin{align}\label{E16-1}
L_{0}\Big(\mathbf{g}_{t}, \dfrac{\partial \mathbf{g}_{t}}{\partial \xi^{1}}, \dfrac{\partial \mathbf{g}_{t}}{\partial \xi^{2}}\Big)=L_{0}\Big(\mathbf{h} \mathbf{g}_{t}, \mathbf{h} \dfrac{\partial \mathbf{g}_{t}}{\partial \xi^{1}}, \mathbf{h} \dfrac{\partial \mathbf{g}_{t}}{\partial \xi^{2}}\Big),
\end{align} 
for all $\mathbf{h}\in\SE$, i.e., $L_0$ is left invariant under the rigid transformations of the ambient space in which our shell embeds, i.e., the observer. Now considering $\mathbf{h}=\mathbf{g}_{t}^{-1}$, we have
\begin{align}\label{E17}
L_{0}\Big(\mathbf{I}_{4\times 4},\mathbf{g}_{t}^{-1} \dfrac{\partial \mathbf{g}_{t}}{\partial \xi^{1}}, \mathbf{g}_{t}^{-1} \dfrac{\partial \mathbf{g}_{t}}{\partial \xi^{2}}\Big)=:\ell_{0}(\boldsymbol{\zeta}_{t1}, \boldsymbol{\zeta}_{t2}),
\end{align}
defining the function $\ell_0\colon\mathbb{R}^6\times\mathbb{R}^6\rightarrow\mathbb{R}$ on the Lie algebra level. Hence, the principle of the material frame indifference reduces the Lagrangian $L_0$ to the Lagrangian $\ell_0$ on $\mathbb{R}^6\times\mathbb{R}^6\cong\se\times\se$.
It is worth noting that the Lie algebraic formulation of the deformation gradient in equation \eqref{E1213} inherently satisfies the principle of material frame indifference, i.e., the deformation gradient transforms appropriately under a superposed rigid body motion.

 Let $\mathbf{g}_t\in \mathcal{Q}$ be a smooth 2D embedding (surface) in $\SE$ defined on the compact set $\mathcal{A}_0\subset\mathbb{R}^2$ with the boundary $\partial\mathcal{A}_0$. A variation of $\mathbf{g}_t$ with fixed boundary is a smooth map $\breve\g_t: \A_0\times\R\rightarrow \SE$ that satisfies  the  conditions $\breve \g_t(\xi^1,\xi^2,0)  =\g_t(\xi^1,\xi^2)$ and $\frac{\partial}{\partial\varepsilon}\Big(\left.\breve\g_t\right|_{\partial\A_0}\Big)=\0_{4\times 4}$.  This  variation defines the vector field
\begin{equation}\label{eq:var}
    \delta \g_t(\xi^1,\xi^2)=\left.\frac{\partial \breve\g_t(\xi^1,\xi^2,\varepsilon)}{\partial \varepsilon} \right|_{\varepsilon=0}\in T_{\g_t(\xi^1,\xi^2)}\SE
\end{equation}
on the surface $\mathcal{B}_t=\g(\A_0)$, such that $ \left.\delta \g_t\right|_{\partial\A_0}=\0_{4\times 4}$. The operator $\delta$ always refers to variation of an entity, i.e., its composition with $\breve\g_t$ and taking the derivative with respect to $\varepsilon$ on the surface $\mathcal{B}_t$.
Then, the balance equations are obtained using the principle of virtual work, that is defined based on the following variational principle for the variations of the type $\delta\g_t$ in Eq. \eqref{eq:var}:
\begin{align}\label{E18}
\delta L+\delta W_{ext}=\delta\int_{\A_0} L_0\Big(\mathbf{g}_{t}, \dfrac{\partial \mathbf{g}_{t}}{\partial \xi^{1}}, \dfrac{\partial \mathbf{g}_{t}}{\partial \xi^{2}}\Big) \bar{j}_{0}d\A+\delta W_{ext}=0,
\end{align} 
where $W_{ext}$ is the work of external wrenches applied to the shell. At this point, we do not specify the nature of these wrenches.
We define the variation
\begin{align}\label{E21}
\hat{\boldsymbol{\kappa}}=\mathbf{g}_{t}^{-1}\delta \mathbf{g}_{t}\in\se,
\end{align}
as the left translation of the variation of the surface $\g_t$ to the lie algebra $\se$, with the condition $\left.\boldsymbol{\kappa}\right|_{\partial\A_0}=\0_{6\times 1}$.

\begin{lemma}
    A Cosserat shell with Lagrangian $L$ satisfies the variational principle in Eq. \eqref{E18} if and only if $\ell_0$ in Eq. \eqref{E17} satisfies the variational principle:
    \begin{align}\label{E19}
\delta \int_{\mathcal{A}_{0}}\ell_{0}(\boldsymbol{\zeta}_{t1}, \boldsymbol{\zeta}_{t2})\bar{j}_{0}d\mathcal{A}=-\delta W_{ext},
\end{align}
for variations of type ($\alpha=1,2$):
\begin{align}\label{eq:var-zeta}
    \delta \boldsymbol{\zeta}_{t\alpha}=\frac{\partial \boldsymbol{\kappa}}{\partial\xi_\alpha}+\ad_{\boldsymbol{\zeta}_{t\alpha}}\boldsymbol{\kappa}.
\end{align}
\end{lemma}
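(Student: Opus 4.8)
The plan is to establish the equivalence by showing that the two variational principles in Eqs.~\eqref{E18} and \eqref{E19} are term-by-term identical under the correspondence between $L_0$ and $\ell_0$ given in Eq.~\eqref{E17}, and under the reduction of the variation $\delta\g_t$ to the Lie-algebra element $\boldsymbol{\kappa}$ defined in Eq.~\eqref{E21}. Since the external work term $\delta W_{ext}$ appears identically in both equations, the whole problem reduces to proving that
\begin{align*}
\delta \int_{\mathcal{A}_0} L_0\Big(\mathbf{g}_t,\tfrac{\partial\mathbf{g}_t}{\partial\xi^1},\tfrac{\partial\mathbf{g}_t}{\partial\xi^2}\Big)\bar{j}_0\,d\mathcal{A}
=\delta \int_{\mathcal{A}_0}\ell_0(\boldsymbol{\zeta}_{t1},\boldsymbol{\zeta}_{t2})\bar{j}_0\,d\mathcal{A},
\end{align*}
for the prescribed class of variations, together with the claim that as $\delta\g_t$ ranges over all fixed-boundary variations of the type in Eq.~\eqref{eq:var}, the induced $\delta\boldsymbol{\zeta}_{t\alpha}$ ranges over exactly the class in Eq.~\eqref{eq:var-zeta} with $\boldsymbol{\kappa}$ vanishing on $\partial\mathcal{A}_0$. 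First I would invoke material frame indifference, Eq.~\eqref{E16-1}, with the particular choice $\mathbf{h}=\mathbf{g}_t^{-1}$ to write $L_0=\ell_0(\boldsymbol{\zeta}_{t1},\boldsymbol{\zeta}_{t2})$ pointwise; since $\bar{j}_0$ depends only on the reference configuration $\mathbf{g}_0$ and not on $\mathbf{g}_t$, it is inert under $\delta$, so the two integrands agree and the two variations agree provided we track the variation consistently through the change of variables.

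The core computation is the derivation of Eq.~\eqref{eq:var-zeta}. I would compute $\delta\boldsymbol{\zeta}_{t\alpha}$ directly from its definition $\hat{\boldsymbol{\zeta}}_{t\alpha}=\mathbf{g}_t^{-1}\frac{\partial\mathbf{g}_t}{\partial\xi^\alpha}$ by composing with $\breve{\mathbf{g}}_t$ and differentiating in $\varepsilon$ at $\varepsilon=0$. Using $\delta(\mathbf{g}_t^{-1})=-\mathbf{g}_t^{-1}(\delta\mathbf{g}_t)\mathbf{g}_t^{-1}$ and the commutation of $\delta$ (the $\varepsilon$-derivative) with $\partial/\partial\xi^\alpha$ (the $\xi$-derivative), since both are partial derivatives of the smooth map $\breve{\mathbf{g}}_t\colon\mathcal{A}_0\times\mathbb{R}\to\SE$, I get
\begin{align*}
\delta\hat{\boldsymbol{\zeta}}_{t\alpha}
&=-\mathbf{g}_t^{-1}(\delta\mathbf{g}_t)\mathbf{g}_t^{-1}\frac{\partial\mathbf{g}_t}{\partial\xi^\alpha}
+\mathbf{g}_t^{-1}\frac{\partial(\delta\mathbf{g}_t)}{\partial\xi^\alpha}\\
&=-\hat{\boldsymbol{\kappa}}\,\hat{\boldsymbol{\zeta}}_{t\alpha}
+\mathbf{g}_t^{-1}\frac{\partial}{\partial\xi^\alpha}\big(\mathbf{g}_t\hat{\boldsymbol{\kappa}}\big)
=-\hat{\boldsymbol{\kappa}}\,\hat{\boldsymbol{\zeta}}_{t\alpha}+\hat{\boldsymbol{\zeta}}_{t\alpha}\hat{\boldsymbol{\kappa}}+\frac{\partial\hat{\boldsymbol{\kappa}}}{\partial\xi^\alpha},
\end{align*}
where I substituted $\delta\mathbf{g}_t=\mathbf{g}_t\hat{\boldsymbol{\kappa}}$ from Eq.~\eqref{E21}. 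Recognizing $\hat{\boldsymbol{\zeta}}_{t\alpha}\hat{\boldsymbol{\kappa}}-\hat{\boldsymbol{\kappa}}\hat{\boldsymbol{\zeta}}_{t\alpha}=[\hat{\boldsymbol{\zeta}}_{t\alpha},\hat{\boldsymbol{\kappa}}]=\widehat{\ad_{\boldsymbol{\zeta}_{t\alpha}}\boldsymbol{\kappa}}$ and applying the vee operator yields Eq.~\eqref{eq:var-zeta}. The same identity shows the correspondence $\delta\mathbf{g}_t\leftrightarrow\boldsymbol{\kappa}$ is a bijection between the two variation classes: $\mathbf{g}_t$ is invertible so $\boldsymbol{\kappa}=(\mathbf{g}_t^{-1}\delta\mathbf{g}_t)^\vee$ is well-defined, smooth, and vanishes on $\partial\mathcal{A}_0$ exactly when $\delta\mathbf{g}_t$ does; conversely any such $\boldsymbol{\kappa}$ determines $\delta\mathbf{g}_t=\mathbf{g}_t\hat{\boldsymbol{\kappa}}$, which integrates to an admissible variation $\breve{\mathbf{g}}_t$ (e.g. via $\breve{\mathbf{g}}_t=\mathbf{g}_t\exp(\varepsilon\hat{\boldsymbol{\kappa}})$).

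Assembling the pieces: for the forward direction, if $L$ satisfies Eq.~\eqref{E18} for every fixed-boundary variation, then substituting $L_0=\ell_0$ and the relation just derived gives Eq.~\eqref{E19} for every $\boldsymbol{\kappa}$ vanishing on the boundary with $\delta\boldsymbol{\zeta}_{t\alpha}$ of the stated type; the converse runs identically in reverse using the bijection. The main obstacle I anticipate is bookkeeping rather than conceptual: one must be careful that $\delta$ and $\partial/\partial\xi^\alpha$ genuinely commute (this rests on smoothness of $\breve{\mathbf{g}}_t$ and Schwarz's theorem, and on the fact that $\delta$ is defined as composition with $\breve{\mathbf{g}}_t$ followed by $\partial_\varepsilon|_0$, as the paper emphasizes), and that the variation $\delta$ passes under the integral sign over the fixed reference domain $\mathcal{A}_0$ — which is legitimate precisely because we have pulled everything back to $\mathcal{A}_0$, where the domain of integration and the area element $\bar{j}_0\,d\mathcal{A}$ are independent of $\varepsilon$. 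A secondary subtlety is ensuring the surjectivity claim: that every $\delta\boldsymbol{\zeta}_{t\alpha}$ of the form in Eq.~\eqref{eq:var-zeta} actually arises from a legitimate variation, which the explicit construction $\breve{\mathbf{g}}_t=\mathbf{g}_t\exp(\varepsilon\hat{\boldsymbol{\kappa}})$ settles.
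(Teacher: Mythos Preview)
Your proposal is correct and follows essentially the same approach as the paper: the core computation of $\delta\hat{\boldsymbol{\zeta}}_{t\alpha}$ via the product rule on $\mathbf{g}_t^{-1}\partial_{\xi^\alpha}\mathbf{g}_t$, the substitution $\delta\mathbf{g}_t=\mathbf{g}_t\hat{\boldsymbol{\kappa}}$, and the identification of the commutator as $\ad_{\boldsymbol{\zeta}_{t\alpha}}\boldsymbol{\kappa}$ are exactly what the paper does in its Appendix. Your treatment is in fact more complete than the paper's, which records only the derivation of Eq.~\eqref{eq:var-zeta} and leaves the frame-indifference reduction $L_0=\ell_0$, the inertness of $\bar{j}_0$, and the bijection argument for the converse direction implicit.
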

\begin{proof}
    The detailed proof is presented in Appendix \ref{app:lem1}.
\end{proof}

Based on the definition presented in Eq. (\ref{E21}), the variation of external work can be defined as
\begin{align}\label{E32}
\delta W_{ext}\coloneqq \int_{\mathcal{A}_{0}}\Big<\mathbf{\bar{F}}^{ext},\boldsymbol{\kappa} \Big> \bar{j}_{0}d\mathcal{A}+\int_{\partial \mathcal{A}_{0}}\Big<\mathbf{F}^{ext},\boldsymbol{\kappa} \Big>\bar{j}_{0}ds,
\end{align} 
where $\mathbf{\bar{F}}^{ext}\in (\mathbb{R}^6)^*\cong \mathfrak{se}^*(3)$ is the vector of the body wrenches per unit area applied on the shell surface (body forces and couples or follower loads) and $\mathbf{F}^{ext}\in (\mathbb{R}^6)^*\cong \mathfrak{se}^*(3)$ is the vector of body wrenches per unit length applied on the shell boundaries. 
Substituting Eqs. \eqref{eq:var-zeta} and (\ref{E32}) into (\ref{E19}) leads to the weak form of the balance equations for a Cosserat shell:
\begin{align}\label{E33}\nonumber
\int_{\mathcal{A}_{0}}\sum_{\alpha=1}^2\Big<\dfrac{\partial \ell_{0}}{\partial \boldsymbol{\zeta}_{t\alpha}},\dfrac{\partial \boldsymbol{\kappa}}{\partial \xi^{\alpha}}+\ad_{\boldsymbol{\zeta}_{t\alpha}}\boldsymbol{\kappa}\Big>\bar{j}_{0}d\mathcal{A}=&\\-\int_{\mathcal{A}_{0}}\Big<\mathbf{\bar{F}}^{ext},\boldsymbol{\kappa} \Big> \bar{j}_{0}d\mathcal{A}-&\int_{\partial \mathcal{A}_{0}}\Big<\mathbf{F}^{ext},\boldsymbol{\kappa} \Big>\bar{j}_{0}ds.
\end{align}
Performing integration by part for every term on the left hand side and noting that the variation $\boldsymbol{\kappa}$ vanishes at the boundary:
\begin{align}\label{E34}
\int_{\mathcal{A}_{0}}\!\!\!&-\sum_{\alpha=1}^2\Big<\dfrac{1}{\bar{j}_{0}}\dfrac{\partial}{\partial \xi^{\alpha}}\Big(\bar{j}_{0}\dfrac{\partial \ell_{0}}{\partial \boldsymbol{\zeta}_{t\alpha}}\Big)-\ad^{*}_{\boldsymbol{\zeta}_{t\alpha}}\dfrac{\partial \ell_{0}}{\partial \boldsymbol{\zeta}_{t\alpha}}, \boldsymbol{\kappa}\Big>\bar{j}_{0}d\mathcal{A}\nonumber \\ &=-\int_{\mathcal{A}_{0}}\Big<\mathbf{\bar{F}}^{ext},\boldsymbol{\kappa} \Big> \bar{j}_{0}d\mathcal{A}-\int_{\partial \mathcal{A}_{0}}\Big<\mathbf{F}^{ext}+\dfrac{\partial \ell_{0}}{\partial \boldsymbol{\zeta}_{t1}}\eta_{1}+\dfrac{\partial \ell_{0}}{\partial \boldsymbol{\zeta}_{t2}}\eta_{2},\boldsymbol{\kappa} \Big>\bar{j}_{0}ds.
\end{align}
The strong form of \emph{Euler-Lagrange equation} or \emph{Euler-Poincaré reduction} \cite{poincare1901} of the Cosserat shell, can be written as
\begin{align}\label{E35}
\boldsymbol{\kappa}:\quad \sum_{\alpha=1}^2\dfrac{1}{\bar{j}_{0}}\dfrac{\partial}{\partial \xi^{\alpha}}\Big(\bar{j}_{0}\dfrac{\partial \ell_{0}}{\partial \boldsymbol{\zeta}_{t\alpha}}\Big)-\ad_{\boldsymbol{\zeta}_{t\alpha}}^{*}(\dfrac{\partial \ell_{0}}{\partial \boldsymbol{\zeta}_{t\alpha}})=\mathbf{\bar{F}}^{ext},
\end{align}
with boundary condition
\begin{align}\label{E36}
\sum_{\alpha=1}^2\left(\dfrac{\partial \ell_{0}}{\partial \boldsymbol{\zeta}_{t\alpha}}\eta_{\alpha}\right)_{\partial \mathcal{A}_{0}}=-\mathbf{F}^{ext}\qquad or \qquad \boldsymbol{\kappa}|_{\mathcal{\partial A}_{0}}=0,
\end{align}
Here, the unit outward normal on the boundaries of the region $\A_0$ in $\mathbb{R}^2$ is written as $\eta=(\eta_1,\eta_2)$. It is important to mention that both balances of momentum and moment of momentum in the category of the Cosserat shell are merged into Eq. (\ref{E35}). 

\begin{remark}
    The matrix form of the presented deformation quantities based on Eq. (\ref{E11}) can be written as
\begin{align}\label{E39}
\hat{\boldsymbol{\zeta}}_{t\alpha}=
\begin{pmatrix} 
\mathbf{R}_{t}^{T}\mathbf{R}_{t,\alpha} & \mathbf{R}_{t}^{T}\boldsymbol{\varphi}_{t,\alpha}  \\
0 & 0  
\end{pmatrix}=
\begin{pmatrix} 
\hat{\boldsymbol{\boldsymbol{\Gamma}}}_{\alpha} & \mathbf{C}_{\alpha}  \\
0 & 0 
\end{pmatrix}, \quad
\boldsymbol{\zeta}_{t\alpha}=
\begin{pmatrix} 
\mathbf{C}_{\alpha}  \\
\boldsymbol{\boldsymbol{\Gamma}}_{\alpha}
\end{pmatrix}.
\end{align}
By comparing the results in Eq. (\ref{E39}) with the results presented in \cite{cosserat1909,eringen1976,eringen1999} we conclude that $\mathbf{C}_{\alpha}$ and $\boldsymbol{\boldsymbol{\Gamma}}_{\alpha}$ are a set of Cosserat deformation tensors which are form-invariant under rigid body motions of the spatial frame of reference. $\mathbf{C}_{\alpha}$ is called the \emph{the Cosserat deformation tensor} and $\boldsymbol{\boldsymbol{\Gamma}}_{\alpha}$ is called the \emph{wryness tensor}. Without loss of generality, at the reference configuration if we assume 
\begin{align}\label{E40}
\mathbf{C}_{\alpha}=\boldsymbol{A}_{\alpha},\qquad \boldsymbol{\boldsymbol{\Gamma}}_{\alpha}=0,
\end{align}
then the corresponding momentum quantities can be presented by
\begin{align}\label{E43}
\dfrac{\partial \ell_{0}}{\partial \boldsymbol{\zeta}_{t\alpha}}=
\begin{pmatrix} 
-\mathbf{N}^{\alpha}  \\
-\mathbf{M}^{\alpha}
\end{pmatrix},
\end{align}
where $\mathbf{N}^{\alpha}$ is  the membrane stress and $\mathbf{M}^{\alpha}$ is the bending stress of the shell, in the reference configuration. The presented conjugate terms have the same basis as their kinematics conjugates.
\end{remark}

\begin{remark}
    Following Eq.~(\ref{E39}), due to the arbitrariness of the wryness tensor $\boldsymbol{\Gamma}_{\alpha}$, the component of this vector in the direction of the shell normal $\boldsymbol{n} = \frac{\boldsymbol{a}_1 \times \boldsymbol{a}_2}{\lvert \boldsymbol{a}_1 \times \boldsymbol{a}_2 \rvert}$ corresponds to the drilling rotation. However, this rotational degree of freedom cannot be represented in shell theories that employ only a single attached director.
\end{remark}

\subsection{Internal Energy}
In this study, we consider shells made out of hyper-elastic linear isotropic material with the local constitutive equation defined in the convected basis of the shell:
\begin{align}\label{eq:strain}
    \boldsymbol{S}^\alpha=\sum_{\beta=1}^2\boldsymbol{\mathcal{D}}^{\alpha \beta}\boldsymbol{\mathcal{E}}_\beta\in (\R^6)^*\cong\mathfrak{se}^*(3),\quad\quad\alpha=1,2
\end{align}
where $\boldsymbol{\mathcal{E}}_\beta$ is the $\beta=1,2$ column of $\boldsymbol{\mathcal{E}}$, $\boldsymbol{\mathcal{D}}^{\alpha \beta} \in \mathbb{R}^{6\times 6}$ are the Cosserat shell stiffness matrices and $\boldsymbol{S}^{\alpha}$ are the components of the stress tensor. For linear isotropic constitutive equation, we have the matrices
\begin{align}\label{E45-1}
\boldsymbol{\mathcal{D}}^{\alpha \beta}=\dfrac{Eh}{1-\nu^{2}}
\begin{pmatrix}
    \mathbb{D}_{1}^{\alpha \beta} & \mathbf{0}_{3\times3}\\
    \mathbf{0}_{3\times3} & \dfrac{h^2}{12}\mathbb{D}_{2}^{\alpha \beta}
\end{pmatrix},
\end{align}
where 
\begin{align}\label{E45-2}
\begin{split}
&\mathbb{D}_{1}^{11}=
\begin{pmatrix}
    H^{1111} & H^{1112} & 0\\
    H^{1211} & H^{1212} & 0\\
    0 & 0 & \dfrac{1-\nu}{2}A^{11}
\end{pmatrix}, \quad
\mathbb{D}_{2}^{11}=
\begin{pmatrix}
    H^{1111} & H^{1112} & 0\\
    H^{1211} & H^{1212} & 0\\
    0 & 0 & (1-\nu)A^{11}
\end{pmatrix},\\
&\mathbb{D}_{1}^{12}=
\begin{pmatrix}
    H^{1121} & H^{1122} & 0\\
    H^{1221} & H^{1222} & 0\\
    0 & 0 & \dfrac{1-\nu}{2}A^{12}
\end{pmatrix}, \quad
\mathbb{D}_{2}^{12}=
\begin{pmatrix}
    H^{1121} & H^{1122} & 0\\
    H^{1221} & H^{1222} & 0\\
    0 & 0 & (1-\nu)A^{12}
\end{pmatrix},\\
&\mathbb{D}_{1}^{21}=
\begin{pmatrix}
    H^{2111} & H^{2112} & 0\\
    H^{2211} & H^{2212} & 0\\
    0 & 0 & \dfrac{1-\nu}{2}A^{21}
\end{pmatrix}, \quad
\mathbb{D}_{2}^{21}=
\begin{pmatrix}
    H^{2111} & H^{2112} & 0\\
    H^{2211} & H^{2212} & 0\\
    0 & 0 & (1-\nu)A^{21}
\end{pmatrix},\\
&\mathbb{D}_{1}^{22}=
\begin{pmatrix}
    H^{2121} & H^{2122} & 0\\
    H^{2221} & H^{2222} & 0\\
    0 & 0 & \dfrac{1-\nu}{2}A^{22}
\end{pmatrix}, \quad
\mathbb{D}_{2}^{22}=
\begin{pmatrix}
    H^{2121} & H^{2122} & 0\\
    H^{2221} & H^{2222} & 0\\
    0 & 0 & (1-\nu)A^{22}
\end{pmatrix}.
\end{split}
\end{align}
Here, $H^{\alpha \beta \gamma \rho}=\nu A^{\alpha \beta}A^{\gamma \rho}+(1-\nu)A^{\alpha \gamma}A^{\beta \rho}$ for $\alpha,\beta,\gamma,\rho=1,2$, such that $A^{\alpha \beta}$ denote the elements of the inverse matrix of the $2\times 2$ matrix $[A_{\alpha \beta}]=[\boldsymbol{A}_{\alpha}\cdot \boldsymbol{A}_{\beta}]$, $E$ and $\nu$ are Youngs moduli and Poisson ratio respectively and $h$ is shell thickness \cite{sansour1995co}.
Therefore, the Lagrangian of the homogeneous and isotropic Cosserat shell consisting of the internal potential energy can be written as
\begin{align}\label{E45}
\ell_{0}(\boldsymbol{\zeta}_{t1}, \boldsymbol{\zeta}_{t2})=-\dfrac{1}{2}\sum_{\alpha=1}^2\Big<\boldsymbol{S}^{\alpha},\boldsymbol{\mathcal{E}}_\alpha\Big>,
\end{align}
such that we have $\boldsymbol{S}^{\alpha}=-\frac{\partial\ell_0}{\partial \boldsymbol{\zeta}_{t\alpha}}$.


\subsection{Magnetized Cosserat Shell Body}
Considering a fixed (remanent) magnetization field per unit volume $\mathbf{B}^{r}\in\R^3$ in a continuum under the action of an externally applied field $\mathbf{B}^{a}\in\R^3$, the interaction is a force per unit volume
\begin{align}\label{E46}
\mathbf{f}=\dfrac{1}{\mu_{0}}\mathbf{B}^{r}\cdot \nabla\mathbf{B}^{a}\in\R^3,
\end{align}
and a couple per unit volume
\begin{align}\label{E47}
\mathbf{m}=\dfrac{1}{\mu_{0}}\mathbf{B}^{r}\times \mathbf{B}^{a}\in\R^3,
\end{align}
where $\mu_{0}$ is the permeability of space and $\nabla$ is the spatial gradient operator \cite{brown1966}. Here, the operators $\cdot$ and $\times$ are the usual dot and cross product defined on $\R^3$.

In this paper, a thin Cosserat shell with small-scale hard magnetic particles distributed homogeneously over the shell mid-surface is considered. The particles are fixed to the shell.
Therefore, the shell comes with a remanent magnetization given by the field $\mathbf{B}^{r}_{0}(\xi^1,\xi^2)\in\R^3$ in the inertial coordinate frame per unit surface of the reference configuration. The field is constant in the local material frames. Upon deformation, the remnant magnetization per unit \emph{undeformed surface} is then given by
\begin{align}\label{E48}
\mathbf{B}^{r}_{t}=\mathbf{R}_{t}\mathbf{R}_{0}^T\mathbf{B}^{r}_{0}\in\R^3.
\end{align}
Assuming, for specificity, that the external magnetic field $\mathbf{B}^{a}$ is spatially constant, the magnetic body force vanishes, while the magnetic body couple per unit undeformed surface in the inertial frame is given by 
\begin{align}\label{E49}
\mathbf{m}_t=\dfrac{1}{\mu_{0}}\mathbf{B}^{r}_{t}\times \mathbf{B}^{a}\in\R^3.
\end{align}
The external virtual work of the magnetic couple for a reference configuration is then obtained as
\begin{align}\label{E50}
G_{m}(\mathbf{g}_{t},{\boldsymbol{\kappa}})=\int_{\mathcal{A}_0}\Big<\mathbf{R}_{t}^T\mathbf{m}_t,\boldsymbol{\kappa}_{R} \Big>\bar{j}_{0}d\mathcal{A},
\end{align} 
where ${\boldsymbol{\kappa}}_{R}\in\R^3$ is defined by the following equation 
\begin{align}\label{E51}
\hat{\boldsymbol{\kappa}}=
\begin{pmatrix} 
\hat{\boldsymbol{\kappa}}_{R} & \boldsymbol{\kappa}_{\varphi}  \\
0 & 0  
\end{pmatrix}=
\begin{pmatrix} 
\mathbf{R}_t^{T}\delta \mathbf{R}_t & \mathbf{R}^{T}_t\delta \boldsymbol{\varphi}_t  \\
0 & 0 
\end{pmatrix}.
\end{align}
\section{Linearization of the Weak Form of Balance Equations}\label{sec:lin}
Based on Eq. (\ref{E33}), the Cosserat shell's weak form of balance equation can be written as
\begin{align}\label{E61}
G_{tot}(\mathbf{g}_{t},{\boldsymbol{\kappa}})\coloneqq G_{int}(\mathbf{g}_{t},{\boldsymbol{\kappa}})+\delta W_{ext}(\mathbf{g}_{t},{\boldsymbol{\kappa}})+G_{m}(\mathbf{g}_{t},{\boldsymbol{\kappa}})=0,
\end{align}
where the functional
\begin{align}\label{E62}
G_{int}(\mathbf{g}_{t},{\boldsymbol{\kappa}})=\int_{\mathcal{A}_{0}}\Big(\sum_{\alpha=1}^2\Big<\boldsymbol{S}^{\alpha},\dfrac{\partial \boldsymbol{\kappa}}{\partial \xi^{\alpha}}+\ad_{\boldsymbol{\zeta}_{t\alpha}} \boldsymbol{\kappa}\Big>\Big)\bar{j}_{0}d\mathcal{A},
\end{align}
and $\delta W_{ext}$ and $G_m$ are found from Eqs. \eqref{E32} and \eqref{E50}, receptively.
The weak form presented constitutes a nonlinear equilibrium equation for a Cosserat shell. For numerical implementation via the finite element method, it is necessary to compute the linearization of this weak form about a given configuration $\bar{\mathbf{g}}_t(\xi^1,\xi^2)$ in an arbitrary direction ${\boldsymbol{\eta}}(\xi^1,\xi^2)\in \R^6\cong \se$, specified by a field of Lie algebra elements on $\mathcal{B}_t$. 

To this end, we introduce the perturbed configuration of the Cosserat shell as
\begin{align}\label{E63}
\mathbf{g}_{\varepsilon}=\bar{\mathbf{g}}_t\exp(\varepsilon \hat{\boldsymbol{\eta}}),
\end{align}
where $\varepsilon\in\R$ is the scalar perturbation parameter. Accordingly, for any differentiable function $f\colon\mathcal{Q}\rightarrow\R$, we define the linearization operator $\mathcal{L[\cdot]}$ evaluated at $\bar\g_t\in\mathcal{Q}$ in the direction of $\boldsymbol{\eta}\in\R^6$ as
\begin{align}
    \mathcal{L}[f](\mathbf{g}_t)\coloneqq f(\mathbf{g}_t)+\left.\frac{\partial}{\partial \varepsilon}\right|_{\varepsilon=0}f(\mathbf{g}_\varepsilon).
\end{align}
In the following, we derive the linearized form of each term in Eq. \eqref{E61}, based on this formulation.


\subsection{Linearization of Internal Energy}

The linearization of the functional $G_{int}$ about the configuration $\bar{\mathbf{g}}_{t}$ is then expressed as 
\begin{align}\label{E65}
\mathcal{L}[G_{int}](\bar{\mathbf{g}}_{t},{\boldsymbol{\kappa}})=G_{int}(\bar{\mathbf{g}}_{t},\boldsymbol{\kappa})+\left.\dfrac{\partial}{\partial \varepsilon}\right|_{\varepsilon=0}
G_{int}(\mathbf{g}_{\varepsilon},\boldsymbol{\kappa}).
\end{align}
In this formulation, the first term on the right-hand side corresponds to the residual (unbalanced) internal forces, while the second term yields the directional derivative of the internal virtual work required for assembling the tangent stiffness matrix.
For brevity in future derivations, we define the following operator:  
\begin{align}\label{E66}
\mathbf{K}_{\alpha}=\Big[\dfrac{\partial}{\partial \xi^{\alpha}}\mathbf{I}_{6\times6}+\ad_{\boldsymbol{\zeta}_{t\alpha}}\Big].
\end{align}

\begin{proposition}\label{prop:1}
The second term of Eq. (\ref{E65}) can be calculated by
\begin{align}\label{E67}
\left.\dfrac{\partial}{\partial \varepsilon}\right|_{\varepsilon=0}
\!\!\!\!G_{int}(\mathbf{g}_{\varepsilon},\boldsymbol{\kappa})=\int_{\mathcal{A}_{0}}\sum_{\alpha=1}^2\Big(\Big<\sum_{\beta=1}^2\boldsymbol{\mathcal{D}}^{\alpha \beta}\bar{\mathbf{K}}_{\beta}\boldsymbol{\eta},\bar{\mathbf{K}}_{\alpha}\boldsymbol{\kappa}\Big>-\Big<\tilde{\ad}_{\bar{\boldsymbol{S}}^{\alpha}}\boldsymbol{\kappa},\bar{\mathbf{K}}_{\alpha}\boldsymbol{\eta}\Big>\Big)\bar{j}_{0}d\mathcal{A},
\end{align}
where the bar means evaluation of states and operators at $\g_t=\bar\g_t$ and $\boldsymbol{\zeta}_{t\alpha}=\bar{\boldsymbol{\zeta}}_{t\alpha}\coloneqq\bar\g_t^{-1}\dfrac{\partial\bar\g_t}{\partial\xi^\alpha}$. Further, the term $\tilde{\ad}_{\bar{\boldsymbol{S}}_{\alpha}}$ is calculated based on Eq. \eqref{E6-2ss}. 

\end{proposition}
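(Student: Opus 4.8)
The plan is to differentiate the compact form of the internal virtual work $G_{int}(\mathbf{g}_t,\boldsymbol{\kappa})=\int_{\mathcal{A}_0}\sum_{\alpha=1}^2\langle\boldsymbol{S}^\alpha,\mathbf{K}_\alpha\boldsymbol{\kappa}\rangle\,\bar{j}_0\,d\mathcal{A}$ — obtained from \eqref{E62} using the operator \eqref{E66} — along the curve $\mathbf{g}_\varepsilon=\bar{\mathbf{g}}_t\exp(\varepsilon\hat{\boldsymbol{\eta}})$ of \eqref{E63}, holding fixed the body-frame virtual displacement $\boldsymbol{\kappa}$ (hence also $\partial_\alpha\boldsymbol{\kappa}$) and the reference surface measure $\bar{j}_0\,d\mathcal{A}$, the latter depending only on $\mathbf{g}_0$. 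Passing $\partial_\varepsilon$ under the integral and applying the product rule splits the derivative into a term in which $\boldsymbol{S}^\alpha$ is differentiated (the material contribution) and a term in which only the $\ad_{\boldsymbol{\zeta}_{t\alpha}}$ part of $\mathbf{K}_\alpha$ is differentiated (the geometric contribution).

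The crux is the linearization of the convected strains $\boldsymbol{\zeta}_{t\alpha}=(\mathbf{g}_t^{-1}\partial_\alpha\mathbf{g}_t)^\vee$ of \eqref{eq:zeta}. Substituting $\mathbf{g}_\varepsilon$ gives $\mathbf{g}_\varepsilon^{-1}\partial_\alpha\mathbf{g}_\varepsilon=\exp(-\varepsilon\hat{\boldsymbol{\eta}})\,\hat{\bar{\boldsymbol{\zeta}}}_{t\alpha}\,\exp(\varepsilon\hat{\boldsymbol{\eta}})+\exp(-\varepsilon\hat{\boldsymbol{\eta}})\,\partial_\alpha\exp(\varepsilon\hat{\boldsymbol{\eta}})$; differentiating at $\varepsilon=0$, the first summand yields the commutator $[\hat{\bar{\boldsymbol{\zeta}}}_{t\alpha},\hat{\boldsymbol{\eta}}]$, i.e. $\ad_{\bar{\boldsymbol{\zeta}}_{t\alpha}}\boldsymbol{\eta}$ in vector form, while the second — which must \emph{not} be discarded, since the direction $\boldsymbol{\eta}$ is a field over the surface rather than a constant — yields $\partial_\alpha\hat{\boldsymbol{\eta}}$. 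Hence $\left.\partial_\varepsilon\right|_0\boldsymbol{\zeta}_{t\alpha}=\partial_\alpha\boldsymbol{\eta}+\ad_{\bar{\boldsymbol{\zeta}}_{t\alpha}}\boldsymbol{\eta}=\bar{\mathbf{K}}_\alpha\boldsymbol{\eta}$, mirroring the admissible-variation formula \eqref{eq:var-zeta}. Since $\boldsymbol{\mathcal{E}}_\beta=\boldsymbol{\zeta}_{t\beta}-\boldsymbol{\zeta}_{0\beta}$ from \eqref{E16} with the reference strains fixed, the constitutive law \eqref{eq:strain} gives $\left.\partial_\varepsilon\right|_0\boldsymbol{S}^\alpha=\sum_{\beta=1}^2\boldsymbol{\mathcal{D}}^{\alpha\beta}\bar{\mathbf{K}}_\beta\boldsymbol{\eta}$, producing exactly the first term of \eqref{E67}. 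For the geometric part, linearity of $\ad$ in its subscript gives $\left.\partial_\varepsilon\right|_0(\ad_{\boldsymbol{\zeta}_{t\alpha}}\boldsymbol{\kappa})=\ad_{\bar{\mathbf{K}}_\alpha\boldsymbol{\eta}}\,\boldsymbol{\kappa}$, so this contribution is $\langle\bar{\boldsymbol{S}}^\alpha,\ad_{\bar{\mathbf{K}}_\alpha\boldsymbol{\eta}}\,\boldsymbol{\kappa}\rangle$.

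The last step recasts this pairing. By the defining relations of the dual map and of $\tilde{\ad}$, $\langle\bar{\boldsymbol{S}}^\alpha,\ad_{\bar{\mathbf{K}}_\alpha\boldsymbol{\eta}}\,\boldsymbol{\kappa}\rangle=\langle\ad^*_{\bar{\mathbf{K}}_\alpha\boldsymbol{\eta}}\bar{\boldsymbol{S}}^\alpha,\boldsymbol{\kappa}\rangle=\langle\tilde{\ad}_{\bar{\boldsymbol{S}}^\alpha}(\bar{\mathbf{K}}_\alpha\boldsymbol{\eta}),\boldsymbol{\kappa}\rangle$, and the matrix \eqref{E6-2ss} representing $\tilde{\ad}_{\bar{\boldsymbol{S}}^\alpha}$ is skew-symmetric because its $3\times3$ blocks $\hat{\mathbf{n}},\hat{\mathbf{m}}$ are, so $\langle\tilde{\ad}_{\bar{\boldsymbol{S}}^\alpha}(\bar{\mathbf{K}}_\alpha\boldsymbol{\eta}),\boldsymbol{\kappa}\rangle=-\langle\tilde{\ad}_{\bar{\boldsymbol{S}}^\alpha}\boldsymbol{\kappa},\bar{\mathbf{K}}_\alpha\boldsymbol{\eta}\rangle$, which is the second term of \eqref{E67}. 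Summing the material and geometric contributions over $\alpha$ and restoring the fixed factor $\bar{j}_0$ completes the argument.

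I expect the main obstacle to be the Lie-group differentiation in the second paragraph: correctly handling the non-commuting exponential factors and retaining the $\partial_\alpha\exp(\varepsilon\hat{\boldsymbol{\eta}})$ term, since it is precisely this term that makes $\bar{\mathbf{K}}_\alpha$ enter symmetrically in the material stiffness and keeps the computation consistent with the skew-symmetry identity used for the geometric stiffness. A minor but necessary point is to fix the linearization convention — that $\boldsymbol{\kappa}$, $\partial_\alpha\boldsymbol{\kappa}$, and $\bar{j}_0\,d\mathcal{A}$ are frozen under $\mathbf{g}_\varepsilon$ — so that no spurious terms from varying the test function or the reference metric appear.
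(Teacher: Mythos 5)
Your proposal is correct and follows essentially the same route as the paper's Appendix~B: you differentiate $G_{int}$ along $\mathbf{g}_\varepsilon=\bar{\mathbf{g}}_t\exp(\varepsilon\hat{\boldsymbol{\eta}})$, establish the key identity $\left.\partial_\varepsilon\right|_0\boldsymbol{\zeta}_{t\alpha}^\varepsilon=\bar{\mathbf{K}}_\alpha\boldsymbol{\eta}$ by the product rule on $\mathbf{g}_\varepsilon^{-1}\partial_\alpha\mathbf{g}_\varepsilon$ (correctly retaining the $\partial_\alpha\hat{\boldsymbol{\eta}}$ term since $\boldsymbol{\eta}$ is a field), and split into material and geometric contributions exactly as in Eqs.~(B4)--(B5). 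The only cosmetic difference is in the final reformulation of the geometric pairing: the paper first applies the antisymmetry $\ad_{\bar{\mathbf{K}}_\alpha\boldsymbol{\eta}}\boldsymbol{\kappa}=-\ad_{\boldsymbol{\kappa}}\bar{\mathbf{K}}_\alpha\boldsymbol{\eta}$, then takes the dual and invokes the definition of $\tilde{\ad}$, whereas you take the dual first and finish with the matrix skew-symmetry of $\tilde{\ad}_{\bar{\boldsymbol{S}}^\alpha}$ from Eq.~\eqref{E6-2ss}; these are equivalent statements of the same algebraic fact.
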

\begin{proof}
The detailed proof is presented in Appendix \ref{app:pro1}.
\end{proof}
\begin{remark}
The first term in Eq. (\ref{E67}) is called \emph{material} part of the tangent stiffness matrix, and the second term is considered as \emph{geometric} part of the tangent stiffness matrix. In Simo's terminology \cite{simo1986rod, simo1992sym}, the geometric stiffness matrix is a non-symmetric tangent stiffness matrix away from the shell's equilibrium, i.e., it has a nonzero skew-symmetric part. To prove this statement see Appendix \ref{Remark2}.  

\end{remark}
\subsection{Linearization of External Magnetic Energy}

To account for the contribution of magnetic couple effects in the numerical modeling of shell structures, we derive the linearized form of the virtual external magnetic work associated with a hard-magnetic soft Cosserat shell. This linearization is performed at a given configuration $\bar\g_t$ within the shell's configuration space, along a perturbation direction $\boldsymbol{\eta}=\begin{bmatrix}
    \boldsymbol{\eta}_\varphi\\ \boldsymbol{\eta}_R
\end{bmatrix}\in\R^6\cong \se$. Following the formulation provided in Eq. (\ref{E50}), the linearized virtual magnetic energy can be expressed as:
\begin{align}\label{E79}
\mathcal{L}[G_{m}](\bar{\mathbf{g}}_{t},{\boldsymbol{\kappa}})=G_{m}(\bar{\mathbf{g}}_{t},\boldsymbol{\kappa})+\left.\dfrac{\partial}{\partial \varepsilon}\right|_{\varepsilon=0}
G_{m}(\mathbf{g}_{\varepsilon},\boldsymbol{\kappa}).
\end{align}
By substituting Eqs. (\ref{E48}) and (\ref{E49}) into Eq. (\ref{E79}) and noting that the reference magnetic field $\mathbf{R}_0^T\mathbf{B}^{r}_{0}$ remains fixed, we obtain:
\begin{align}\label{E80}\nonumber
\mathcal{L}[G_{m}](\bar{\mathbf{g}}_{t},{\boldsymbol{\kappa}})&=\int_{\mathcal{A}_0}\Big<\dfrac{1}{\mu_{0}}(\mathbf{R}_0^T\mathbf{B}^{r}_{0})\times (\bar{\mathbf{R}}^T_t \mathbf{B}^{a}), \boldsymbol{\kappa}_{R}\Big>\bar{j}_0d\mathcal{A}
\\&+\int_{\mathcal{A}_0}\Big<\dfrac{1}{\mu_{0}}(\mathbf{R}_0^T\mathbf{B}^{r}_{0})^\wedge(\bar{\mathbf{R}}_t^T\mathbf{B}^{a})^\wedge \boldsymbol{\eta}_{R}, \boldsymbol{\kappa}_{R}\Big>\bar{j}_0d\mathcal{A}.
\end{align}
This expression captures both the zeroth-order virtual magnetic work and its first-order variation, which is essential for constructing the tangent operators used in the finite element formulation of magnetoelastic shell dynamic.

\section{Finite Element Formulation}\label{sec:fem}
In this section, we present the geometrically exact Lagrangian finite element formulation derived from the previously established weak form of the equilibrium equations under static conditions. The formulation leverages the proposed configuration space of the Cosserat shell, enabling interpolation on the Lie group $\SE$. In particular, we employ an isoparametric four-node finite element defined over the tangent bundle $T\SE$. This geometrically consistent approach to discretizing the Cosserat shell kinematics provides a robust framework that effectively mitigates shear-locking artifacts often encountered in conventional shell formulations. Further details regarding the discretization scheme and the update algorithm will be addressed in the following sections.

\subsection{Cosserat Shell Discretization}
Within the finite element framework, the natural coordinates $(\xi^{1},\xi^{2})\in\A$ describing the mid-surface of the Cosserat shell are mapped onto the reference unit square via bilinear interpolation functions. This yields the following isoparametric relations:
\begin{align}\label{E70-1}
\xi^{1}_{e} =\sum_{i=1}^{4}N^{i}(x,y)\xi_{i}^{1e}, \qquad \xi^{2}_{e} =\sum_{i=1}^{4}N^{i}(x,y)\xi_{i}^{2e},
\end{align}
where $\xi^{1}_{e}$ and $\xi^{2}_{e}$ denote the natural coordinates of a material point within the element $e$, and $\xi_{i}^{1e}$ and $\xi_{i}^{2e}$ are the corresponding nodal coordinates. The bilinear interpolation functions $N^{i}(x,y)$ are defined over the square $(x,y)\in[-1,1]\times [-1,1]$ as:
\begin{align}\label{E70-2}
N^{i}(x,y)=\dfrac{1}{4}(1+xx_{i})(1+yy_{i}), \quad (i=1,2,3,4),
\end{align}
with nodal positions $(x_{i},y_{i})\in\{(-1,-1), (1,-1), (1,1), (-1,1)\}$. The interpolation of the mid-surface quantities, namely the virtual displacement fields $\boldsymbol{\kappa}$ and $\boldsymbol{\eta}$ for variation and linearization, follows the standard isoparametric procedure, leading to: 
\begin{align}\label{E70-3}
\boldsymbol{\eta}_{e}=\sum_{i=1}^{4}N^{i}(x,y)\boldsymbol{\eta}^{e}_{i},\qquad \boldsymbol{\kappa}_{e}=\sum_{i=1}^{4}N^{i}(x,y)\boldsymbol{\kappa}^{e}_{i}
\end{align}
where $\boldsymbol{\eta}^{e}_{i}$ and $\boldsymbol{\kappa}^{e}_{i}$ represent the nodal incremental twist fields associated with element $e$. 

\subsection{Out-of-Balance Forces Computation and The Element Tangent Stiffness Matrices}
In this section, we derive the discrete element-level expressions for internal and external forces, along with the corresponding tangent stiffness matrices, using the proposed four-node isoparametric shell element. The out-of-balance internal force at the element $e$ is obtained by substituting Eq. (\ref{E70-3}) into the internal virtual work expression $G_{{int}}(\bar{\mathbf{g}}_{t},\boldsymbol{\kappa})$, yielding:
\begin{align}\label{E71}
\left.G_{{int}}\right|_e=&\int_{\mathcal{A}_{e0}}\sum_{\alpha=1}^{2}\Big<\bar{\boldsymbol{S}}^{\alpha}_{e},\bar{\mathbf{K}}_{e\alpha}\boldsymbol{\kappa}_{e}\Big>\bar{j}_{e0}d\mathcal{A}_{e}= \int_{\mathcal{A}_{e0}}\sum_{i=1}^{4}\sum_{\alpha=1}^{2}\Big<\bar{\boldsymbol{S}}^{\alpha}_{e},\bar{\mathbf{K}}_{e\alpha}^{i}\boldsymbol{\kappa}^{e}_{i}\Big>\bar{j}_{e0}d\mathcal{A}_{e}\nonumber \\&= \int_{\mathcal{A}_{e0}}\sum_{i=1}^{4}\sum_{\alpha=1}^{2}\Big<(\bar{\mathbf{K}}_{e\alpha}^{i})^{T}\bar{\boldsymbol{S}}^{\alpha}_{e},\boldsymbol{\kappa}^{e}_{i}\Big>\bar{j}_{e0}d\mathcal{A}_{e},
\end{align}
where $d\A_e=d\xi_e^1d\xi_e^2$, $\bar{j}_{e0}$ is the surface Jacobian at the element $e$ and the matrix
\begin{align}\label{E72}
\bar{\mathbf{K}}_{e\alpha}^{i} = \Big[\dfrac{\partial N^{i}}{\partial \xi^{\alpha}_e}\mathbf{I}_{6\times6}+N^{i}\ad_{\bar{\boldsymbol{\zeta}}_{t\alpha}^{e}}\Big].
\end{align}
Here, $\bar{\boldsymbol{\zeta}}_{t\alpha}^{e}$ is the deformation twist at the center of element $e$ and $\bar{\boldsymbol{S}}^\alpha_e\coloneqq \boldsymbol{\mathcal{D}}^{\alpha\beta}(\bar{\boldsymbol{\zeta}}_{t\alpha}^e-\boldsymbol{\zeta}_{0\alpha}^e)$. Integrals are evaluated on the element surface $\A_{e0}$.
Similarly, the virtual work of the external applied forces at the element level is given by:
\begin{align}\label{E73}
\left.\delta W_{ext}\right|_e = \int_{\mathcal{A}_{e0}}\sum_{i=1}^{4}\Big<N^{i}\bar{\mathbf{F}}^{ext},\boldsymbol{\kappa}_{i}^{e}\Big>\bar{j}_{e0}d\mathcal{A}_{e0} + \int_{\partial \mathcal{A}_{e0}}\sum_{i=1}^{4}\Big<N^{i}\mathbf{F}^{ext},\boldsymbol{\kappa}_{i}^{e}\Big>\bar{j}_{e0}ds_{e}.
\end{align}
By collecting terms, the total nodal unbalanced force vector for each element becomes:
\begin{align}\label{E74}
FU^{i}_{e} \coloneqq -\int_{\mathcal{A}_{e0}}N^{i}\bar{\mathbf{F}}^{ext}\bar{j}_{e0}d\mathcal{A}_{e} - \int_{\partial \mathcal{A}_{e0}}N^{i}\mathbf{F}^{ext}\bar{j}_{e0}ds_{e}-\int_{\mathcal{A}_{e0}}\sum_{\alpha=1}^{2}(\bar{\mathbf{K}}_{e\alpha}^{i})^{T}\bar{\boldsymbol{S}}^{\alpha}_{e}\bar{j}_{e0}d\mathcal{A}_{e}.
\end{align}
Substituting Eqs. (\ref{E70-3}) and (\ref{E72}) into the linearized internal virtual work leads to expressions for the material and geometric tangent stiffness matrices, respectively:
\begin{align}\label{E75}
K_{eM}^{ij} &\coloneqq \int_{\mathcal{A}_{e0}}\sum_{\alpha=1}^{2}\sum_{\beta=1}^{2}(\bar{\mathbf{K}}_{e\alpha}^{i})^{T}\boldsymbol{\mathcal{D}}^{\alpha \beta}\bar{\mathbf{K}}_{e\beta}^{j}\bar{j}_{e0}d\mathcal{A}_{e},\\ K_{eG}^{ij} &\coloneqq -\int_{\mathcal{A}_{e0}}\sum_{\alpha=1}^{2}N^{i}(\tilde{\ad}_{\bar{\boldsymbol{S}}_{\alpha}})^{T}\bar{\mathbf{K}}_{e\alpha}^{j}\bar{j}_{e0}d\mathcal{A}_{e}.
\end{align}
The contributions of the magnetic field to the unbalanced force vector and stiffness matrix are respectively obtained by substituting Eq. (\ref{E70-3}) into the linearized magnetic virtual work (Eq. \eqref{E80}):
\begin{align}\label{E81}
KM_{e}^{ij} \coloneqq \int_{\mathcal{A}_{e0}}\dfrac{1}{\mu_{0}}N^{i}(\mathbf{R}_0^T\mathbf{B}^{r}_{0})^\wedge(\bar{\mathbf{R}}_t^T\mathbf{B}^{a})^\wedge N^{j}\bar{j}_{e0}d\mathcal{A}_{e},
\end{align}
\begin{align}\label{E82}
FM_{e}^{i} \coloneqq \int_{\mathcal{A}_{e0}}\dfrac{1}{\mu_{0}}(\mathbf{R}_0^T\mathbf{B}^{r}_{0})\times (\bar{\mathbf{R}}^T_t \mathbf{B}^{a})N^{i}\bar{j}_{e0}d\mathcal{A}_{e}.
\end{align}
Neglecting inertia terms and assembling the expressions from Eqs. (\ref{E74}), (\ref{E75}), (\ref{E81}), and (\ref{E82}) into the discrete weak form (Eq. \ref{E61}), the element-level balance equation reads:
\begin{align}\label{E83}
\sum_{j=1}^{4}(K_{eM}^{ij}+K_{eG}^{ij}-KM_{e}^{ij})\boldsymbol{\eta}_{j}^{e}=FU^{i}_{e}-FM_{e}^{i}. 
\end{align}
Finally, by assembling the contributions from all finite elements, the global system of equilibrium equations is obtained. Introducing the vector of nodal incremental displacements and rotations, $\boldsymbol{\eta}T \in \mathbb{R}^{6N_n}$, where $N_n$ denotes the total number of nodes, the resulting system takes the form:
\begin{align}\label{E84}
(K_{MG}-KM)\boldsymbol{\eta}_{T}=FU-FM, 
\end{align}
where $K_{MG}$ represents the global stiffness matrix, assembled from both material and geometric contributions, and is symmetric at equilibrium. The vector $FU$ denotes the assembled unbalanced wrenches resulting from internal stresses and applied loads. The matrix $KM$ corresponds to the assembled magnetic stiffness matrix, and $FM$ represents the generalized unbalanced magnetic couple forces arising from magneto-mechanical interactions.
\begin{remark}
    To ensure that the numerical results at each iteration remain close to the correct solution, the norm of the residual of the equilibrium equation (\ref{E84}) should approach zero. This criterion must be verified at every iteration.
\end{remark}


To advance the iterative solution procedure, the shell configuration and associated deformation twist field are updated at each iteration step. The corresponding finite element algorithm, incorporating spatial integration over the shell domain, is outlined below.

\begin{algorithm}
\caption{Spatial Integration Scheme for Geometrically Exact Cosserat Shell FEM}
\begin{algorithmic}[1]
\Require Initial configuration $\mathbf{g}_0$, deformation twists $\boldsymbol{\zeta}_{0\alpha}$
\Ensure Updated configuration

\For{each iteration $I$}
    \Repeat
        \State Compute the stress resultants for use in the stiffness matrix calculation:
        \[
        \boldsymbol{S}^{\alpha} = \boldsymbol{\mathcal{D}}^{\alpha \beta} (\boldsymbol{\zeta}_{\beta}^i - \boldsymbol{\zeta}_{0\beta})
        \]
        \State Compute and assemble stiffness, magnetic matrices, and out-of-balance forces ($K_{MG}, KM, FU, FM$)
        \State Perform spatial integration at Gauss points and assemble global matrices
        \State Solve Eq.~(\ref{E84}) for $\boldsymbol{\eta}_{T}=(\boldsymbol{\eta}_1^T,\boldsymbol{\eta}_2^T,\dots,\boldsymbol{\eta}_e^T)^T$
        \State Update configuration:
        \[
        \mathbf{g}_{te} \gets \mathbf{g}_{te} \cdot \exp(\hat{\boldsymbol{\eta}}_e)
        \]
        \State Update deformation twist:
        \[
        \hat{\boldsymbol{\zeta}}_{\alpha e} \gets 
        \exp(-\hat{\boldsymbol{\eta}}_e) \hat{\boldsymbol{\zeta}}_{\alpha e} \exp(\hat{\boldsymbol{\eta}}_e) 
        + \exp(-\hat{\boldsymbol{\eta}}_e) \frac{\partial \exp(\hat{\boldsymbol{\eta}}_e)}{\partial \xi^\alpha}
        \]
        \State Compute convergence norm:
        \[
        \|FU-FM\| \gets \left( \sum_{k=1}^{N_n} \|(FU-FM)_k\|^2 \right)^{1/2}
        \]
    \Until{$\|FU-FM\| \leq \text{Tolerance}$}
    \State Update iteration: $I \gets I + 1$
\EndFor

\end{algorithmic}
\end{algorithm}


\section{Benchmark Numerical Studies and Experimental Validations}\label{sec:exp}
In this section, we explore a set of numerical simulations demonstrating the effectiveness of the previously explained formulation. In all simulations, a two-dimensional four-node element with $2\times 2$ Gauss integration rule has been employed. 

\begin{remark}
Shear-locking is a well-known numerical artifact in finite element analysis, particularly in the modelling of thin shell or beam structures. It arises when the discrete formulation over-constrains the transverse shear deformation, leading to an artificially stiff response and poor convergence, especially as the element thickness decreases. To mitigate the shear-locking phenomenon in the proposed formulation, I employ the material twist values evaluated at the element centroid nodes to compute the updated stiffness matrix, rather than using conventional reduced integration techniques. This approach allows for a more accurate representation of the deformation modes without sacrificing stability. Numerical results demonstrate that this technique achieves consistent convergence as the number of elements or nonlinear solver iterations increases, validating the effectiveness of the method in eliminating shear-locking effects.
\end{remark}

\begin{remark}
The strategy used to alleviate shear locking in this work is conceptually inspired by the constant curvature method, which assumes that strain measures, such as curvature or twist, remain constant over an element. In the proposed formulation, this idea is applied at the element level by evaluating the strain twist, particularly the material twist, at the centroid of each element and extending this value uniformly throughout the element for the purpose of constructing the tangent stiffness matrix. This approach reduces the artificial stiffness typically introduced by conventional finite element formulations, allowing for a more accurate representation of transverse shear deformation. As a result, the method provides a simple yet effective way to improve numerical performance and solution quality in problems involving thin shells.
\end{remark}

\subsection{A Cantilever Plate Subject to an End Shear Force}
The large deflection of a cantilever plate with a Young's modulus of $E = 200 GPa$, a material length of $1000 mm$, a thickness of $10 mm$, and a width of $200 mm$ \cite{beheshti2016} under an applied non-follower shear force is investigated. The numerical results are compared with available data in the literature in Figure \ref{fig33}. In this study, we consider $20$ four-node elements.
\begin{figure}[h]
\includegraphics[scale=0.4]{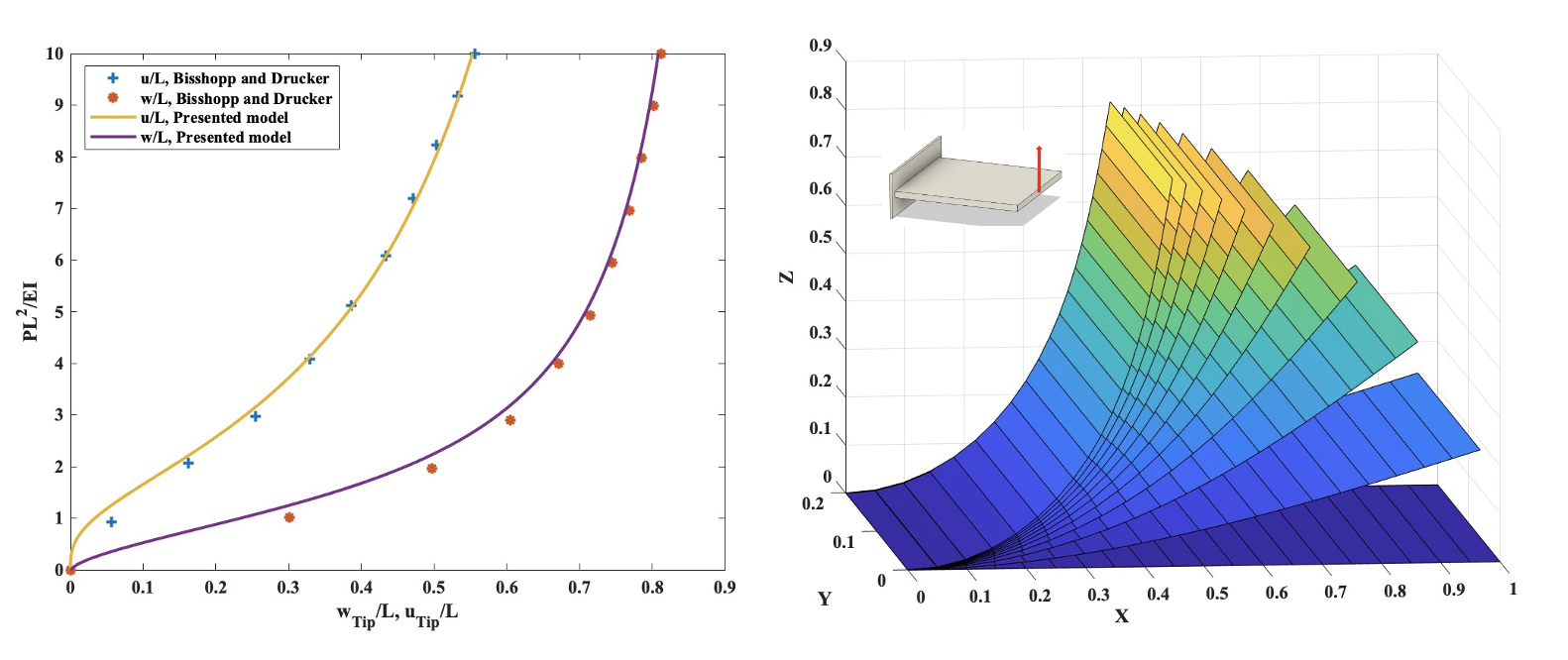}
\centering
\caption{Load-deflection of the cantilever plate}
\label{fig33}
\end{figure}
\subsection{Pure Bending of a Cantilever Plate}
In this problem, we examine a straight Cosserat shell characterized by material properties $E=12\times 10^{6}$. This shell is exposed to a concentrated end moment $M$. When $M=2\pi EI/L$ (with $L$ representing the length of the shell and $I$ denoting the moment of inertia), the shell undergoes a complete transformation into a circular shape. The numerical results for this scenario are depicted in Figure \ref{fig3}, considering $L=10$, width $w=1$, and thickness $h=0.1$.

\begin{figure}[h]
\includegraphics[scale=0.12]{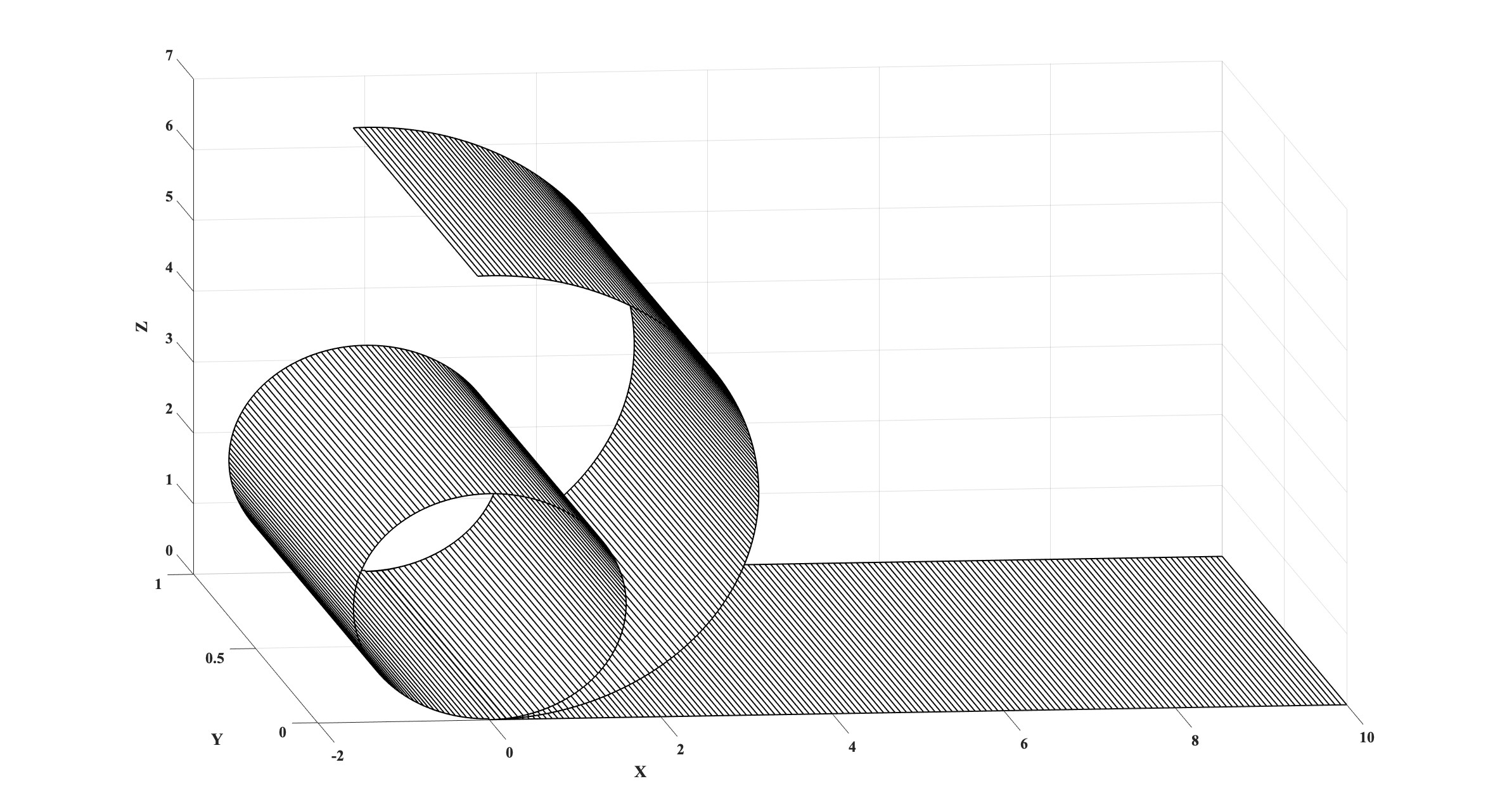}
\centering
\caption{Roll-up of the clamped plate for the end rotation $2\pi$ with 150 elements}
\label{fig3}
\end{figure}
To showcase the effectiveness of our proposed model, we investigated the response of a clamped plate subjected to moments equal to twice ($M=2\pi EI/L$) and treble of this value. Figures \ref{fig4} and \ref{fig44} display the resulting configurations of the shell and the tip deflection, respectively. The results in Figure \ref{fig44} are compared with available numerical results, indicating a good agreement. Additionally, we examined the behavior of the shell under drilling rotation by using a similar cantilever configuration subjected to in-plane bending. In this case, the applied load is in the form of a drilling moment. It is important to mention that we used the reduced integration technique solely for in-plane bending to alleviate in-plane shear locking. Numerical results for the Drilling moment are presented in Figure \ref{fig5} for $E=1200$, $\nu=0$, $h=1$, $w=1$, and $L=10$.
\begin{figure}[hbt!]
\includegraphics[scale=0.4]{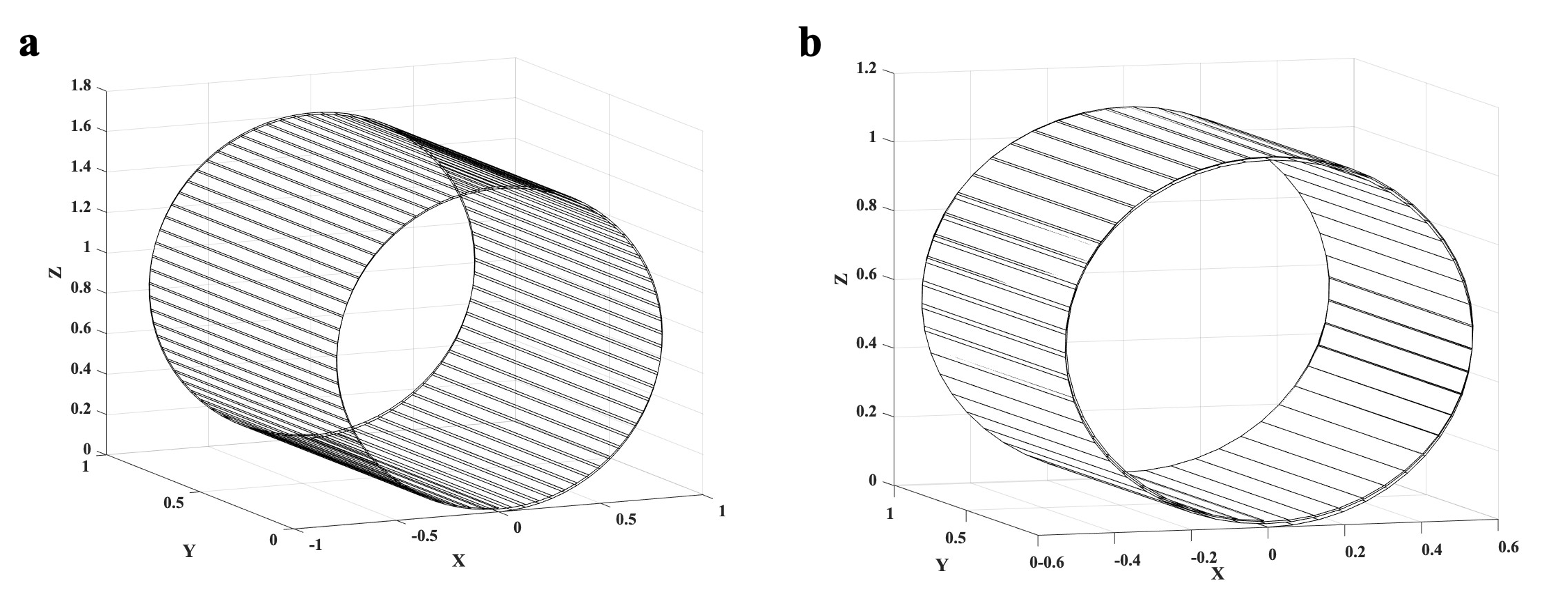}
\centering
\caption{Roll-up of the clamped plate for the end rotation: (a) $4\pi$ with 150 elements and (b) $6\pi$ with 150 elements}
\label{fig4}
\end{figure}

\begin{figure}[hbt!]
\includegraphics[scale=0.3]{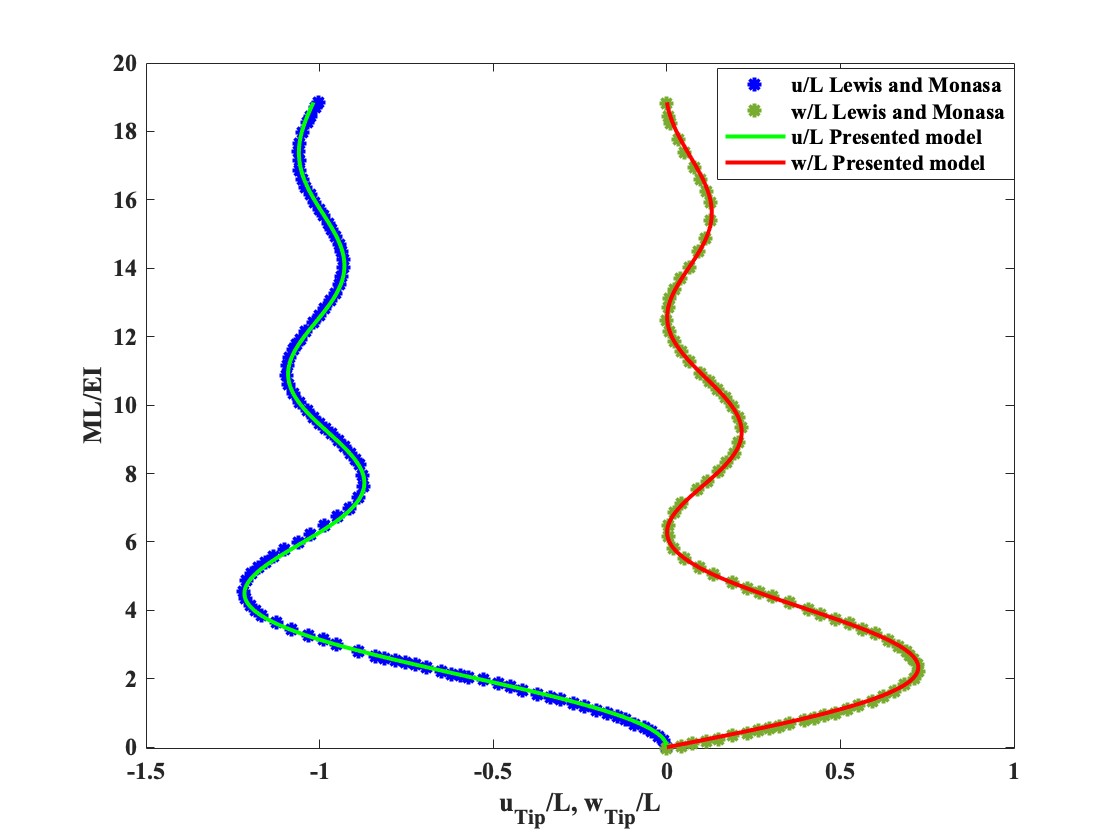}
\centering
\caption{Load-deflection curves of the clamped plate subject to an end
moment for a rotation of $6\pi$ with 150 elements}
\label{fig44}
\end{figure}

\begin{figure}[hbt!]
\includegraphics[scale=0.45]{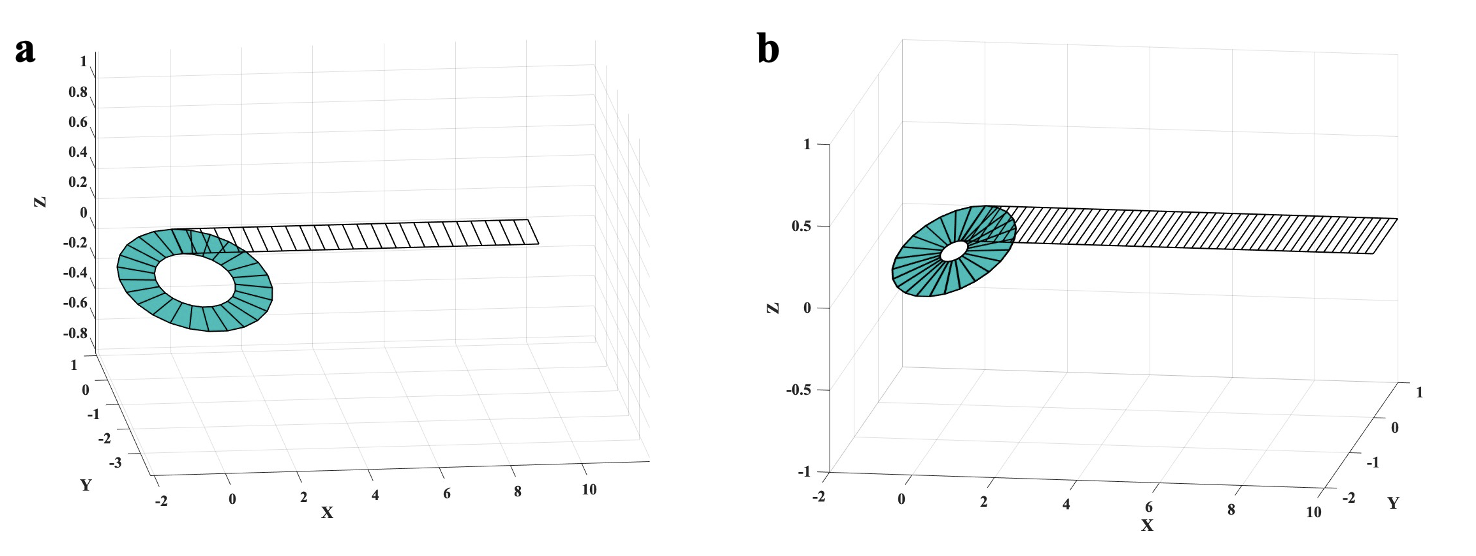}
\centering
\caption{In-plane bending of the clamped plate for the end rotation: (a) $2\pi$ with 25 elements and (b) $4\pi$ with 50 elements}
\label{fig5}
\end{figure}

\subsection{Torsion of a Plate}
To assess the effectiveness of the proposed approach in handling significant rotations, displacements, and ensuring numerical stability, we examined the behavior of the Cosserat shell under torsional stress. In this study, a torsional moment was applied to the end of a flat plate, resulting in relative torsional rotations of approximately $180^{\circ}$, $360^{\circ}$, and $540^{\circ}$. The material properties considered were $E=12\times 10^6$ and $\nu = 0.3$, with $L=1$, width is $w=0.25$ and thickness $h=0.1$. The accompanying Figure \ref{fig6} depicts the deformed configurations.
\begin{figure}[hbt!]
\includegraphics[scale=0.45]{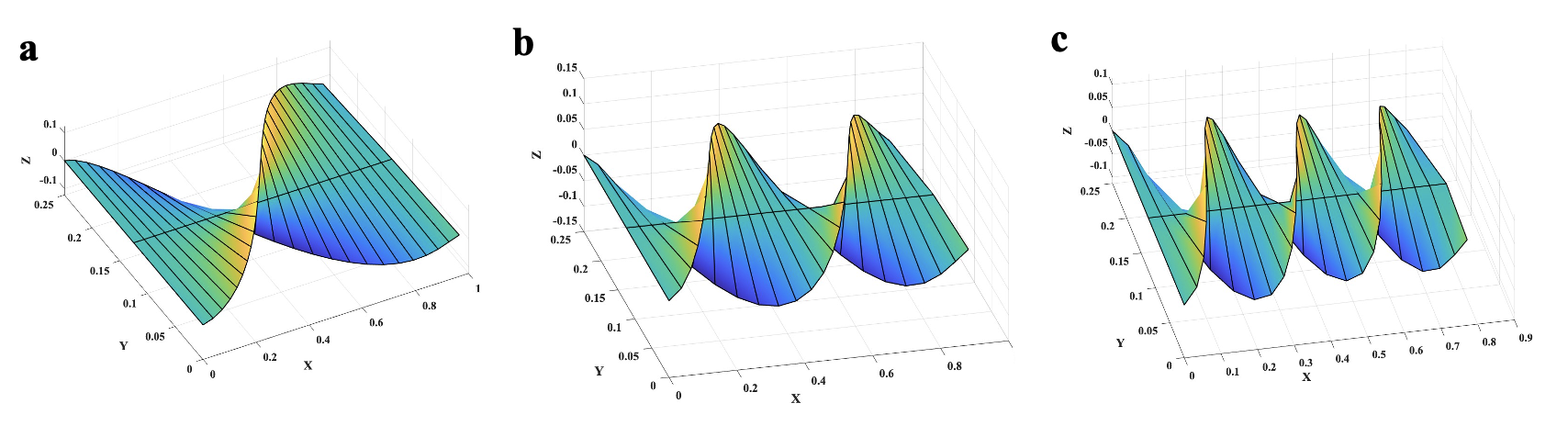}
\centering
\caption{The torsion of the clamped plate for the end rotation: (a) $\pi$ with 50 elements, (b) $2\pi$ with 50 elements, and (c) $3\pi$ with 50 elements}
\label{fig6}
\end{figure}


To demonstrate the capability of the proposed model in simulating large rotations, a combination of torsional and bending moments is applied to the boundary of the clamped plate. The results are shown in Figure~\ref{figmix-3}, using material parameters $E = 12 \times 10^6$ and $\nu = 0.3$, with geometric dimensions $L = 10$, width $w = 1$, and thickness $h = 0.1$. In this simulation, a mesh of 50 elements in the $X$ direction and 20 elements in the $Y$ direction is used.
\begin{figure}[H]
\includegraphics[scale=0.25]{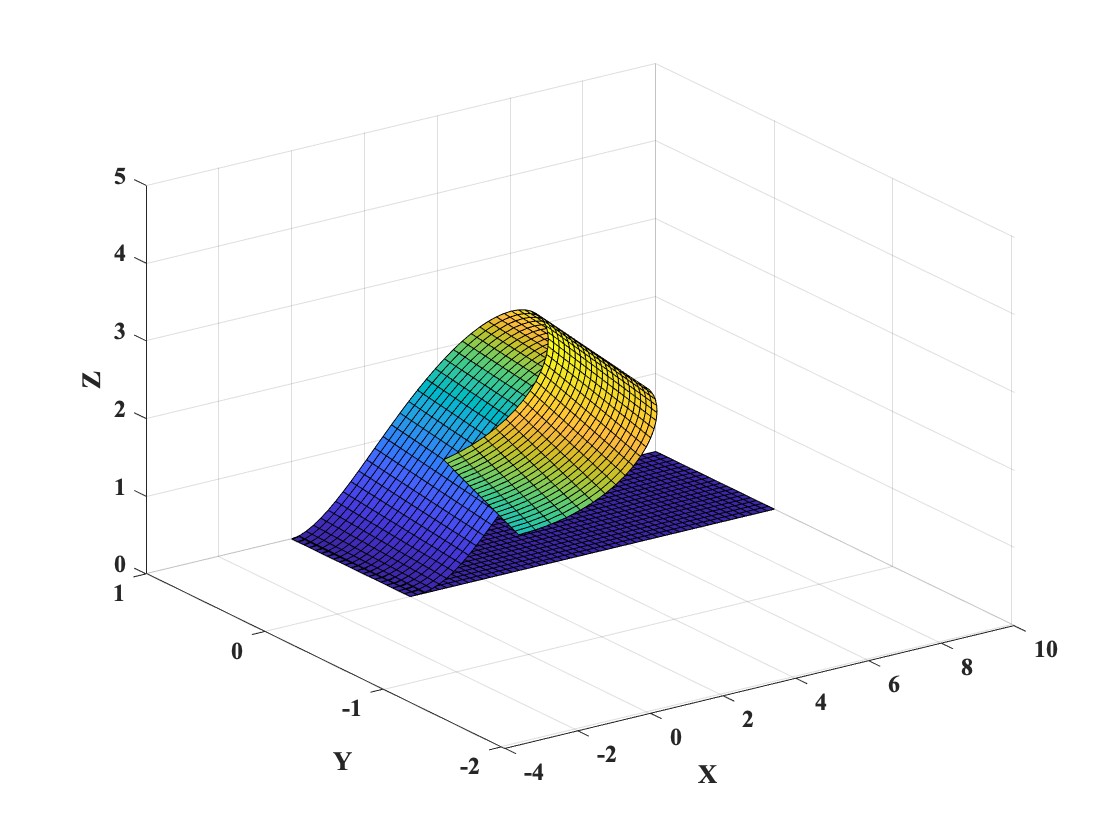}
\centering
\caption{Mixed torsion and roll-up of the clamped plate}
\label{figmix-3}
\end{figure}
\subsection{Large Deformation of a Cantilever Circular Arch Shell}
In this example, the large deformation of a shell-like circular arch is studied under a nonconservative applied load. The shell is made of isotropic material with $E = 7.2 \times 10^{10} N/m^{2}$ , cross-sectional area $A = 10^{-4}m^{2}$ , and moment of inertia $5 \times 10^{-9}m^{4}$  \cite{dadgar2021, argyris1981}. The initial configuration of the shell is a half-cylinder with a radius of $R=0.5m$. The elastic behavior of the shell is presented under two different load conditions. Figure \ref{fig66} indicates the load-deflection results of the shell under tangent following load on the shell edge, and Figure \ref{fig666} indicates the results for a normal load on the tip of the shell.

\begin{figure}[hbt!]
\includegraphics[scale=0.45]{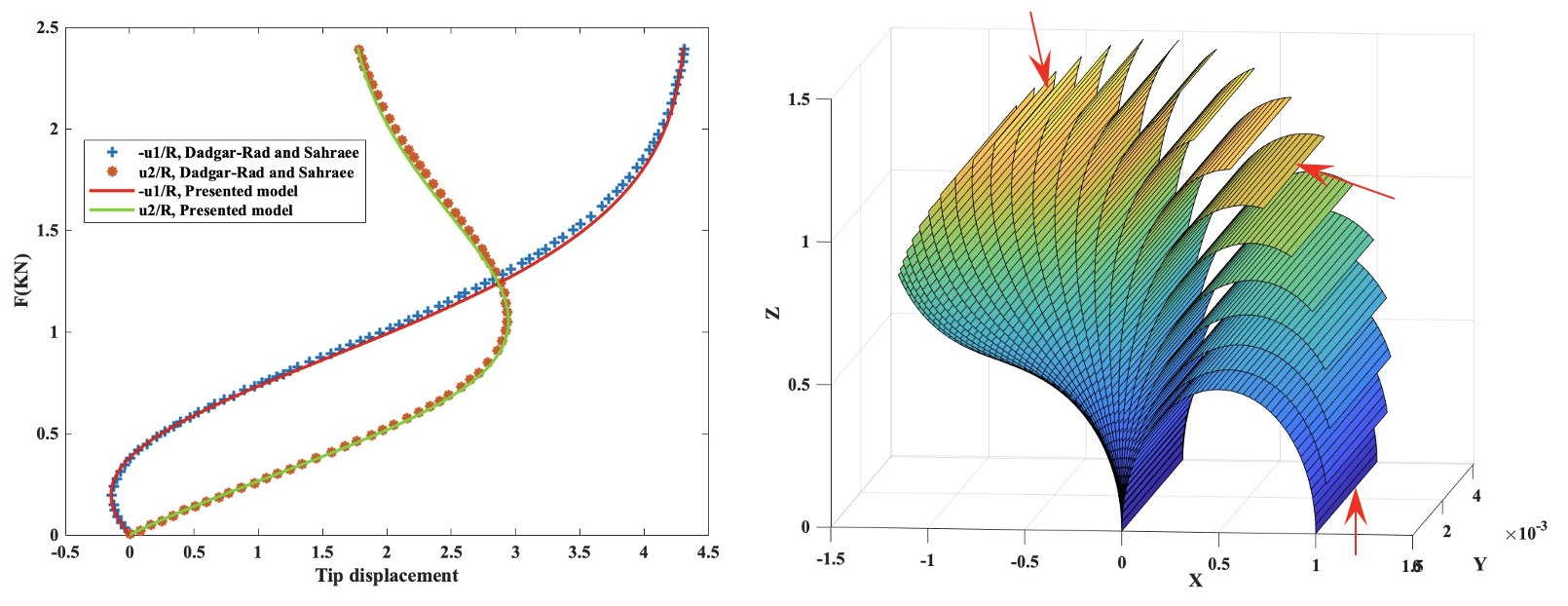}
\centering
\caption{Shell-like circular arch under follower tangential end force with 50 elements}
\label{fig66}
\end{figure}

\begin{figure}[hbt!]
\includegraphics[scale=0.45]{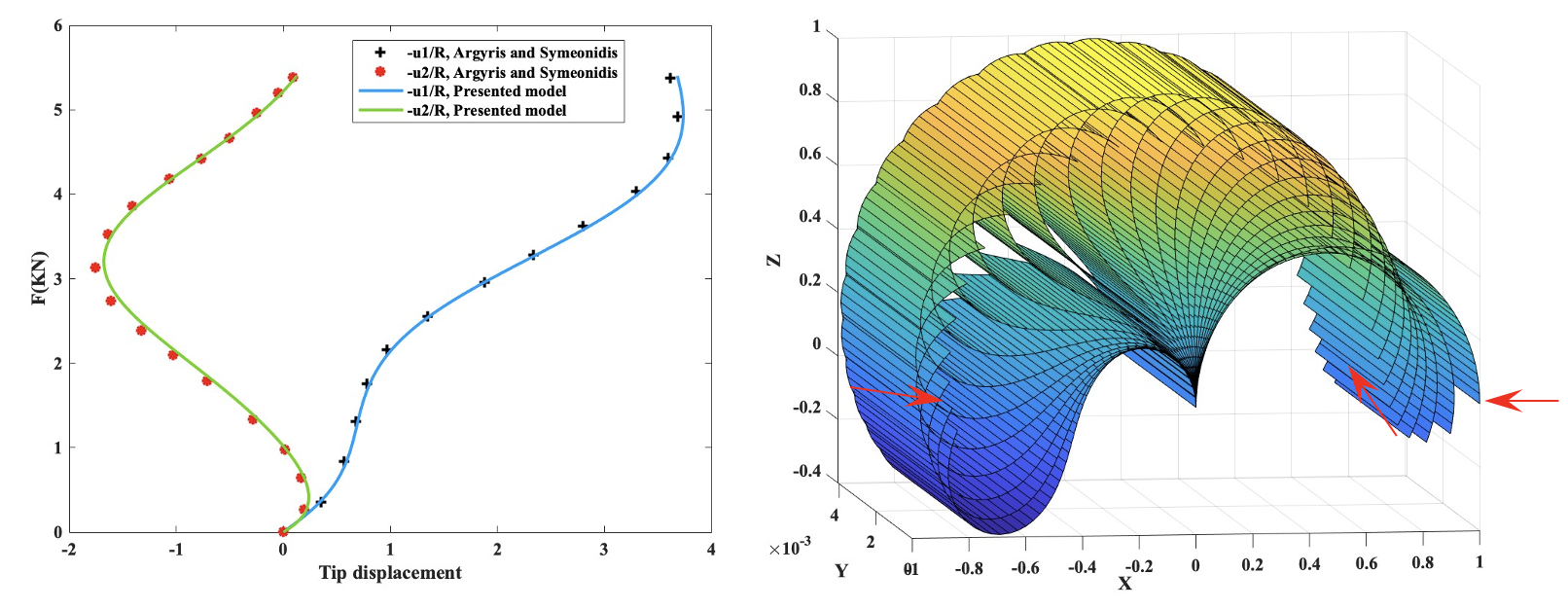}
\centering
\caption{Shell-like circular arch under follower transverse end force with 50 elements}
\label{fig666}
\end{figure}

\subsection{Pure Bending of a Cantilever Shell-Like Circular Arch}
In this problem we examine the large deformation of curved shell under applied external pure moment. The shell reference configuration is half-cylender shell which is clamped in one side. The material properties are similar one presented in section (6.2) with a curved with radius $R=10/2\pi$, width $w=1$, and thickness $h=0.1$. The numerical results are presented in Figure \ref{figg33}
\begin{figure}[h]
\includegraphics[scale=0.25]{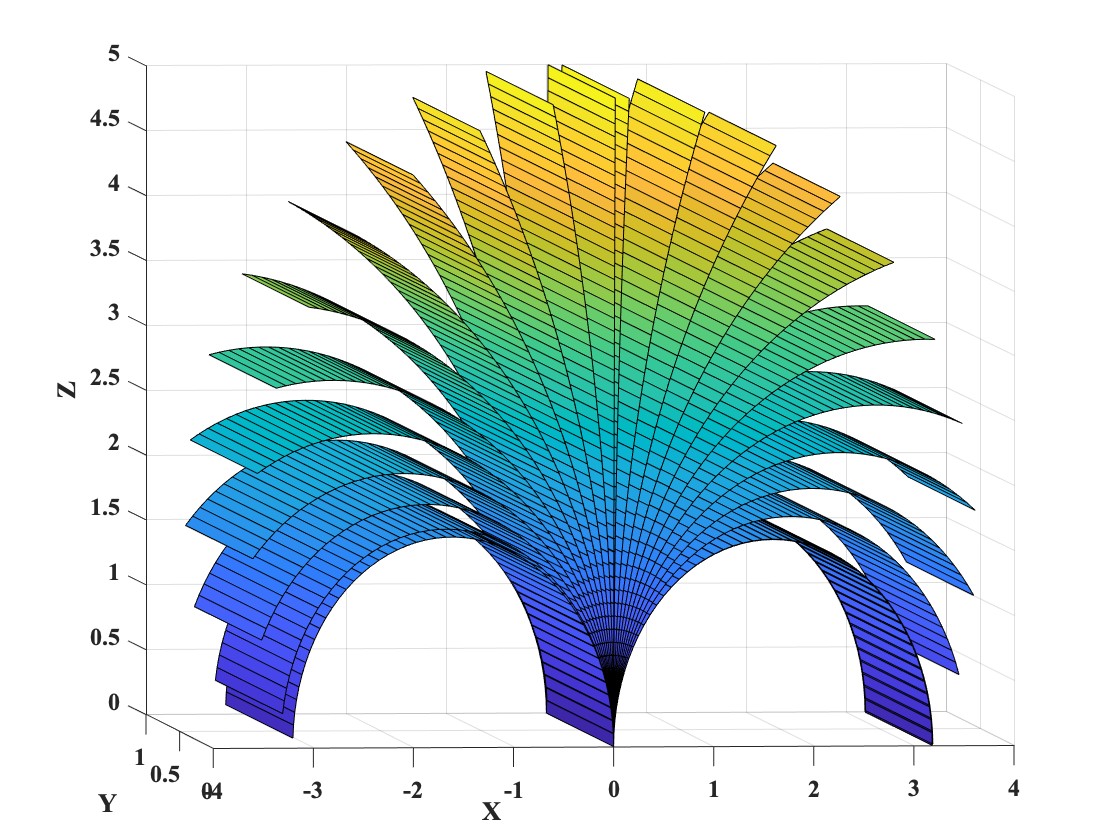}
\centering
\caption{Roll-up of a cantilever Shell-like circular arch for the end rotation $2\pi$ with 50 elements}
\label{figg33}
\end{figure}

\subsection{Magnetized Shells}
In this section, the response of the cantilever magnetized plates under applied uniform external magnetic field is investigated.
To confirm the accuracy of our formulated shell model for hard-magnetic elastica, we verify our predictions by comparing them with both finite element simulations and experimental findings previously documented by \cite{zhao2019, dadgar2022fi}. We also conducted an experimental study involving a thin elastic plate containing embedded hard-magnetic (NdFeB) particles.

\subsubsection{Validation with Experimental Data} 
To conduct experimental testing, we produced a hard magnetic elastic plate by blending Mold Star 31T and neodymium-iron-boron (NdFeB) permanent magnet material in a 1:1 and 2:1 mass ratio. Mold Star 31T, with a Young's modulus of $E=324.054 kPa$, was combined with NdFeB particles, each approximately $30 \mu m$ in size. The fabrication process involved pouring the mixture into a square plate-shaped plastic mold. Following the methodology outlined in \cite{diller2014}, the magnetic plate was exposed to a uniform magnetic field of approximately $1 T$ , oriented perpendicular to the plate, after the curing period. Subsequently, upon removal from the magnetic field, the hard magnetic plate was magnetized in a direction perpendicular to the plate.
Using this procedure, we produced two samples, designated as Sample A and Sample B. Both samples shared dimensions of thickness $0.7 mm$, length $40 mm$, and width $30 mm$.
The elastic properties of the hard magnetic plate were determined using the formula proposed by \cite{kim12019}.
\begin{align}\label{E85}
E=E_{0}exp\Big(\dfrac{2.5\phi}{1-1.35\phi}\Big), 
\end{align}
where $\phi$ denotes the particle volume fraction, and $E_{0}$ is the Young modulus of pure Moldstar without particles.
Sample A, with a magnetic particle volume fraction of $6\%$, exhibited a modulus of elasticity of $1.19E_{0}$ and a measured magnetized magnitude of $B^{r} =2.5 mT$. In contrast, Sample B, with a magnetic particle volume fraction of $12\%$, displayed a modulus of elasticity of $1.45E_{0}$ and a measured magnetized magnitude of $B^{r} =6.8 mT$.

 To assess the presented Cosserat shell model's capability in describing the large deformation of hard-magnetic plates, we compare the numerical and experimental results of the magnetic plate's deformation under an external magnetic field with a magnitude of $B^{a}=9.5 mT$ perpendicular to the internal magnetic field. The numerical findings depicted in the Figure \ref{fig10} align well with the experimentally derived data.
\begin{figure}[hbt!]
\includegraphics[scale=0.4]{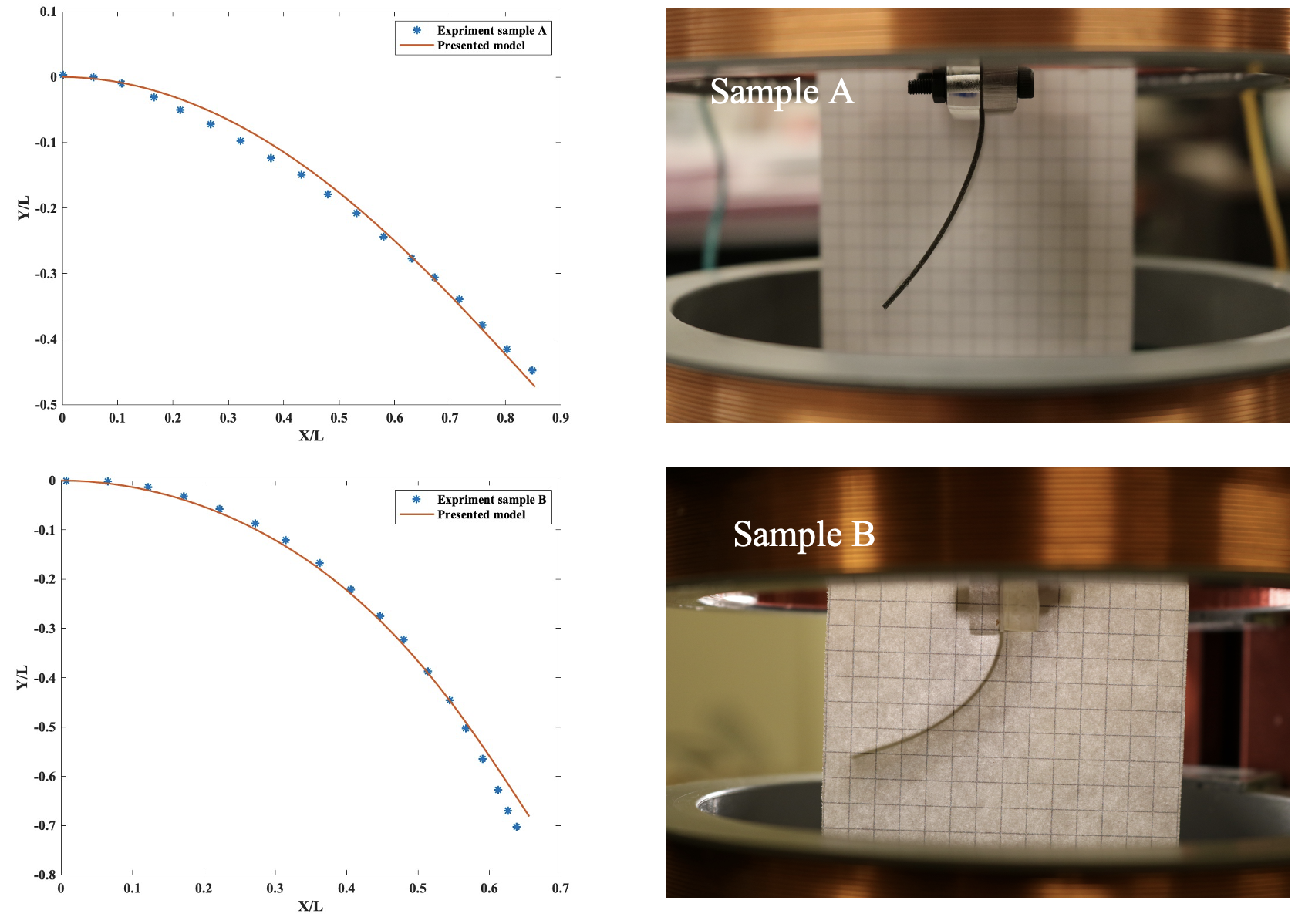}
\centering
\caption{Large deformation of a suspended square plate under a uniform magnetic field.}
\label{fig10}
\end{figure}

\subsubsection{Validation with Available Data}
Based on existing experimental data, we examine four cantilever ferromagnetic elastic beams with lengths $L={11,19.2,17.2,17.2}$ (mm), thicknesses $h={1.1,1.1,0.84,0.42}$ (mm), and a width of $w=5$ mm \cite{zhao2019, dadgar2022fi}. Following the approach in \cite{zhao2019, dadgar2022fi}, we adopt a shear modulus of $\mu = 303$ kPa, a Lame constant of $\lambda = 7300$ kPa, and an applied external uniform magnetic flux of $B^{a}=0.05$ T perpendicular to the remnant magnetic flux $B^{r}=0.143$ T. The maximum tip deflection of the beam is computed for various ratios of $L/h$, and the obtained results are compared with existing literature data in the accompanying figure \ref{fig8}. In this illustration, we utilize 30, 15, 15, and 10 elements, correspondingly, for different $L/h$ ratios, where $L/h = {41, 20.5, 17.5, 10}$. The numerical results closely align with the experimental data for large $L/h$.  

The Cosserat shell model presented here is capable of capturing the deformation of a hard magnetic plate under an antiparallel magnetic field. The numerical results are juxtaposed with experimental data in Figure \ref{fig9}.
\begin{figure}[hbt!]
\includegraphics[scale=0.45]{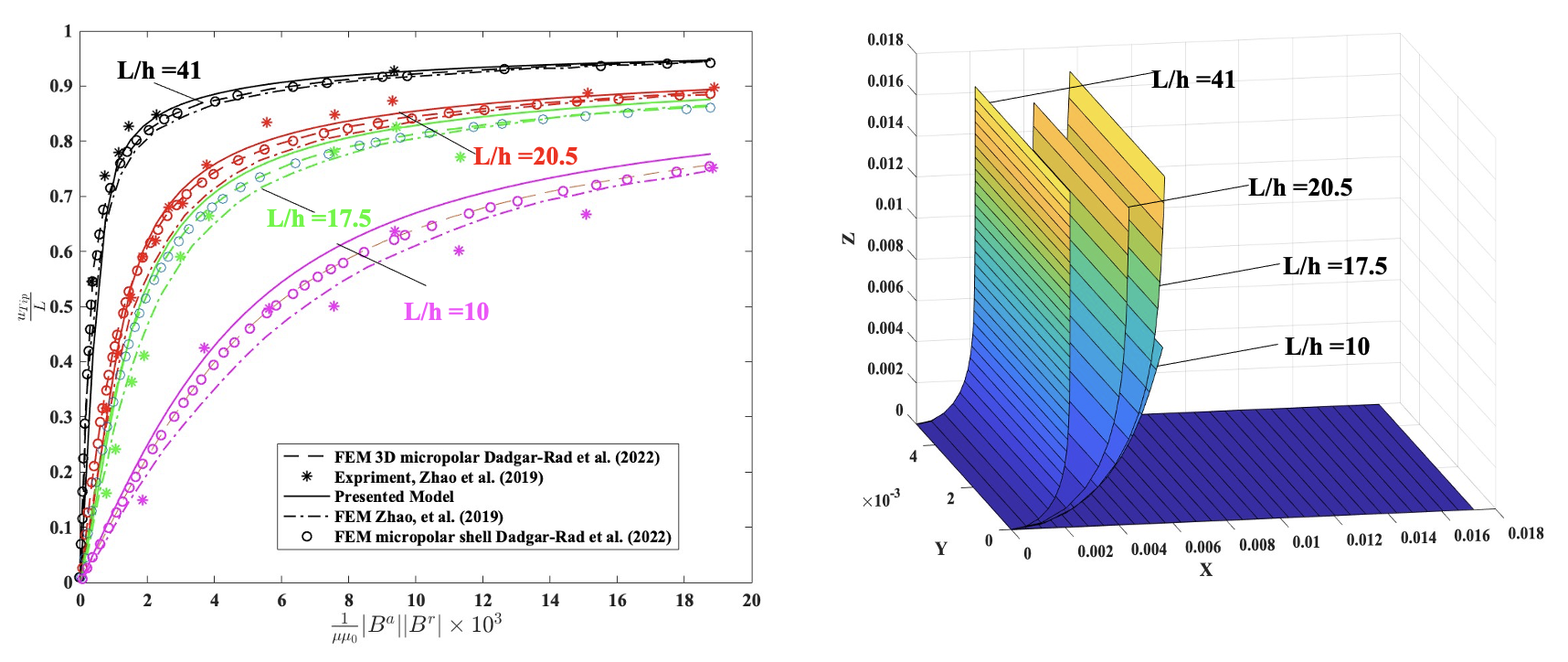}
\centering
\caption{Cosserat plate under magnetic fields}
\label{fig8}
\end{figure}

\begin{figure}[hbt!]
\includegraphics[scale=0.45]{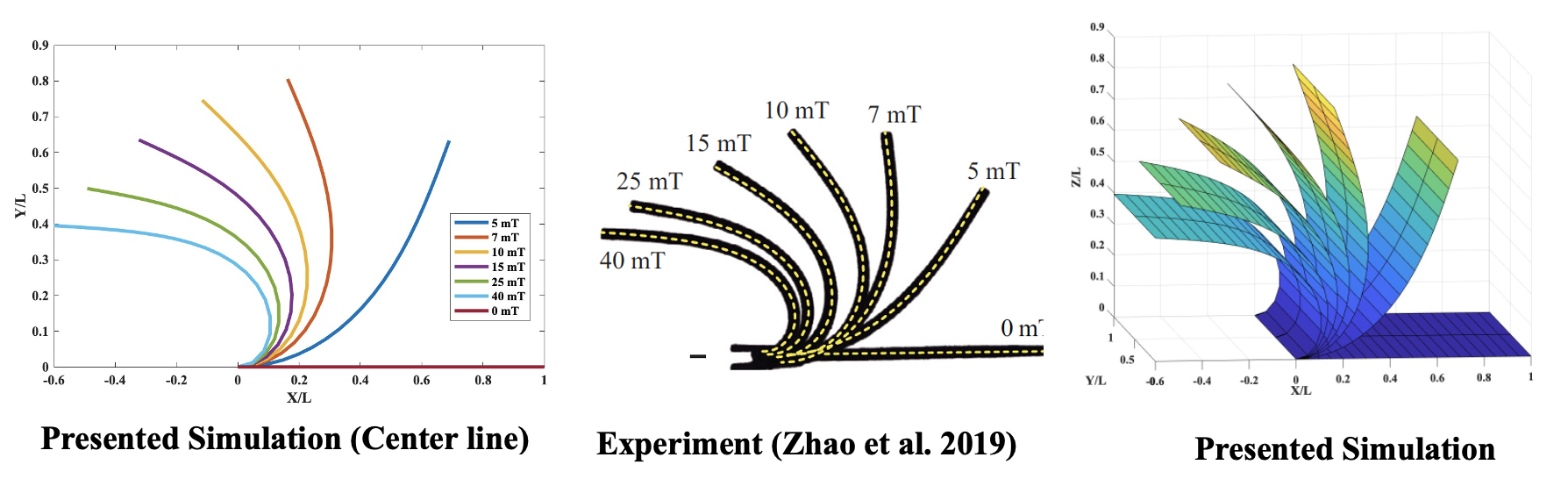}
\centering
\caption{Cosserat plate under antiparallel magnetic fields}
\label{fig9}
\end{figure}


\subsection{Soft Magnetic Gripper}  
Based on the presented fabrication method, we designed and tested a simple soft magnetic gripper, as illustrated in Figure~\ref{fig11}. The gripper consists of flexible plates embedded with magnetic particles, allowing it to deform and grasp objects when exposed to an external magnetic field. The proposed numerical model can be directly applied to simulate the deformation of the gripper under various loading and actuation conditions.  

To evaluate its performance, we analyzed the gripper’s ability to conform to objects of different shapes and sizes. The numerical simulations captured the nonlinear deformation patterns and provided predictions for tip displacement, which showed good agreement with experimental observations. These results demonstrate the capability of the model to accurately predict the gripper’s behavior, thereby validating its potential for use in the design and optimization of soft robotic manipulators.

\begin{figure}[hbt!]
\includegraphics[scale=0.5]{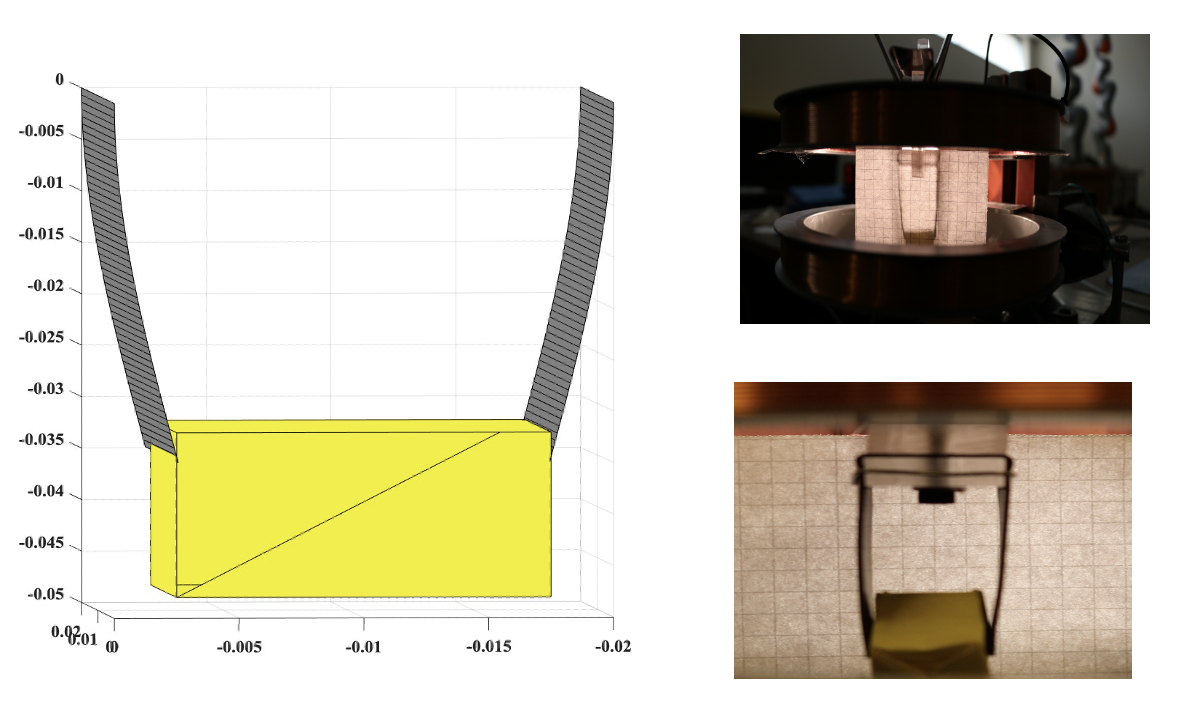}
\centering
\caption{Soft magnetic gripper}
\label{fig11}
\end{figure}

\section{Conclusions}\label{sec:conc}
A novel geometrically exact finite element framework for Cosserat shells with six degrees of freedom was developed using Lie group theory to robustly capture large deformations and rotations. The static balance equations were derived in both strong and weak forms via the principle of virtual work. Based on the weak form, a singularity-free Lagrangian finite element formulation was proposed, effectively eliminating shear-locking without additional computational cost. The model was extended to magneto-active shell-like soft robots by incorporating magnetic actuation forces. The resulting static equilibrium equations were solved numerically, and the results showed strong agreement with analytical predictions and conventional finite element models. Additionally, a plate-shaped hard magnetic elastomer was fabricated, and experimental validation confirmed the accuracy and applicability of the proposed model.







\textbf{Data availability:} The datasets generated and
analyzed during the current study are available upon request from the corresponding author.
\\

\textbf{Conflict of interest:}  The authors declare that they have no conflict of interest.
\\

\textbf{Author contributions:} M.J. conducted the research as part of his doctoral studies under the supervision of R.C. M.J. developed the theoretical framework, performed the mathematical analysis, designed the algorithms, and carried out the simulations and experiments. R.C. provided the research vision, secured funding, guided the overall direction of the work, and revised the manuscript. M.J. prepared the initial draft of the manuscript, and both authors reviewed and approved the final version.

\textbf{Funding:}  This work was partially supported by
the Natural Sciences and Engineering Research Council of Canada (grant no: DGECR-2019-00085).

\appendix
\section{Proof of Lemma 1}\label{app:lem1}
To show the equivalence of the terms appearing in Eq. \eqref{E18} and \eqref{E19}, we need to compute the variation $\delta \boldsymbol{\zeta}_{t\alpha}$ induced on $\se$ by the variation $\delta \mathbf{g}_t$. This variation can be found from the chain rule, using the definition in Eq. \eqref{eq:zeta}:
\begin{equation}
    \delta \boldsymbol{\hat\zeta}_{t\alpha}= \delta \big(\mathbf{g}_t^{-1}\frac{\partial \mathbf{g}_t}{\partial\xi^\alpha}\big)
\end{equation}
Knowing
that for a curve on $\SE$ the order of the partial derivatives is interchangeable, \begin{align}\label{E24}
{\delta\boldsymbol{\hat\zeta}}_{t\alpha} =-\mathbf{g}_{t}^{-1}\delta \mathbf{g}_{t}\mathbf{g}_{t}^{-1} \dfrac{\partial \mathbf{g}_{t}}{\partial \xi^{\alpha}}+\mathbf{g}_{t}^{-1}\dfrac{\partial \delta \mathbf{g}_{t}}{\partial \xi^{\alpha}}.
\end{align}
On the other hand, taking the derivative of Eq. (\ref{E21}) with respect to $\xi^{\alpha}$ leads to
\begin{align}\label{E26}
\mathbf{g}_{t}^{-1}\dfrac{\partial \delta \mathbf{g}_{t}}{\partial \xi^{\alpha}}=\dfrac{\partial \hat{\boldsymbol{\kappa}}}{\partial \xi^{\alpha}}+\mathbf{g}_{t}^{-1}\dfrac{\partial \mathbf{g}_{t}}{\partial \xi^{\alpha}}\hat{\boldsymbol{\kappa}},
\end{align}
Substituting Eq. (\ref{E26}) into (\ref{E24}), leads to
\begin{align}\label{E28}
\delta \hat{\boldsymbol{\zeta}}_{t\alpha} =\dfrac{\partial \hat{\boldsymbol{\kappa}}}{\partial \xi^{\alpha}}-\hat{\boldsymbol{\kappa}}\hat{\boldsymbol{\zeta}}_{t\alpha}+\hat{\boldsymbol{\zeta}}_{t\alpha} \hat{\boldsymbol{\kappa}}=\dfrac{\partial \hat{\boldsymbol{\kappa}}}{\partial \xi^{\alpha}}+[\hat{\boldsymbol{\zeta}}_{t\alpha}, \hat{\boldsymbol{\kappa}}]=\left(\dfrac{\partial {\boldsymbol{\kappa}}}{\partial \xi^{\alpha}}+\ad_{\boldsymbol{\zeta}_{t\alpha}}({\boldsymbol{\kappa}})\right)^\wedge,
\end{align}

\section{Proof of Proposition 1}\label{app:pro1}
To prove Eq. \eqref{E67}, we substitute the perturbed configuration $\mathbf{g}_{\varepsilon}$ into the functional $G_{int}(\mathbf{g}_{t},{\boldsymbol{\kappa}})$:
\begin{align}\label{EP1}
G_{int}(\mathbf{g}_{\varepsilon},{\boldsymbol{\kappa}})=\int_{\mathcal{A}_{0}}\Big(\sum_{\alpha=1}^2\Big<\boldsymbol{S}^{\alpha}_{\varepsilon},\dfrac{\partial \boldsymbol{\kappa}}{\partial \xi^{\alpha}}+\ad_{\boldsymbol{\zeta}_{t\alpha }^\varepsilon} \boldsymbol{\kappa}\Big>\Big)\bar{j}_{0}d\mathcal{A},
\end{align}
where $\boldsymbol{S}^{\alpha}_{\epsilon}=\sum_{\beta=1}^2\boldsymbol{\mathcal{D}}^{\alpha \beta}(\boldsymbol{\zeta}_{t\alpha}^\varepsilon-\boldsymbol{\zeta}_{0\alpha})$ and $\hat{\boldsymbol{\zeta}}_{t\alpha}^\varepsilon=\mathbf{g}_{\varepsilon}^{-1}\dfrac{\partial \mathbf{g}_{\varepsilon}}{\partial \xi^{\alpha}}$.
Taking the derivative of Eq. (\ref{EP1}) with respect to $\varepsilon$ and calculate the results at $\varepsilon =0$ leads to
\begin{align}
    \nonumber\left.\dfrac{\partial}{\partial \varepsilon}\right|_{\varepsilon=0}
\!\!\!\!&G_{int}(\mathbf{g}_{\varepsilon},\boldsymbol{\kappa})\\
=&\int_{\mathcal{A}_{0}}\sum_{\alpha=1}^2\Big(\Big<\left.\dfrac{\partial\boldsymbol{S}^{\alpha}_{\varepsilon}}{\partial \varepsilon}\right|_{\varepsilon=0}\!\!\!,\dfrac{\partial \boldsymbol{\kappa}}{\partial \xi^{\alpha}}+\ad_{\bar{\boldsymbol{\zeta}}_{t\alpha }} \boldsymbol{\kappa}\Big>+\Big<\bar{\boldsymbol{S}}^{\alpha},\dfrac{\partial \boldsymbol{\kappa}}{\partial \xi^{\alpha}}-\ad_{\boldsymbol{\kappa}}{\left.\dfrac{\partial\boldsymbol{\zeta}_{t\alpha }^\varepsilon}{\partial \varepsilon}\right|_{\varepsilon=0}} \!\!\!\Big>\Big)\bar{j}_{0}d\mathcal{A}
\end{align}
Knowing the definition of the strain $\boldsymbol{S}^\alpha$ from Eq. \eqref{eq:strain} and 
\begin{align}
    \left.\dfrac{\partial\boldsymbol{\zeta}_{t\alpha}^\varepsilon}{\partial\varepsilon}\right|_{\varepsilon=0}=\left.\dfrac{\partial}{\partial\varepsilon}\right|_{\varepsilon=0}\big(\mathbf{g}_{\varepsilon}^{-1}\dfrac{\partial \mathbf{g}_{\varepsilon}}{\partial \xi^{\alpha}}\big)^\vee=\big(-\hat{\boldsymbol{\eta}}\bar\g_t^{-1}\dfrac{\partial \bar\g_{t}}{\partial \xi^{\alpha}}+\bar\g_t^{-1}\dfrac{\partial \bar\g_{t}}{\partial \xi^{\alpha}}\hat{\boldsymbol{\eta}}+\dfrac{\partial\hat{\boldsymbol{\eta}}}{\partial\xi^\alpha}\big)^\vee=\bar{\mathbf{K}}_\alpha\boldsymbol{\eta},
\end{align}
we have

\begin{align}\label{EP2}
\left.\dfrac{\partial}{\partial \varepsilon}\right|_{\varepsilon=0}
\!\!\!\!&G_{int}=\int_{\mathcal{A}_{0}}\sum_{\alpha=1}^2\Big(\Big<\sum_{\beta=1}^2\boldsymbol{\mathcal{D}}^{\alpha \beta}\bar{\mathbf{K}}_{\beta}\boldsymbol{\eta},\bar{\mathbf{K}}_{\alpha}\boldsymbol{\kappa}\Big>+\Big<\bar{\boldsymbol{S}}^{\alpha},\ad_{\bar{\mathbf{K}}_{\alpha}\boldsymbol{\eta}}\boldsymbol{\kappa}\Big>\Big)\bar{j}_{0}d\mathcal{A}.
\end{align}
For the last term in Eq. (\ref{EP2}) we have
\begin{align}\label{EP22}
\Big<\bar{\boldsymbol{S}}^{\alpha},\ad_{\bar{\mathbf{K}}_{\alpha}\boldsymbol{\eta}}\boldsymbol{\kappa}\Big>=-\Big<\bar{\boldsymbol{S}}^{\alpha},\ad_{\boldsymbol{\kappa}}\bar{\mathbf{K}}_{\alpha}\boldsymbol{\eta}\Big>=-\Big<\ad_{\boldsymbol{\kappa}}^{*}\bar{\boldsymbol{S}}^{\alpha},\bar{\mathbf{K}}_{\alpha}\boldsymbol{\eta}\Big>=-\Big<\tilde{\ad}_{\bar{\boldsymbol{S}}^{\alpha}}\boldsymbol{\kappa},\bar{\mathbf{K}}_{\alpha}\boldsymbol{\eta}\Big>.
\end{align}
Substituting the last term of Eq. (\ref{EP22}) into (\ref{EP2}) completes the proof.
\section{Proof of Remark 2}\label{Remark2}
We define the skew-symmetric part of $\left.\dfrac{\partial G_{int}}{\partial \varepsilon}
\right|_{\varepsilon=0}$ calculated in Proposition \ref{prop:1} by changing the role of twists $\boldsymbol{\kappa}$ and $\boldsymbol{\eta}$, as follows:
\begin{align}\label{E69}
\begin{split}
 \textbf{Skew}\Big[\left.\dfrac{\partial G_{int}}{\partial  \varepsilon}\right|_{\varepsilon=0}&\Big]
\coloneqq\dfrac{1}{2}\Big(\left.\dfrac{\partial}{\partial \varepsilon}\right|_{\varepsilon=0}
\!\!\!\!G_{int}(\mathbf{g}_{\varepsilon},\boldsymbol{\kappa})-\left.\dfrac{\partial}{\partial \varepsilon}\right|_{\varepsilon=0}
\!\!\!\!G_{int}(\mathbf{g}_{\varepsilon},\boldsymbol{\eta})\Big)\\
&=\dfrac{1}{2}\Big(\int_{\mathcal{A}_{0}}\Big(\Big<\tilde{\ad}_{\bar{\boldsymbol{S}}^{\alpha}}\boldsymbol{\kappa},\bar{\mathbf{K}}_{\alpha}\boldsymbol{\eta}\Big>-\Big<\tilde{\ad}_{\bar{\boldsymbol{S}}^{\alpha}}\boldsymbol{\eta},\bar{\mathbf{K}}_{\alpha}\boldsymbol{\kappa}\Big>\Big)\bar{j}_{0}d\mathcal{A}\Big)
\end{split}
\end{align}
By Substituting Eq. (\ref{E66}) into (\ref{E69}) we can write
\begin{align}\label{E69R}
\begin{split}
 \textbf{Skew}\Big[\left.\dfrac{\partial G_{int}}{\partial  \varepsilon}\right|_{\varepsilon=0}\Big]
&=\dfrac{1}{2}\Big(\int_{\mathcal{A}_{0}}\Big(\Big<\tilde{\ad}_{\bar{\boldsymbol{S}}^{\alpha}}\boldsymbol{\kappa},\bar{\mathbf{K}}_{\alpha}\boldsymbol{\eta}\Big>+\Big<\tilde{\ad}_{\bar{\boldsymbol{S}}^{\alpha}}\boldsymbol{\eta},\bar{\mathbf{K}}_{\alpha}\boldsymbol{\kappa}\Big>\Big)\bar{j}_{0}d\mathcal{A}\Big)\\
&=\dfrac{1}{2}\Big(\int_{\mathcal{A}_{0}}\Big(\Big<\bar{\boldsymbol{S}}^{\alpha},\dfrac{\partial}{\partial \xi^{\alpha}}\ad_{\boldsymbol{\kappa}}\boldsymbol{\eta}\Big>+\Big<\ad^{*}_{\boldsymbol{\zeta}_{\alpha}}\bar{\boldsymbol{S}^{\alpha}},\ad_{\boldsymbol{\kappa}}\boldsymbol{\eta}\Big>\Big)\bar{j}_{0}d\mathcal{A}\Big)\\
\end{split}
\end{align}
Integration by parts of Eq. (\ref{E69}) yields
\begin{align}\label{E70}
 \textbf{Skew}\Big[\left.\dfrac{\partial G_{int}}{\partial  \varepsilon}\right|_{\varepsilon=0}\Big]=\dfrac{1}{2}\int_{\mathcal{A}_{0}}\Big<\Big(-\dfrac{1}{\bar{j_{0}}}\dfrac{\partial (\bar{j}_{0}\bar{\boldsymbol{S}}^{\alpha})}{\partial \xi^{\alpha}}+\ad^{*}_{\boldsymbol{\zeta}_{\alpha}}\bar{\boldsymbol{S}^{\alpha}}\Big),\ad_{\boldsymbol{\kappa}}\boldsymbol{\eta}\Big>\bar{j}_{0}d\mathcal{A}
\end{align}
It is evident that the first term on the right-hand side of Eq. \eqref{E70} corresponds to the internal virtual work contribution under static conditions in the absence of external forces. At equilibrium, this term vanishes, implying that the associated skew-symmetric component of the stress resultants must be zero.

\bibliography{sn-bibliography}


\begin{thebibliography}{74}
\ifx \bisbn   \undefined \def \bisbn  #1{ISBN #1}\fi
\ifx \binits  \undefined \def \binits#1{#1}\fi
\ifx \bauthor  \undefined \def \bauthor#1{#1}\fi
\ifx \batitle  \undefined \def \batitle#1{#1}\fi
\ifx \bjtitle  \undefined \def \bjtitle#1{#1}\fi
\ifx \bvolume  \undefined \def \bvolume#1{\textbf{#1}}\fi
\ifx \byear  \undefined \def \byear#1{#1}\fi
\ifx \bissue  \undefined \def \bissue#1{#1}\fi
\ifx \bfpage  \undefined \def \bfpage#1{#1}\fi
\ifx \blpage  \undefined \def \blpage #1{#1}\fi
\ifx \burl  \undefined \def \burl#1{\textsf{#1}}\fi
\ifx \doiurl  \undefined \def \doiurl#1{\url{https://doi.org/#1}}\fi
\ifx \betal  \undefined \def \betal{\textit{et al.}}\fi
\ifx \binstitute  \undefined \def \binstitute#1{#1}\fi
\ifx \binstitutionaled  \undefined \def \binstitutionaled#1{#1}\fi
\ifx \bctitle  \undefined \def \bctitle#1{#1}\fi
\ifx \beditor  \undefined \def \beditor#1{#1}\fi
\ifx \bpublisher  \undefined \def \bpublisher#1{#1}\fi
\ifx \bbtitle  \undefined \def \bbtitle#1{#1}\fi
\ifx \bedition  \undefined \def \bedition#1{#1}\fi
\ifx \bseriesno  \undefined \def \bseriesno#1{#1}\fi
\ifx \blocation  \undefined \def \blocation#1{#1}\fi
\ifx \bsertitle  \undefined \def \bsertitle#1{#1}\fi
\ifx \bsnm \undefined \def \bsnm#1{#1}\fi
\ifx \bsuffix \undefined \def \bsuffix#1{#1}\fi
\ifx \bparticle \undefined \def \bparticle#1{#1}\fi
\ifx \barticle \undefined \def \barticle#1{#1}\fi
\bibcommenthead
\ifx \bconfdate \undefined \def \bconfdate #1{#1}\fi
\ifx \botherref \undefined \def \botherref #1{#1}\fi
\ifx \url \undefined \def \url#1{\textsf{#1}}\fi
\ifx \bchapter \undefined \def \bchapter#1{#1}\fi
\ifx \bbook \undefined \def \bbook#1{#1}\fi
\ifx \bcomment \undefined \def \bcomment#1{#1}\fi
\ifx \oauthor \undefined \def \oauthor#1{#1}\fi
\ifx \citeauthoryear \undefined \def \citeauthoryear#1{#1}\fi
\ifx \endbibitem  \undefined \def \endbibitem {}\fi
\ifx \bconflocation  \undefined \def \bconflocation#1{#1}\fi
\ifx \arxivurl  \undefined \def \arxivurl#1{\textsf{#1}}\fi
\csname PreBibitemsHook\endcsname

\bibitem[\protect\citeauthoryear{Rus and Tolley}{2015}]{rus2015}
\begin{barticle}
\bauthor{\bsnm{Rus}, \binits{D.}},
\bauthor{\bsnm{Tolley}, \binits{M.T.}}:
\batitle{Design, fabrication and control of soft robots}.
\bjtitle{Nature}
\bvolume{521}(\bissue{7553}),
\bfpage{467}--\blpage{475}
(\byear{2015})
\end{barticle}
\endbibitem

\bibitem[\protect\citeauthoryear{Marchese et~al.}{2016}]{marchese2016}
\begin{barticle}
\bauthor{\bsnm{Marchese}, \binits{A.D.}},
\bauthor{\bsnm{Tedrake}, \binits{R.}},
\bauthor{\bsnm{Rus}, \binits{D.}}:
\batitle{Dynamics and trajectory optimization for a soft spatial fluidic
  elastomer manipulator}.
\bjtitle{The International Journal of Robotics Research}
\bvolume{35}(\bissue{8}),
\bfpage{1000}--\blpage{1019}
(\byear{2016})
\end{barticle}
\endbibitem

\bibitem[\protect\citeauthoryear{Burgner-Kahrs et~al.}{2015}]{burgner2015}
\begin{barticle}
\bauthor{\bsnm{Burgner-Kahrs}, \binits{J.}},
\bauthor{\bsnm{Rucker}, \binits{D.C.}},
\bauthor{\bsnm{Choset}, \binits{H.}}:
\batitle{Continuum robots for medical applications: A survey}.
\bjtitle{IEEE Transactions on Robotics}
\bvolume{31}(\bissue{6}),
\bfpage{1261}--\blpage{1280}
(\byear{2015})
\end{barticle}
\endbibitem

\bibitem[\protect\citeauthoryear{Runciman et~al.}{2019}]{runciman2019}
\begin{barticle}
\bauthor{\bsnm{Runciman}, \binits{M.}},
\bauthor{\bsnm{Darzi}, \binits{A.}},
\bauthor{\bsnm{Mylonas}, \binits{G.P.}}:
\batitle{Soft robotics in minimally invasive surgery}.
\bjtitle{Soft robotics}
\bvolume{6}(\bissue{4}),
\bfpage{423}--\blpage{443}
(\byear{2019})
\end{barticle}
\endbibitem

\bibitem[\protect\citeauthoryear{Park et~al.}{2014}]{park2014d}
\begin{barticle}
\bauthor{\bsnm{Park}, \binits{Y.-L.}},
\bauthor{\bsnm{Chen}, \binits{B.-r.}},
\bauthor{\bsnm{P{\'e}rez-Arancibia}, \binits{N.O.}},
\bauthor{\bsnm{Young}, \binits{D.}},
\bauthor{\bsnm{Stirling}, \binits{L.}},
\bauthor{\bsnm{Wood}, \binits{R.J.}},
\bauthor{\bsnm{Goldfield}, \binits{E.C.}},
\bauthor{\bsnm{Nagpal}, \binits{R.}}:
\batitle{Design and control of a bio-inspired soft wearable robotic device for
  ankle--foot rehabilitation}.
\bjtitle{Bioinspiration \& biomimetics}
\bvolume{9}(\bissue{1}),
\bfpage{016007}
(\byear{2014})
\end{barticle}
\endbibitem

\bibitem[\protect\citeauthoryear{Cianchetti et~al.}{2018}]{cianchetti2018}
\begin{barticle}
\bauthor{\bsnm{Cianchetti}, \binits{M.}},
\bauthor{\bsnm{Laschi}, \binits{C.}},
\bauthor{\bsnm{Menciassi}, \binits{A.}},
\bauthor{\bsnm{Dario}, \binits{P.}}:
\batitle{Biomedical applications of soft robotics}.
\bjtitle{Nature Reviews Materials}
\bvolume{3}(\bissue{6}),
\bfpage{143}--\blpage{153}
(\byear{2018})
\end{barticle}
\endbibitem

\bibitem[\protect\citeauthoryear{Zhang et~al.}{2023}]{zhang2023progress}
\begin{barticle}
\bauthor{\bsnm{Zhang}, \binits{Y.}},
\bauthor{\bsnm{Li}, \binits{P.}},
\bauthor{\bsnm{Quan}, \binits{J.}},
\bauthor{\bsnm{Li}, \binits{L.}},
\bauthor{\bsnm{Zhang}, \binits{G.}},
\bauthor{\bsnm{Zhou}, \binits{D.}}:
\batitle{Progress, challenges, and prospects of soft robotics for space
  applications}.
\bjtitle{Advanced Intelligent Systems}
\bvolume{5}(\bissue{3}),
\bfpage{2200071}
(\byear{2023})
\end{barticle}
\endbibitem

\bibitem[\protect\citeauthoryear{Webster~III and Jones}{2010}]{webster2010}
\begin{barticle}
\bauthor{\bsnm{Webster~III}, \binits{R.J.}},
\bauthor{\bsnm{Jones}, \binits{B.A.}}:
\batitle{Design and kinematic modeling of constant curvature continuum robots:
  A review}.
\bjtitle{The International Journal of Robotics Research}
\bvolume{29}(\bissue{13}),
\bfpage{1661}--\blpage{1683}
(\byear{2010})
\end{barticle}
\endbibitem

\bibitem[\protect\citeauthoryear{Renda et~al.}{2022}]{renda2022geome}
\begin{barticle}
\bauthor{\bsnm{Renda}, \binits{F.}},
\bauthor{\bsnm{Armanini}, \binits{C.}},
\bauthor{\bsnm{Mathew}, \binits{A.}},
\bauthor{\bsnm{Boyer}, \binits{F.}}:
\batitle{Geometrically-exact inverse kinematic control of soft manipulators
  with general threadlike actuators’ routing}.
\bjtitle{IEEE Robotics and Automation Letters}
\bvolume{7}(\bissue{3}),
\bfpage{7311}--\blpage{7318}
(\byear{2022})
\end{barticle}
\endbibitem

\bibitem[\protect\citeauthoryear{Sadati et~al.}{2017}]{sadati2017control}
\begin{barticle}
\bauthor{\bsnm{Sadati}, \binits{S.H.}},
\bauthor{\bsnm{Naghibi}, \binits{S.E.}},
\bauthor{\bsnm{Walker}, \binits{I.D.}},
\bauthor{\bsnm{Althoefer}, \binits{K.}},
\bauthor{\bsnm{Nanayakkara}, \binits{T.}}:
\batitle{Control space reduction and real-time accurate modeling of continuum
  manipulators using ritz and ritz--galerkin methods}.
\bjtitle{IEEE Robotics and Automation Letters}
\bvolume{3}(\bissue{1}),
\bfpage{328}--\blpage{335}
(\byear{2017})
\end{barticle}
\endbibitem

\bibitem[\protect\citeauthoryear{Cosserat and Cosserat}{1909}]{cosserat1909}
\begin{bbook}
\bauthor{\bsnm{Cosserat}, \binits{E.}},
\bauthor{\bsnm{Cosserat}, \binits{F.}}:
\bbtitle{Th{\'e}orie des Corps D{\'e}formables}.
\bpublisher{Hermann, Paris}, \blocation{???}
(\byear{1909})
\end{bbook}
\endbibitem

\bibitem[\protect\citeauthoryear{Simo and Vu-Quoc}{1988}]{simo1988dynamics}
\begin{barticle}
\bauthor{\bsnm{Simo}, \binits{J.C.}},
\bauthor{\bsnm{Vu-Quoc}, \binits{L.}}:
\batitle{On the dynamics in space of rods undergoing large motions—a
  geometrically exact approach}.
\bjtitle{Computer methods in applied mechanics and engineering}
\bvolume{66}(\bissue{2}),
\bfpage{125}--\blpage{161}
(\byear{1988})
\end{barticle}
\endbibitem

\bibitem[\protect\citeauthoryear{Renda et~al.}{2022}]{renda2022geometrically}
\begin{barticle}
\bauthor{\bsnm{Renda}, \binits{F.}},
\bauthor{\bsnm{Armanini}, \binits{C.}},
\bauthor{\bsnm{Mathew}, \binits{A.}},
\bauthor{\bsnm{Boyer}, \binits{F.}}:
\batitle{Geometrically-exact inverse kinematic control of soft manipulators
  with general threadlike actuators’ routing}.
\bjtitle{IEEE Robotics and Automation Letters}
\bvolume{7}(\bissue{3}),
\bfpage{7311}--\blpage{7318}
(\byear{2022})
\end{barticle}
\endbibitem

\bibitem[\protect\citeauthoryear{Simo et~al.}{1990}]{simo1990}
\begin{barticle}
\bauthor{\bsnm{Simo}, \binits{J.C.}},
\bauthor{\bsnm{Fox}, \binits{D.D.}},
\bauthor{\bsnm{Rifai}, \binits{M.S.}}:
\batitle{On a stress resultant geometrically exact shell model. part iii:
  Computational aspects of the nonlinear theory}.
\bjtitle{Computer Methods in Applied Mechanics and Engineering}
\bvolume{79}(\bissue{1}),
\bfpage{21}--\blpage{70}
(\byear{1990})
\end{barticle}
\endbibitem

\bibitem[\protect\citeauthoryear{Renda et~al.}{2018}]{renda2018unified}
\begin{barticle}
\bauthor{\bsnm{Renda}, \binits{F.}},
\bauthor{\bsnm{Giorgio-Serchi}, \binits{F.}},
\bauthor{\bsnm{Boyer}, \binits{F.}},
\bauthor{\bsnm{Laschi}, \binits{C.}},
\bauthor{\bsnm{Dias}, \binits{J.}},
\bauthor{\bsnm{Seneviratne}, \binits{L.}}:
\batitle{A unified multi-soft-body dynamic model for underwater soft robots}.
\bjtitle{The International Journal of Robotics Research}
\bvolume{37}(\bissue{6}),
\bfpage{648}--\blpage{666}
(\byear{2018})
\end{barticle}
\endbibitem

\bibitem[\protect\citeauthoryear{Liu et~al.}{2021}]{liu2021robotic}
\begin{barticle}
\bauthor{\bsnm{Liu}, \binits{K.}},
\bauthor{\bsnm{Hacker}, \binits{F.}},
\bauthor{\bsnm{Daraio}, \binits{C.}}:
\batitle{Robotic surfaces with reversible, spatiotemporal control for shape
  morphing and object manipulation}.
\bjtitle{Science Robotics}
\bvolume{6}(\bissue{53}),
\bfpage{5116}
(\byear{2021})
\end{barticle}
\endbibitem

\bibitem[\protect\citeauthoryear{Guseinov et~al.}{2020}]{guseinov2020}
\begin{barticle}
\bauthor{\bsnm{Guseinov}, \binits{R.}},
\bauthor{\bsnm{McMahan}, \binits{C.}},
\bauthor{\bsnm{P{\'e}rez}, \binits{J.}},
\bauthor{\bsnm{Daraio}, \binits{C.}},
\bauthor{\bsnm{Bickel}, \binits{B.}}:
\batitle{Programming temporal morphing of self-actuated shells}.
\bjtitle{Nature communications}
\bvolume{11}(\bissue{1}),
\bfpage{237}
(\byear{2020})
\end{barticle}
\endbibitem

\bibitem[\protect\citeauthoryear{Kotikian et~al.}{2019}]{kotikian2019}
\begin{barticle}
\bauthor{\bsnm{Kotikian}, \binits{A.}},
\bauthor{\bsnm{McMahan}, \binits{C.}},
\bauthor{\bsnm{Davidson}, \binits{E.C.}},
\bauthor{\bsnm{Muhammad}, \binits{J.M.}},
\bauthor{\bsnm{Weeks}, \binits{R.D.}},
\bauthor{\bsnm{Daraio}, \binits{C.}},
\bauthor{\bsnm{Lewis}, \binits{J.A.}}:
\batitle{Untethered soft robotic matter with passive control of shape morphing
  and propulsion}.
\bjtitle{Science robotics}
\bvolume{4}(\bissue{33}),
\bfpage{7044}
(\byear{2019})
\end{barticle}
\endbibitem

\bibitem[\protect\citeauthoryear{Zhang et~al.}{2021}]{zhang2021liqu}
\begin{barticle}
\bauthor{\bsnm{Zhang}, \binits{J.}},
\bauthor{\bsnm{Guo}, \binits{Y.}},
\bauthor{\bsnm{Hu}, \binits{W.}},
\bauthor{\bsnm{Soon}, \binits{R.H.}},
\bauthor{\bsnm{Davidson}, \binits{Z.S.}},
\bauthor{\bsnm{Sitti}, \binits{M.}}:
\batitle{Liquid crystal elastomer-based magnetic composite films for
  reconfigurable shape-morphing soft miniature machines}.
\bjtitle{Advanced Materials}
\bvolume{33}(\bissue{8}),
\bfpage{2006191}
(\byear{2021})
\end{barticle}
\endbibitem

\bibitem[\protect\citeauthoryear{Alapan et~al.}{2020}]{alapan2020re}
\begin{barticle}
\bauthor{\bsnm{Alapan}, \binits{Y.}},
\bauthor{\bsnm{Karacakol}, \binits{A.C.}},
\bauthor{\bsnm{Guzelhan}, \binits{S.N.}},
\bauthor{\bsnm{Isik}, \binits{I.}},
\bauthor{\bsnm{Sitti}, \binits{M.}}:
\batitle{Reprogrammable shape morphing of magnetic soft machines}.
\bjtitle{Science advances}
\bvolume{6}(\bissue{38}),
\bfpage{6414}
(\byear{2020})
\end{barticle}
\endbibitem

\bibitem[\protect\citeauthoryear{Shah et~al.}{2021}]{shah2021soft}
\begin{barticle}
\bauthor{\bsnm{Shah}, \binits{D.S.}},
\bauthor{\bsnm{Powers}, \binits{J.P.}},
\bauthor{\bsnm{Tilton}, \binits{L.G.}},
\bauthor{\bsnm{Kriegman}, \binits{S.}},
\bauthor{\bsnm{Bongard}, \binits{J.}},
\bauthor{\bsnm{Kramer-Bottiglio}, \binits{R.}}:
\batitle{A soft robot that adapts to environments through shape change}.
\bjtitle{Nature Machine Intelligence}
\bvolume{3}(\bissue{1}),
\bfpage{51}--\blpage{59}
(\byear{2021})
\end{barticle}
\endbibitem

\bibitem[\protect\citeauthoryear{Ansari et~al.}{2023}]{ansari20233d}
\begin{barticle}
\bauthor{\bsnm{Ansari}, \binits{M.H.D.}},
\bauthor{\bsnm{Iacovacci}, \binits{V.}},
\bauthor{\bsnm{Pane}, \binits{S.}},
\bauthor{\bsnm{Ourak}, \binits{M.}},
\bauthor{\bsnm{Borghesan}, \binits{G.}},
\bauthor{\bsnm{Tamadon}, \binits{I.}},
\bauthor{\bsnm{Vander~Poorten}, \binits{E.}},
\bauthor{\bsnm{Menciassi}, \binits{A.}}:
\batitle{3d printing of small-scale soft robots with programmable
  magnetization}.
\bjtitle{Advanced Functional Materials}
\bvolume{33}(\bissue{15}),
\bfpage{2211918}
(\byear{2023})
\end{barticle}
\endbibitem

\bibitem[\protect\citeauthoryear{Kim et~al.}{2019}]{kim12019}
\begin{barticle}
\bauthor{\bsnm{Kim}, \binits{Y.}},
\bauthor{\bsnm{Parada}, \binits{G.A.}},
\bauthor{\bsnm{Liu}, \binits{S.}},
\bauthor{\bsnm{Zhao}, \binits{X.}}:
\batitle{Ferromagnetic soft continuum robots}.
\bjtitle{Science Robotics}
\bvolume{4}(\bissue{33}),
\bfpage{7329}
(\byear{2019})
\end{barticle}
\endbibitem

\bibitem[\protect\citeauthoryear{Murray et~al.}{1994}]{murray1994}
\begin{bbook}
\bauthor{\bsnm{Murray}, \binits{R.M.}},
\bauthor{\bsnm{Li}, \binits{Z.}},
\bauthor{\bsnm{Sastry}, \binits{S.S.}}:
\bbtitle{A Mathematical Introduction to Robotic Manipulation}.
\bpublisher{CRC press}, \blocation{???}
(\byear{1994})
\end{bbook}
\endbibitem

\bibitem[\protect\citeauthoryear{Samei and Chhabra}{2024}]{samei2024surfr}
\begin{barticle}
\bauthor{\bsnm{Samei}, \binits{H.}},
\bauthor{\bsnm{Chhabra}, \binits{R.}}:
\batitle{Surfr: A fast recursive simulator for soft manipulators with discrete
  joints on se (3)}.
\bjtitle{Mechanism and Machine Theory}
\bvolume{194},
\bfpage{105589}
(\byear{2024})
\end{barticle}
\endbibitem

\bibitem[\protect\citeauthoryear{Hussain et~al.}{2018}]{hussain2018modeling}
\begin{barticle}
\bauthor{\bsnm{Hussain}, \binits{I.}},
\bauthor{\bsnm{Renda}, \binits{F.}},
\bauthor{\bsnm{Iqbal}, \binits{Z.}},
\bauthor{\bsnm{Malvezzi}, \binits{M.}},
\bauthor{\bsnm{Salvietti}, \binits{G.}},
\bauthor{\bsnm{Seneviratne}, \binits{L.}},
\bauthor{\bsnm{Gan}, \binits{D.}},
\bauthor{\bsnm{Prattichizzo}, \binits{D.}}:
\batitle{Modeling and prototyping of an underactuated gripper exploiting joint
  compliance and modularity}.
\bjtitle{IEEE Robotics and automation letters}
\bvolume{3}(\bissue{4}),
\bfpage{2854}--\blpage{2861}
(\byear{2018})
\end{barticle}
\endbibitem

\bibitem[\protect\citeauthoryear{Grazioso et~al.}{2019}]{grazioso2019}
\begin{barticle}
\bauthor{\bsnm{Grazioso}, \binits{S.}},
\bauthor{\bsnm{Di~Gironimo}, \binits{G.}},
\bauthor{\bsnm{Siciliano}, \binits{B.}}:
\batitle{A geometrically exact model for soft continuum robots: The finite
  element deformation space formulation}.
\bjtitle{Soft robotics}
\bvolume{6}(\bissue{6}),
\bfpage{790}--\blpage{811}
(\byear{2019})
\end{barticle}
\endbibitem

\bibitem[\protect\citeauthoryear{Boyer et~al.}{2020}]{boyer2020dynamics}
\begin{barticle}
\bauthor{\bsnm{Boyer}, \binits{F.}},
\bauthor{\bsnm{Lebastard}, \binits{V.}},
\bauthor{\bsnm{Candelier}, \binits{F.}},
\bauthor{\bsnm{Renda}, \binits{F.}}:
\batitle{Dynamics of continuum and soft robots: A strain parameterization based
  approach}.
\bjtitle{IEEE Transactions on Robotics}
\bvolume{37}(\bissue{3}),
\bfpage{847}--\blpage{863}
(\byear{2020})
\end{barticle}
\endbibitem

\bibitem[\protect\citeauthoryear{Xun et~al.}{2024}]{xun2024cosse}
\begin{botherref}
\oauthor{\bsnm{Xun}, \binits{L.}},
\oauthor{\bsnm{Zheng}, \binits{G.}},
\oauthor{\bsnm{Kruszewski}, \binits{A.}}:
Cosserat-rod based dynamic modeling of soft slender robot interacting with
  environment.
IEEE Transactions on Robotics
(2024)
\end{botherref}
\endbibitem

\bibitem[\protect\citeauthoryear{Sonneville et~al.}{2014}]{sonneville2014}
\begin{barticle}
\bauthor{\bsnm{Sonneville}, \binits{V.}},
\bauthor{\bsnm{Cardona}, \binits{A.}},
\bauthor{\bsnm{Br{\"u}ls}, \binits{O.}}:
\batitle{Geometrically exact beam finite element formulated on the special
  euclidean group se (3)}.
\bjtitle{Computer Methods in Applied Mechanics and Engineering}
\bvolume{268},
\bfpage{451}--\blpage{474}
(\byear{2014})
\end{barticle}
\endbibitem

\bibitem[\protect\citeauthoryear{Renda et~al.}{2024}]{renda2024dynamics}
\begin{botherref}
\oauthor{\bsnm{Renda}, \binits{F.}},
\oauthor{\bsnm{Mathew}, \binits{A.}},
\oauthor{\bsnm{Talegon}, \binits{D.F.}}:
Dynamics and control of soft robots with implicit strain parametrization.
IEEE Robotics and Automation Letters
(2024)
\end{botherref}
\endbibitem

\bibitem[\protect\citeauthoryear{Cao et~al.}{2018}]{cao2018untet}
\begin{barticle}
\bauthor{\bsnm{Cao}, \binits{J.}},
\bauthor{\bsnm{Qin}, \binits{L.}},
\bauthor{\bsnm{Liu}, \binits{J.}},
\bauthor{\bsnm{Ren}, \binits{Q.}},
\bauthor{\bsnm{Foo}, \binits{C.C.}},
\bauthor{\bsnm{Wang}, \binits{H.}},
\bauthor{\bsnm{Lee}, \binits{H.P.}},
\bauthor{\bsnm{Zhu}, \binits{J.}}:
\batitle{Untethered soft robot capable of stable locomotion using soft
  electrostatic actuators}.
\bjtitle{Extreme Mechanics Letters}
\bvolume{21},
\bfpage{9}--\blpage{16}
(\byear{2018})
\end{barticle}
\endbibitem

\bibitem[\protect\citeauthoryear{Goldberg et~al.}{2019}]{goldberg2019planar}
\begin{barticle}
\bauthor{\bsnm{Goldberg}, \binits{N.N.}},
\bauthor{\bsnm{Huang}, \binits{X.}},
\bauthor{\bsnm{Majidi}, \binits{C.}},
\bauthor{\bsnm{Novelia}, \binits{A.}},
\bauthor{\bsnm{O'Reilly}, \binits{O.M.}},
\bauthor{\bsnm{Paley}, \binits{D.A.}},
\bauthor{\bsnm{Scott}, \binits{W.L.}}:
\batitle{On planar discrete elastic rod models for the locomotion of soft
  robots}.
\bjtitle{Soft robotics}
\bvolume{6}(\bissue{5}),
\bfpage{595}--\blpage{610}
(\byear{2019})
\end{barticle}
\endbibitem

\bibitem[\protect\citeauthoryear{Hwang et~al.}{2022}]{hwang2022shape}
\begin{barticle}
\bauthor{\bsnm{Hwang}, \binits{D.}},
\bauthor{\bsnm{Barron~III}, \binits{E.J.}},
\bauthor{\bsnm{Haque}, \binits{A.T.}},
\bauthor{\bsnm{Bartlett}, \binits{M.D.}}:
\batitle{Shape morphing mechanical metamaterials through reversible
  plasticity}.
\bjtitle{Science robotics}
\bvolume{7}(\bissue{63}),
\bfpage{2171}
(\byear{2022})
\end{barticle}
\endbibitem

\bibitem[\protect\citeauthoryear{Habibi et~al.}{2020}]{habibi2020lumped}
\begin{barticle}
\bauthor{\bsnm{Habibi}, \binits{H.}},
\bauthor{\bsnm{Yang}, \binits{C.}},
\bauthor{\bsnm{Godage}, \binits{I.S.}},
\bauthor{\bsnm{Kang}, \binits{R.}},
\bauthor{\bsnm{Walker}, \binits{I.D.}},
\bauthor{\bsnm{Branson~III}, \binits{D.T.}}:
\batitle{A lumped-mass model for large deformation continuum surfaces actuated
  by continuum robotic arms}.
\bjtitle{Journal of mechanisms and robotics}
\bvolume{12}(\bissue{1}),
\bfpage{011014}
(\byear{2020})
\end{barticle}
\endbibitem

\bibitem[\protect\citeauthoryear{Arenberg}{2008}]{arenberg2008design}
\begin{bchapter}
\bauthor{\bsnm{Arenberg}, \binits{J.W.}}:
\bctitle{Design status of the james webb space telescope}.
In: \bbtitle{Space Telescopes and Instrumentation 2008: Optical, Infrared, and
  Millimeter},
vol. \bseriesno{7010},
pp. \bfpage{199}--\blpage{208}
(\byear{2008}).
\bcomment{SPIE}
\end{bchapter}
\endbibitem

\bibitem[\protect\citeauthoryear{Chapkin et~al.}{2020}]{chapkin2020design}
\begin{barticle}
\bauthor{\bsnm{Chapkin}, \binits{W.A.}},
\bauthor{\bsnm{Walgren}, \binits{P.}},
\bauthor{\bsnm{Frank}, \binits{G.J.}},
\bauthor{\bsnm{Seifert}, \binits{D.R.}},
\bauthor{\bsnm{Hartl}, \binits{D.J.}},
\bauthor{\bsnm{Baur}, \binits{J.W.}}:
\batitle{Design and optimization of high-strain, cylindrical composite skins
  for morphing fuselages}.
\bjtitle{Materials \& Design}
\bvolume{187},
\bfpage{108395}
(\byear{2020})
\end{barticle}
\endbibitem

\bibitem[\protect\citeauthoryear{Peng et~al.}{2008}]{peng2008measurements}
\begin{barticle}
\bauthor{\bsnm{Peng}, \binits{B.}},
\bauthor{\bsnm{Locascio}, \binits{M.}},
\bauthor{\bsnm{Zapol}, \binits{P.}},
\bauthor{\bsnm{Li}, \binits{S.}},
\bauthor{\bsnm{Mielke}, \binits{S.L.}},
\bauthor{\bsnm{Schatz}, \binits{G.C.}},
\bauthor{\bsnm{Espinosa}, \binits{H.D.}}:
\batitle{Measurements of near-ultimate strength for multiwalled carbon
  nanotubes and irradiation-induced crosslinking improvements}.
\bjtitle{Nature nanotechnology}
\bvolume{3}(\bissue{10}),
\bfpage{626}--\blpage{631}
(\byear{2008})
\end{barticle}
\endbibitem

\bibitem[\protect\citeauthoryear{Goriely and
  Ben~Amar}{2005}]{goriely2005differential}
\begin{barticle}
\bauthor{\bsnm{Goriely}, \binits{A.}},
\bauthor{\bsnm{Ben~Amar}, \binits{M.}}:
\batitle{Differential growth and instability in elastic shells}.
\bjtitle{Physical review letters}
\bvolume{94}(\bissue{19}),
\bfpage{198103}
(\byear{2005})
\end{barticle}
\endbibitem

\bibitem[\protect\citeauthoryear{Baek et~al.}{2018}]{baek2018}
\begin{barticle}
\bauthor{\bsnm{Baek}, \binits{C.}},
\bauthor{\bsnm{Sageman-Furnas}, \binits{A.O.}},
\bauthor{\bsnm{Jawed}, \binits{M.K.}},
\bauthor{\bsnm{Reis}, \binits{P.M.}}:
\batitle{Form finding in elastic gridshells}.
\bjtitle{Proceedings of the National Academy of Sciences}
\bvolume{115}(\bissue{1}),
\bfpage{75}--\blpage{80}
(\byear{2018})
\end{barticle}
\endbibitem

\bibitem[\protect\citeauthoryear{Klein et~al.}{2007}]{klein2007}
\begin{barticle}
\bauthor{\bsnm{Klein}, \binits{Y.}},
\bauthor{\bsnm{Efrati}, \binits{E.}},
\bauthor{\bsnm{Sharon}, \binits{E.}}:
\batitle{Shaping of elastic sheets by prescription of non-euclidean metrics}.
\bjtitle{Science}
\bvolume{315}(\bissue{5815}),
\bfpage{1116}--\blpage{1120}
(\byear{2007})
\end{barticle}
\endbibitem

\bibitem[\protect\citeauthoryear{Lang}{1996}]{lang1996}
\begin{bchapter}
\bauthor{\bsnm{Lang}, \binits{R.J.}}:
\bctitle{A computational algorithm for origami design}.
In: \bbtitle{Proceedings of the Twelfth Annual Symposium on Computational
  Geometry},
pp. \bfpage{98}--\blpage{105}
(\byear{1996})
\end{bchapter}
\endbibitem

\bibitem[\protect\citeauthoryear{Kim et~al.}{2012}]{kim2012d}
\begin{barticle}
\bauthor{\bsnm{Kim}, \binits{J.}},
\bauthor{\bsnm{Hanna}, \binits{J.A.}},
\bauthor{\bsnm{Byun}, \binits{M.}},
\bauthor{\bsnm{Santangelo}, \binits{C.D.}},
\bauthor{\bsnm{Hayward}, \binits{R.C.}}:
\batitle{Designing responsive buckled surfaces by halftone gel lithography}.
\bjtitle{science}
\bvolume{335}(\bissue{6073}),
\bfpage{1201}--\blpage{1205}
(\byear{2012})
\end{barticle}
\endbibitem

\bibitem[\protect\citeauthoryear{Bergou et~al.}{2008}]{bergou2008discrete}
\begin{bchapter}
\bauthor{\bsnm{Bergou}, \binits{M.}},
\bauthor{\bsnm{Wardetzky}, \binits{M.}},
\bauthor{\bsnm{Robinson}, \binits{S.}},
\bauthor{\bsnm{Audoly}, \binits{B.}},
\bauthor{\bsnm{Grinspun}, \binits{E.}}:
\bctitle{Discrete elastic rods}.
In: \bbtitle{ACM SIGGRAPH 2008 Papers},
pp. \bfpage{1}--\blpage{12}.
\bpublisher{Association for Computing Machinery}, \blocation{???}
(\byear{2008})
\end{bchapter}
\endbibitem

\bibitem[\protect\citeauthoryear{Xu et~al.}{2019}]{xu2019millimeter}
\begin{barticle}
\bauthor{\bsnm{Xu}, \binits{T.}},
\bauthor{\bsnm{Zhang}, \binits{J.}},
\bauthor{\bsnm{Salehizadeh}, \binits{M.}},
\bauthor{\bsnm{Onaizah}, \binits{O.}},
\bauthor{\bsnm{Diller}, \binits{E.}}:
\batitle{Millimeter-scale flexible robots with programmable three-dimensional
  magnetization and motions}.
\bjtitle{Science Robotics}
\bvolume{4}(\bissue{29}),
\bfpage{4494}
(\byear{2019})
\end{barticle}
\endbibitem

\bibitem[\protect\citeauthoryear{Mao et~al.}{2022}]{mao2022ultrafast}
\begin{barticle}
\bauthor{\bsnm{Mao}, \binits{G.}},
\bauthor{\bsnm{Schiller}, \binits{D.}},
\bauthor{\bsnm{Danninger}, \binits{D.}},
\bauthor{\bsnm{Hailegnaw}, \binits{B.}},
\bauthor{\bsnm{Hartmann}, \binits{F.}},
\bauthor{\bsnm{Stockinger}, \binits{T.}},
\bauthor{\bsnm{Drack}, \binits{M.}},
\bauthor{\bsnm{Arnold}, \binits{N.}},
\bauthor{\bsnm{Kaltenbrunner}, \binits{M.}}:
\batitle{Ultrafast small-scale soft electromagnetic robots}.
\bjtitle{Nature communications}
\bvolume{13}(\bissue{1}),
\bfpage{4456}
(\byear{2022})
\end{barticle}
\endbibitem

\bibitem[\protect\citeauthoryear{Kirchhoff}{1850}]{kirchhoff1850}
\begin{barticle}
\bauthor{\bsnm{Kirchhoff}, \binits{G.}}:
\batitle{{\"U}ber das gleichgewicht und die bewegung einer elastischen
  scheibe.}
\bjtitle{Journal f{\"u}r die reine und angewandte Mathematik (Crelles Journal)}
\bvolume{1850}(\bissue{40}),
\bfpage{51}--\blpage{88}
(\byear{1850})
\end{barticle}
\endbibitem

\bibitem[\protect\citeauthoryear{Love}{1892}]{love1892}
\begin{bbook}
\bauthor{\bsnm{Love}, \binits{A.E.H.}}:
\bbtitle{A Treatise on the Mathematical Theory of Elasticity}.
\bpublisher{Cambridge university press}, \blocation{???}
(\byear{1892})
\end{bbook}
\endbibitem

\bibitem[\protect\citeauthoryear{Budiansky}{1968}]{budiansky1968}
\begin{botherref}
\oauthor{\bsnm{Budiansky}, \binits{B.}}:
Notes on nonlinear shell theory.
Journal of Applied Mechanics
(1968)
\end{botherref}
\endbibitem

\bibitem[\protect\citeauthoryear{Eringen}{1976}]{eringen1976}
\begin{botherref}
\oauthor{\bsnm{Eringen}, \binits{A.C.}}:
Continuum physics. volume 4-polar and nonlocal field theories.
New York
(1976)
\end{botherref}
\endbibitem

\bibitem[\protect\citeauthoryear{Eringen}{1999}]{eringen1999}
\begin{bchapter}
\bauthor{\bsnm{Eringen}, \binits{A.C.}}:
\bctitle{Theory of micropolar elasticity}.
In: \bbtitle{Microcontinuum Field Theories},
pp. \bfpage{101}--\blpage{248}.
\bpublisher{Springer}, \blocation{???}
(\byear{1999})
\end{bchapter}
\endbibitem

\bibitem[\protect\citeauthoryear{Naghdi}{1973}]{naghdi1973}
\begin{bchapter}
\bauthor{\bsnm{Naghdi}, \binits{P.M.}}:
\bctitle{The theory of shells and plates}.
In: \bbtitle{Linear Theories of Elasticity and Thermoelasticity},
pp. \bfpage{425}--\blpage{640}.
\bpublisher{Springer}, \blocation{???}
(\byear{1973})
\end{bchapter}
\endbibitem

\bibitem[\protect\citeauthoryear{Simo and Fox}{1989}]{Simo1989}
\begin{barticle}
\bauthor{\bsnm{Simo}, \binits{J.C.}},
\bauthor{\bsnm{Fox}, \binits{D.D.}}:
\batitle{On a stress resultant geometrically exact shell model. part i:
  Formulation and optimal parametrization}.
\bjtitle{Computer Methods in Applied Mechanics and Engineering}
\bvolume{72}(\bissue{3}),
\bfpage{267}--\blpage{304}
(\byear{1989})
\end{barticle}
\endbibitem

\bibitem[\protect\citeauthoryear{Ibrahimbegovi{\'c} and
  Frey}{1994}]{ibrahimbegovic1994}
\begin{barticle}
\bauthor{\bsnm{Ibrahimbegovi{\'c}}, \binits{A.}},
\bauthor{\bsnm{Frey}, \binits{F.}}:
\batitle{Stress resultant geometrically nonlinear shell theory with drilling
  rotations—part ii. computational aspects}.
\bjtitle{Computer Methods in applied mechanics and Engineering}
\bvolume{118}(\bissue{3-4}),
\bfpage{285}--\blpage{308}
(\byear{1994})
\end{barticle}
\endbibitem

\bibitem[\protect\citeauthoryear{Sansour and Bednarczyk}{1995}]{sansour1995co}
\begin{barticle}
\bauthor{\bsnm{Sansour}, \binits{C.}},
\bauthor{\bsnm{Bednarczyk}, \binits{H.}}:
\batitle{The cosserat surface as a shell model, theory and finite-element
  formulation}.
\bjtitle{Computer Methods in Applied Mechanics and Engineering}
\bvolume{120}(\bissue{1-2}),
\bfpage{1}--\blpage{32}
(\byear{1995})
\end{barticle}
\endbibitem

\bibitem[\protect\citeauthoryear{Altenbach et~al.}{2010}]{altenbach2010}
\begin{barticle}
\bauthor{\bsnm{Altenbach}, \binits{J.}},
\bauthor{\bsnm{Altenbach}, \binits{H.}},
\bauthor{\bsnm{Eremeyev}, \binits{V.A.}}:
\batitle{On generalized cosserat-type theories of plates and shells: a short
  review and bibliography}.
\bjtitle{Archive of Applied Mechanics}
\bvolume{80}(\bissue{1}),
\bfpage{73}--\blpage{92}
(\byear{2010})
\end{barticle}
\endbibitem

\bibitem[\protect\citeauthoryear{Altenbach et~al.}{2011}]{altenbach2011mi}
\begin{botherref}
\oauthor{\bsnm{Altenbach}, \binits{H.}},
\oauthor{\bsnm{Eremeyev}, \binits{V.A.}},
\oauthor{\bsnm{Lebedev}, \binits{L.P.}}:
Micropolar shells as two-dimensional generalized continua models.
Mechanics of generalized continua,
23--55
(2011)
\end{botherref}
\endbibitem

\bibitem[\protect\citeauthoryear{Arciniega and
  Reddy}{2007}]{arciniega2007tensor}
\begin{barticle}
\bauthor{\bsnm{Arciniega}, \binits{R.}},
\bauthor{\bsnm{Reddy}, \binits{J.}}:
\batitle{Tensor-based finite element formulation for geometrically nonlinear
  analysis of shell structures}.
\bjtitle{Computer methods in applied mechanics and engineering}
\bvolume{196}(\bissue{4-6}),
\bfpage{1048}--\blpage{1073}
(\byear{2007})
\end{barticle}
\endbibitem

\bibitem[\protect\citeauthoryear{Pezzulla et~al.}{2022}]{pezzulla2022}
\begin{barticle}
\bauthor{\bsnm{Pezzulla}, \binits{M.}},
\bauthor{\bsnm{Yan}, \binits{D.}},
\bauthor{\bsnm{Reis}, \binits{P.M.}}:
\batitle{A geometrically exact model for thin magneto-elastic shells}.
\bjtitle{Journal of the Mechanics and Physics of Solids}
\bvolume{166},
\bfpage{104916}
(\byear{2022})
\end{barticle}
\endbibitem

\bibitem[\protect\citeauthoryear{Dadgar-Rad and
  Hossain}{2023}]{dadgar2023micropolar}
\begin{barticle}
\bauthor{\bsnm{Dadgar-Rad}, \binits{F.}},
\bauthor{\bsnm{Hossain}, \binits{M.}}:
\batitle{A micropolar shell model for hard-magnetic soft materials}.
\bjtitle{International Journal for Numerical Methods in Engineering}
\bvolume{124}(\bissue{8}),
\bfpage{1798}--\blpage{1817}
(\byear{2023})
\end{barticle}
\endbibitem

\bibitem[\protect\citeauthoryear{Nebel et~al.}{2023}]{nebel2023geometrically}
\begin{barticle}
\bauthor{\bsnm{Nebel}, \binits{L.J.}},
\bauthor{\bsnm{Sander}, \binits{O.}},
\bauthor{\bsnm{B{\^\i}rsan}, \binits{M.}},
\bauthor{\bsnm{Neff}, \binits{P.}}:
\batitle{A geometrically nonlinear cosserat shell model for orientable and
  non-orientable surfaces: Discretization with geometric finite elements}.
\bjtitle{Computer Methods in Applied Mechanics and Engineering}
\bvolume{416},
\bfpage{116309}
(\byear{2023})
\end{barticle}
\endbibitem

\bibitem[\protect\citeauthoryear{Sauer et~al.}{2024}]{sauer2024simple}
\begin{barticle}
\bauthor{\bsnm{Sauer}, \binits{R.A.}},
\bauthor{\bsnm{Zou}, \binits{Z.}},
\bauthor{\bsnm{Hughes}, \binits{T.J.}}:
\batitle{A simple and efficient hybrid discretization approach to alleviate
  membrane locking in isogeometric thin shells}.
\bjtitle{Computer Methods in Applied Mechanics and Engineering}
\bvolume{424},
\bfpage{116869}
(\byear{2024})
\end{barticle}
\endbibitem

\bibitem[\protect\citeauthoryear{Marsden and Hughes}{2012}]{marsden}
\begin{bbook}
\bauthor{\bsnm{Marsden}, \binits{J.E.}},
\bauthor{\bsnm{Hughes}, \binits{T.J.}}:
\bbtitle{Mathematical Foundations of Elasticity}.
\bpublisher{Courier Corporation}, \blocation{???}
(\byear{2012})
\end{bbook}
\endbibitem

\bibitem[\protect\citeauthoryear{Boyer and Renda}{2017}]{boyer2017poincare}
\begin{barticle}
\bauthor{\bsnm{Boyer}, \binits{F.}},
\bauthor{\bsnm{Renda}, \binits{F.}}:
\batitle{Poincar{\'e}’s equations for cosserat media: application to shells}.
\bjtitle{Journal of Nonlinear Science}
\bvolume{27}(\bissue{1}),
\bfpage{1}--\blpage{44}
(\byear{2017})
\end{barticle}
\endbibitem

\bibitem[\protect\citeauthoryear{Poincar{\'e} et~al.}{1901}]{poincare1901}
\begin{barticle}
\bauthor{\bsnm{Poincar{\'e}}, \binits{H.}}, \betal:
\batitle{Sur une forme nouvelle des {\'e}quations de la m{\'e}canique}.
\bjtitle{CR Acad. Sci}
\bvolume{132},
\bfpage{369}--\blpage{371}
(\byear{1901})
\end{barticle}
\endbibitem

\bibitem[\protect\citeauthoryear{Brown}{1966}]{brown1966}
\begin{bbook}
\bauthor{\bsnm{Brown}, \binits{W.F.}}:
\bbtitle{Magnetoelastic Interactions}
vol. \bseriesno{9}.
\bpublisher{Springer}, \blocation{???}
(\byear{1966})
\end{bbook}
\endbibitem

\bibitem[\protect\citeauthoryear{Simo and Vu-Quoc}{1986}]{simo1986rod}
\begin{barticle}
\bauthor{\bsnm{Simo}, \binits{J.C.}},
\bauthor{\bsnm{Vu-Quoc}, \binits{L.}}:
\batitle{A three-dimensional finite-strain rod model. part ii: Computational
  aspects}.
\bjtitle{Computer methods in applied mechanics and engineering}
\bvolume{58}(\bissue{1}),
\bfpage{79}--\blpage{116}
(\byear{1986})
\end{barticle}
\endbibitem

\bibitem[\protect\citeauthoryear{Simo}{1992}]{simo1992sym}
\begin{barticle}
\bauthor{\bsnm{Simo}, \binits{J.}}:
\batitle{The (symmetric) hessian for geometrically nonlinear models in solid
  mechanics: intrinsic definition and geometric interpretation}.
\bjtitle{Computer Methods in Applied Mechanics and Engineering}
\bvolume{96}(\bissue{2}),
\bfpage{189}--\blpage{200}
(\byear{1992})
\end{barticle}
\endbibitem

\bibitem[\protect\citeauthoryear{Beheshti}{2016}]{beheshti2016}
\begin{barticle}
\bauthor{\bsnm{Beheshti}, \binits{A.}}:
\batitle{Large deformation analysis of strain-gradient elastic beams}.
\bjtitle{Computers \& Structures}
\bvolume{177},
\bfpage{162}--\blpage{175}
(\byear{2016})
\end{barticle}
\endbibitem

\bibitem[\protect\citeauthoryear{Dadgar-Rad and Sahraee}{2021}]{dadgar2021}
\begin{barticle}
\bauthor{\bsnm{Dadgar-Rad}, \binits{F.}},
\bauthor{\bsnm{Sahraee}, \binits{S.}}:
\batitle{Large deformation analysis of fully incompressible hyperelastic curved
  beams}.
\bjtitle{Applied Mathematical Modelling}
\bvolume{93},
\bfpage{89}--\blpage{100}
(\byear{2021})
\end{barticle}
\endbibitem

\bibitem[\protect\citeauthoryear{Argyris and Symeonidis}{1981}]{argyris1981}
\begin{barticle}
\bauthor{\bsnm{Argyris}, \binits{J.H.}},
\bauthor{\bsnm{Symeonidis}, \binits{S.}}:
\batitle{Nonlinear finite element analysis of elastic systems under
  nonconservative loading-natural formulation. part i. quasistatic problems}.
\bjtitle{Computer Methods in Applied Mechanics and Engineering}
\bvolume{26}(\bissue{1}),
\bfpage{75}--\blpage{123}
(\byear{1981})
\end{barticle}
\endbibitem

\bibitem[\protect\citeauthoryear{Zhao et~al.}{2019}]{zhao2019}
\begin{barticle}
\bauthor{\bsnm{Zhao}, \binits{R.}},
\bauthor{\bsnm{Kim}, \binits{Y.}},
\bauthor{\bsnm{Chester}, \binits{S.A.}},
\bauthor{\bsnm{Sharma}, \binits{P.}},
\bauthor{\bsnm{Zhao}, \binits{X.}}:
\batitle{Mechanics of hard-magnetic soft materials}.
\bjtitle{Journal of the Mechanics and Physics of Solids}
\bvolume{124},
\bfpage{244}--\blpage{263}
(\byear{2019})
\end{barticle}
\endbibitem

\bibitem[\protect\citeauthoryear{Dadgar-Rad and Hossain}{2022}]{dadgar2022fi}
\begin{barticle}
\bauthor{\bsnm{Dadgar-Rad}, \binits{F.}},
\bauthor{\bsnm{Hossain}, \binits{M.}}:
\batitle{Finite deformation analysis of hard-magnetic soft materials based on
  micropolar continuum theory}.
\bjtitle{International Journal of Solids and Structures}
\bvolume{251},
\bfpage{111747}
(\byear{2022})
\end{barticle}
\endbibitem

\bibitem[\protect\citeauthoryear{Diller et~al.}{2014}]{diller2014}
\begin{botherref}
\oauthor{\bsnm{Diller}, \binits{E.}},
\oauthor{\bsnm{Zhuang}, \binits{J.}},
\oauthor{\bsnm{Zhan~Lum}, \binits{G.}},
\oauthor{\bsnm{Edwards}, \binits{M.R.}},
\oauthor{\bsnm{Sitti}, \binits{M.}}:
Continuously distributed magnetization profile for millimeter-scale elastomeric
  undulatory swimming.
Applied Physics Letters
\textbf{104}(17)
(2014)
\end{botherref}
\endbibitem

\end{thebibliography}

\end{document}